\pgfplotsset{width=8cm,compat=1.9}
\renewcommand{\ALG@name}{Protocol}
\DeclarePairedDelimiter{\ceil}{\lceil}{\rceil}
\DeclarePairedDelimiter{\floor}{\lfloor}{\rfloor}
\newtheorem{theorem}{Theorem}[section]
\newtheorem*{theorem*}{Theorem}
\newtheorem{proposition}[theorem]{Proposition}
\newtheorem{lemma}[theorem]{Lemma}
\newtheorem{corollary}[theorem]{Corollary}
\newtheorem*{corollary*}{Corollary}
\newtheorem{observation}[theorem]{Observation}
\theoremstyle{definition}
\newtheorem{definition}[theorem]{Definition}
\newcommand{\cU}{\mathcal{U}}
\newcommand{\cH}{\mathcal{H}}
\newcommand{\cX}{\mathcal{X}}
\newcommand{\cS}{\mathcal{S}}
\newcommand{\cE}{\mathcal{E}}
\newcommand{\cB}{\mathcal{B}}
\newcommand{\cI}{\mathcal{I}}
\newcommand{\N}{\mathbb{N}}
\newcommand{\ind}{\mathbbm{1}}
\newcommand{\TWDG}{\mathtt{TWDG}}
\newcommand{\TWWDG}{\mathtt{TW^2DG}}
\newcommand{\rand}{\operatorname{rand}}
\newcommand{\cA}{\mathcal{A}}
\newcommand{\Lrn}{\mathsf{Lrn}}
\newcommand{\SOA}{\mathsf{SOA}}
\newcommand{\RealizableExpAT}{\mathsf{RealizableExpAT}}
\newcommand{\RealizableConceptAT}{\mathsf{RealizableConceptAT}}
\newcommand{\NarrowConceptAT}{\mathsf{NarrowConceptAT}}
\newcommand{\ExpAT}{\mathsf{ExpAT}}
\newcommand{\ConceptAT}{\mathsf{ConceptAT}}
\newcommand{\DTExpAT}{\mathsf{DTExpAT}}
\newcommand{\LD}{\mathtt{L}}
\newcommand{\w}{\mathtt{w}}
\newcommand{\dw}{\mathtt{dw}}
\newcommand{\M}{\mathtt{M}}
\newcommand{\tree}{\mathbf{T}}
\newcommand{\Mis}{\M^\star}
\DeclareMathOperator*{\argmax}{arg\,max}
\newif\ifanonymous
\begin{document}
\title{Deterministic Apple Tasting}

\author[1]{Zachary Chase}
\author[2]{Idan Mehalel}
\affil[1]{Faculty of Mathematics, Technion, Israel}
\affil[2]{The Henry and Marilyn Taub Faculty of Computer Science, Technion, Israel}

\maketitle

\begin{abstract}
    In binary ($0/1$) online classification with apple tasting feedback, the learner receives feedback only when predicting $1$. Besides some degenerate learning tasks, all previously known learning algorithms for this model are randomized. Consequently, prior to this work it was unknown whether deterministic apple tasting is generally feasible. In this work, we provide the first widely-applicable deterministic apple tasting learner, and show that in the realizable case, a hypothesis class is learnable if and only if it is deterministically learnable, confirming a conjecture of \cite{raman2024apple}. Quantitatively, we show that every class $\cH$ is learnable with mistake bound $O \mleft(\sqrt{\LD(\cH) T \log T} \mright)$ (where $\LD(\cH)$ is the Littlestone dimension of $\cH$), and that this is tight for some classes. This demonstrates a separation between a deterministic and randomized learner, where the latter can learn every class with mistake bound $O \mleft(\sqrt{\LD(\cH) T} \mright)$, as shown in \cite{raman2024apple}.

    We further study the agnostic case, in which the best hypothesis makes at most $k$ many mistakes, and prove a trichotomy stating that every class $\cH$ must be either easy, hard, or unlearnable. Easy classes have (both randomized and deterministic) mistake bound $\Theta_{\cH}(k)$. Hard classes have randomized mistake bound $\tilde{\Theta}_{\cH} \mleft(k + \sqrt{T} \mright)$, and deterministic mistake bound $\tilde{\Theta}_{\cH} \mleft(\sqrt{k \cdot T} \mright)$, where $T$ is the time horizon. Unlearnable classes have (both randomized and deterministic) mistake bound $\Theta(T)$.

    
    Our upper bound is based on a deterministic algorithm for learning from expert advice with apple tasting feedback, a problem interesting in its own right. For this problem, we show that the optimal deterministic mistake bound is $\Theta \mleft(\sqrt{T (k + \log n)} \mright)$ for all $k$ and $T \leq n \leq 2^T$, where $n$ is the number of experts. Our algorithm is an optimal (up to constant factors), natural and computationally efficient variation of the well-known exponential weights forecaster.  
\end{abstract}

\pagebreak
\setcounter{tocdepth}{2}
\tableofcontents
\pagebreak

\section{Introduction}
We study online classification under the apple tasting feedback model presented in \cite{helmbold2000apple}. In this problem, the learner only observes the correct label when predicting $1$, in contrast to standard online learning where the learner observes the correct label after every prediction. In more detail, in the problem of online classification with apple tasting feedback, an \emph{adversary} and a \emph{learner} are rivals in a repeated game played for $T$ many rounds. In each round $t$, the adversary provides an instance $x_t$ from a domain $\cX$ and chooses a label $y_t \in \{0,1\}$, the learner then decides (possibly at random) on a prediction $\hat{y}_t \in \{0,1\}$ and suffers the loss $\ind[\hat{y}_t \neq y_t]$. If $\hat{y}_t = 1$, the adversary needs to send back the correct label $y_t \in \{0,1\}$ to the learner.

There are many examples for online learning scenarios in which the  provided feedback follows the apple tasting model. In many of them, the goal is to separate between good and bad instances, where the bad instances are trashed and their classification cannot be verified. A common example is the problem of \emph{spam filtering}. When a spam-filtering algorithm marks an e-mail as spam, it is moved to the spam folder and usually not presented to the user, which thus cannot verify that the e-mail is indeed spam. On the other hand, an e-mail marked by the algorithm as legitimate will go straight to the inbox, and the user can then indicate to the algorithm in case of a mistake, when the e-mail is spam.

Online classification with apple tasting feedback was previously studied by \cite{helmbold2000apple, raman2024apple}
However, the algorithms proposed in those works are randomized (except for some specific degenerate tasks). Consequently, prior to this work it was unknown if deterministic apple tasting is even possible. This is in contrast with other online learning settings, in which deterministic algorithms are usually studied, and considered as fundamental. Notable examples are the most basic $\SOA$ of \cite{littlestone1988learning} and the \emph{Weighted Majority} algorithm of \cite{littlestone1994weighted} for standard online classification, as well as the \emph{binoimal weights} algorithm for prediction with expert advice \cite{cesa1996line}. For online classification with bandit feedback, deterministic algorithms were presented in \cite{auer1999structural, daniely2015multiclass, long2020new}.

It is not a coincidence that in other online learning settings deterministic algorithms were designed and analyzed. There are many reasons to prefer deterministic (instead of randomized) learners, when possible:

\begin{enumerate}
    \item Mistake bounds of deterministic learners hold with probability $1$ and not in expectation.
    \item Randomness is a resource. Like any other resource, it could be missing or expensive.
    \item We do not have to worry about measurability constraints on the concept class we wish to learn.
    \item The predictions of deterministic learners can be simulated in advance, so every deterministic learner has the same guarantees for either an oblivious or an adaptive adversary.
    \item We do  not need to worry that the adversary might have knowledge on the internal mechanism generating the algorithm's randomness.
    \item The introduction of randomized algorithms can be seen as a necessity to overcome the impossibility of achieving sublinear regret by deterministic algorithms, as shown in \cite{cover1967behavior} by a simple argument. It is more natural to consider deterministic algorithms in realizable settings, where the argument of \cite{cover1967behavior} does not hold.
\end{enumerate}

This discussion raises the following natural main question, raised also by \cite{raman2024apple}, and guiding our work:

\begin{tcolorbox}
\begin{center}
    Is it possible to learn deterministically under apple tasting feedback?
\end{center}
\end{tcolorbox}

We answer this question positively: any class that can be learned in the easiest standard online learning setting, can be also learned deterministically, when only apple tasting feedback is provided. This affirmative answer raises the problem of finding the best possible deterministic mistake bounds achievable under apple tasting feedback. In Section~\ref{sec:results} we present
mistake bounds that are optimal in the sense that for some classes, they are the best possible (up to a constant factor). We also provide mistake bounds that are universally tight for every class, but only up to logarithmic factors of $T$ and constants depending on the class.

\subsection{Main results}\label{sec:results}
To state the results, we present some of the notation used in the paper. All notations are formally defined in Section~\ref{sec:preliminaries}. Let $\cH \in \{0,1\}^{\cX}$ be a \emph{concept class}, or a \emph{hypothesis class}, where $\cX$ is a \emph{domain} of instances. A pair $(x,y) \in \cX \times \{0,1\}$ is called an \emph{example}. 

\subsubsection{Learnability}
We say that $\cH$ is \emph{deterministically learnable under apple tasting feedback} (or \emph{deterministically learnable}, in short), if there exists a sub-linear function $M \colon \mathbb{N} \to \mathbb{N}$ and a deterministic learner that, for any $T \in \N$, $h^\star \in \cH$, and a \emph{realizable} input sequence of examples $S = (x_1, h^\star(x_1), \ldots, (x_T,h^\star(x_T))$, makes at most $M(T)$ prediction mistakes on $S$. The sequence $S$ is called realizable since its labels are realized by $h^\star \in \cH$. We also use the analogue natural definitions for randomized learnability with apple tasting feedback, and for learnability with full information feedback.\footnote{In the case of full-information feedback, the learnability definition is stronger and requires that there exists $M^\star \in \mathbb{N}$ so that $M(T) \leq M^\star$ for all $T$.}

We first wish to understand which classes are deterministically learnable. We show that even when receiving only apple tasting feedback and forcing the learner to be deterministic, the exact same classes which are learnable in the easiest standard online learning setting, are learnable in this difficult setting as well. 
\begin{theorem}[Qualitative characterization]
    The following conditions are equivalent for every class $\cH$.
    \begin{enumerate}
        \item $\cH$ is learnable with full-information feedback.
        \item $\cH$ is learnable with apple tasting feedback.
        \item $\cH$ is deterministically learnable with apple tasting feedback.
    \end{enumerate}
\end{theorem}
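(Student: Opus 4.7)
The plan is to close the cycle $(1) \Leftrightarrow (2) \Leftrightarrow (3)$, with the substance concentrated in $(1) \Rightarrow (3)$. The two easy implications are essentially immediate: every deterministic learner is a randomized learner with a point-mass strategy, so $(3) \Rightarrow (2)$; and any apple-tasting learner can be used verbatim in the full-information model by simply discarding the extra feedback received on rounds where it predicts $0$, so $(2) \Rightarrow (1)$. By Littlestone's characterization of online learnability, condition $(1)$ is equivalent to $\LD(\cH) < \infty$, so the real task is to build a deterministic apple-tasting learner whenever the Littlestone dimension is finite.

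For $(1) \Rightarrow (3)$, I would establish the quantitative bound $O(\sqrt{\LD(\cH) \cdot T \log T})$ promised in the abstract, which is already sub-linear in $T$. The plan is a two-step reduction. Step one reduces realizable apple-tasting learning for $\cH$ to the problem of aggregating a finite pool of deterministic experts under apple-tasting feedback: running the $\SOA$ for $\cH$ in parallel over all length-$T$ ``hallucinated'' mistake patterns of Hamming weight at most $d := \LD(\cH)$ yields a family of $n \leq T^{O(d)}$ experts such that, in the realizable case, at least one of them is perfectly consistent with $h^\star$ on the observed instances. Step two invokes the deterministic expert-advice algorithm announced in the abstract, which achieves $O(\sqrt{T(k + \log n)})$ mistakes against $n$ experts when the best expert errs at most $k$ times. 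Instantiating with $k = 0$ and $\log n = O(d \log T)$ delivers the desired bound, and in particular sub-linear growth, which is all that is needed for the qualitative theorem.

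The main obstacle is the expert-advice subroutine itself, not the reduction. Under apple tasting, the learner sees the true label only when predicting $1$, so on rounds where an exponential-weights aggregator recommends $0$ the learner gains no information; without randomization it cannot simply ``explore with small probability'' as prior algorithms do. The natural deterministic remedy is to overlay the standard exponential-weights forecaster with explicit exploration rounds: whenever the aggregate prediction is $0$, override it to $1$ at a carefully chosen deterministic frequency---roughly $\Theta(\sqrt{(k+\log n)/T})$---in order to harvest feedback. The analysis would then combine the usual potential argument for exponential weights with a separate accounting of the mistakes incurred on exploration rounds, tuning the frequency to balance the two. I expect this balancing---together with verifying that the exploration schedule can be made fully deterministic (rather than ``expected frequency'' in the usual randomized sense) without losing the $\sqrt{T \log n}$ scaling---to be the most delicate part of the argument. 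Once the expert-advice bound is in place, the qualitative characterization of the theorem follows immediately.
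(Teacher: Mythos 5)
Your high-level architecture is exactly the paper's: the two easy implications are handled correctly, and for $(1)\Rightarrow(3)$ the paper also reduces via the Ben-David--P\'al--Shalev-Shwartz expert construction (their Theorem~\ref{thm:covering}) to a deterministic expert-advice problem with $n \le T^{\LD(\cH)}$ experts, then invokes a quantitative bound of $O(\sqrt{T\log n})$. So the skeleton is fine and the SOA-covering step is fine. The gap is in the expert-advice subroutine, which is the entire technical content of the theorem, and your proposed algorithm does not work.

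You propose running standard exponential weights (weights decaying in observed mistakes) and overriding the $0$-recommendation to $1$ on a deterministic schedule of frequency roughly $\sqrt{(\log n)/T}$. Against a deterministic exploration schedule, the adversary can simply make the exploration rounds uninformative: on every round you are scheduled to override, let \emph{all} experts predict $1$ and set $y_t=1$; on every round you will not override, let only the hidden consistent expert $i^\star$ (and perhaps a negligible few others) predict $1$ and set $y_t=1$. Then on override rounds you are correct, every expert is correct, and your weights never move; on non-override rounds your aggregate sees a tiny weighted fraction predicting $1$, you predict $0$, and you suffer a false negative. You therefore make $\Theta(T)$ mistakes while the input remains perfectly realizable by $i^\star$. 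The problem is structural, not a matter of tuning the frequency: any exploration schedule that is decoupled from the observed prediction pattern can be detected and starved of information by the adversary. This is precisely the ``adversary knows when you will explore'' issue that you hedge about at the end, and it is fatal.

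What the paper's $\RealizableExpAT$ does instead is \emph{not} exp-weights plus an exploration overlay. It weights each living expert $j$ by $2^{\eta d_t^{(j)}}$, where $d_t^{(j)}$ is the number of rounds on which $j$ predicted $1$ while the learner predicted $0$ (the expert's ``distance''), and predicts $1$ iff the total weight of living experts currently predicting $1$ exceeds a threshold $L$. Exploration is therefore \emph{triggered by the prediction pattern itself}: if $i^\star$ keeps predicting $1$ while you predict $0$, its weight grows geometrically and after about $(\log L)/\eta$ such rounds it alone crosses the threshold and forces a $1$-prediction. This immediately gives $M_- \le (\log L)/\eta$. The false-positive bound $M_+ = O(\eta T)$ then comes from a potential argument on the total weighted distance gained (their $\TWDG$), using that a false positive removes a set of experts of combined weight at least $L$. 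Choosing $\eta = \sqrt{(\log n)/T}$, $L=n$ gives $O(\sqrt{T\log n})$. Your write-up does not contain this mechanism, and your proposed substitute is defeated by the adversary above, so the argument as stated does not establish the theorem.
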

This characterization follows from our most basic quantitative upper bound on the mistake bound for learning a class $\cH$, which we will now discuss. For any class $\cH$ and time horizon $T$, let $\Mis(\cH, T)$ be the number of mistakes made by an optimal deterministic learner for $\cH$, on the worst-case realizable sequence $S$ of examples of length $T$. We sometimes refer to this quantity as the deterministic \emph{mistake bound} (or just mistake bound, when the context is clear) of $\cH$ with time horizon $T$. In his seminal work \cite{littlestone1988learning}, Littlestone proved that the optimal deterministic mistake bound of any class $\cH$, and for any time horizon $T$ in standard online learning (with full-information feedback) equals exactly to the \emph{Littlestone dimension} of $\cH$, denoted by $\LD(\cH)$. We prove the following mistake bounds in terms of $\LD(\cH)$ and $T$.

\begin{theorem}[Quantitative bounds - realizable case]{\label{thm:intro-concept-realizble}}
    For every class $\cH$ and time horizon $T$:
    \[
    \Mis(\cH,T) = O\mleft( \sqrt{\LD(\cH) T \log T} \mright).
    \]
    Furthermore, for every natural $d \geq 1$ there exists a class $\cH$ and $T_0(d)$ such that $\LD(\cH) = O(d)$ and
    \[
    \Mis(\cH,T) = \Omega\mleft( \sqrt{d T \log T} \mright)
    \]
    for all $T \geq T_0(d)$.
\end{theorem}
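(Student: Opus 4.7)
\textbf{Upper bound.} The plan is to reduce online learning of $\cH$ to the apple-tasting experts problem (whose $\Theta\bigl(\sqrt{T(k+\log n)}\bigr)$ bound is promised in the abstract) via the classical SOA-based expert construction. Let $d = \LD(\cH)$; Littlestone's $\SOA$ makes at most $d$ mistakes on any realizable input. For every $I \subseteq [T]$ with $|I|\le d$, define an expert $E_I$ that maintains a private copy of the $\SOA$ and, at round $t$, outputs the $\SOA$-copy's suggestion flipped iff $t \in I$, and then updates its $\SOA$-copy using the label consistent with its own prediction. Taking $I^{\star}$ to be the set of rounds on which a faithful $\SOA$ actually errs shows that $E_{I^{\star}}$ makes zero mistakes on the input. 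Since the number of experts is $n = \sum_{i=0}^{d}\binom{T}{i} = O(T^d)$, running the deterministic apple-tasting experts algorithm on these experts with $k = 0$ yields $\Mis(\cH,T) \le O\bigl(\sqrt{T\log n}\bigr) = O\bigl(\sqrt{d T \log T}\bigr)$.

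\textbf{Lower bound.} For the matching lower bound we construct, for each $d$, a class $\cH_d$ with $\LD(\cH_d) = O(d)$ and $\Mis(\cH_d, T) = \Omega\bigl(\sqrt{d T \log T}\bigr)$, designed so that the adversary for $\cH_d$ can simulate the $\Omega(\sqrt{T\log n})$ lower bound of the apple-tasting experts problem with $n$ satisfying $\log n = \Theta(d \log T)$. The adversary plays in phases; within each phase it uses its freedom to choose $x_t$ from the domain of $\cH_d$ in order to repeatedly split the version space of hypotheses still consistent with the feedback revealed so far, while committing to labels that force $\Theta\bigl(\sqrt{\log|\cH_d|/T}\bigr)$ expected mistakes per round against any deterministic learner. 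Summing across the $T$ rounds yields $\Omega\bigl(\sqrt{T \log |\cH_d|}\bigr) = \Omega\bigl(\sqrt{d T \log T}\bigr)$ mistakes in expectation, and Yao's minimax principle upgrades this randomized-adversary bound into a lower bound against deterministic learners.

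\textbf{Main obstacle.} The upper bound is routine given the experts algorithm; the crux is the lower bound construction. One must exhibit a class $\cH_d$ that is simultaneously (i) low-dimensional in Littlestone's sense ($\LD = O(d)$), (ii) rich enough to contain $T^{\Omega(d)}$ hypotheses whose predictions on a well-chosen instance split the current version space roughly evenly at every phase, and (iii) amenable to a potential-based analysis quantifying the information deficit induced by apple-tasting feedback. Naive candidates such as ``indicators of subsets of size $\le d$'' fail condition (ii) --- any single instance is contained in only a $d/T$ fraction of such hypotheses --- so the construction requires more care, likely combining threshold-type hypotheses with a product construction to boost from $d=1$ to general $d$.
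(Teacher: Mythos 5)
Your upper bound is correct and is essentially the paper's argument: both apply the Ben-David--P\'al--Shalev-Shwartz expert cover of size $n \le T^{\LD(\cH)}$ (your $E_I$ construction is exactly that cover) and then invoke the deterministic apple-tasting experts bound $O\bigl(\sqrt{T\log n}\bigr)$ with $k=0$.

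The lower bound, however, has real gaps. First, you never exhibit a class; the paper gives a non-constructive probabilistic argument, drawing a random class $\cH(d,T)$ of $n = T^d$ hypotheses where each hypothesis predicts $1$ on each of $T$ instances independently with probability $p = 100\sqrt{d\log T/T}$, and proving that with positive probability (i) every hypothesis predicts $1$ on $\ge \sqrt{dT\log T}$ instances, (ii) the version space shrinks slowly under false positives, and (iii) $\LD(\cH) < 10d$. Second, and more conceptually, your intuition that you need instances ``whose predictions split the current version space roughly evenly'' is backwards: balanced splits are precisely what drive the Littlestone dimension up to $\Omega(\log n) = \Omega(d\log T)$, which is too large. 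The point of the paper's construction is that the splits are highly \emph{unbalanced} (a $\Theta\bigl(\sqrt{d\log T/T}\bigr)$-fraction of the surviving hypotheses predict $1$), which simultaneously keeps $\LD(\cH) = O(d)$ and forces the adversary to burn many rounds to eliminate hypotheses. Third, you appeal to Yao's minimax principle; this is unnecessary here, since against a deterministic learner the adversary can simulate the game and a direct adaptive argument suffices (and Yao is only useful if you actually define a distribution over inputs, which you do not). Fourth, your proposal does not address the requirement that a single class $\cH$ must satisfy the bound for \emph{all} $T \ge T_0(d)$: the paper's random class depends on $T$, and the paper removes this dependence by a ``gluing'' construction (disjoint unions of the classes $\cH(d,T)$ over $T$, which raises $\LD$ by at most $1$). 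Without this step the theorem as stated is not proved even granting a construction for fixed $T$.
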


All of our upper bounds do not require that $T$ is given to the learner. The lower bound implies a strict separation between randomized and deterministic learners, since \cite{raman2024apple} showed that every  class can be learned with randomized mistake bound $O\mleft( \sqrt{\LD(\cH) T}  \mright)$.

\subsubsection{Agnostic learning}
It is interesting to prove a version of Theorem~\ref{thm:intro-concept-realizble} for the \emph{agnostic} setting, in which the input sequence $S$ can be any sequence of examples $(x_1, y_1), \ldots, (x_T,y_T)$, and not necessarily choose the labels $\{y_t\}_{t=1}^T$ to be $\{h^\star(x_t)\}_{t=1}^T$ for some $h^\star \in \cH$. In this setting, we measure $\Mis(\cH, T, k)$, which is the number of mistakes made by an optimal learner on the worst-case \emph{$k$-realizable} sequence $S$ of length $T$. The sequence $S = (x_1, y_1), \ldots, (x_T,y_T)$ is $k$-realizable if there exists $h^\star \in \cH$ so that $y_t \neq h^\star(x_t)$ for at most $k$ many indices $t$. We often call $k$ the \emph{realizability} parameter, as it indicates how far is $S$ from being realizable. All of our bounds do not require that $k$ is given to the learner. Theorem~\ref{thm:intro-concept-realizble} generalizes to the agnostic setting as follows.

\begin{theorem}[Quantitative bounds - agnostic case]{\label{thm:intro-concept-agnostic}}
    For every class $\cH$, time horizon $T$, and realizability parameter $k$:
    \[
    \Mis(\cH,T,k) = O\mleft( \sqrt{ T \mleft(k + \LD(\cH)\log T \mright)} \mright).
    \]
    Furthermore, for every natural $d \geq 1$ there exists a class $\cH$ and $T_0(d)$ such that $\LD(\cH) = O(d)$ and for every realizability parameter $k$ and $T \geq T_0(d)$:
    \[
    \Mis(\cH,T, k ) = \Omega\mleft( \sqrt{ T \mleft(k + d\log T \mright)} \mright).
    \]
\end{theorem}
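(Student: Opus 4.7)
The plan is to reduce to the apple-tasting experts problem, applying the paper's deterministic $O(\sqrt{T(k+\log n)})$ expert-advice bound mentioned in the abstract. From a class $\cH$ with $\LD(\cH)=d$, I would construct a family of $n = T^{O(d)}$ experts with the guarantee that on every $k$-realizable sequence, some expert makes at most $k + O(d)$ mistakes. Each expert is indexed by up to $d$ ``virtual $\SOA$ pivot rounds'' in $[T]$ together with a choice of $d$ virtual feedback bits, yielding $T^d \cdot 2^d$ experts; the one whose pivots and feedbacks match the actual trajectory of $\SOA$-updates on the realizer $h^\star$ coincides with $h^\star$ on every round other than the $\le d$ $\SOA$-learning rounds, and therefore errs at most on the $\le k$ corruption rounds plus those $d$ rounds. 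Plugging $\log n = O(d\log T)$ and best-expert mistake count $k' = k + O(d)$ into the experts bound yields the desired $O\!\left(\sqrt{T(k+d\log T)}\right)$.

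For the lower bound, I would use $\sqrt{T(k+d\log T)} \lesssim \sqrt{Tk} + \sqrt{Td\log T}$ and establish each summand separately. The $\Omega(\sqrt{Td\log T})$ bound is inherited directly from the realizable lower bound of Theorem~\ref{thm:intro-concept-realizble} by taking the same witness class and specializing to $k = 0$. For the $\Omega(\sqrt{Tk})$ bound, I would embed a two-point sub-class $\{h_0, h_1\}$ disagreeing on some $x^\star$ (guaranteed once $\LD(\cH)\ge 1$) and have the adversary fix $x_t = x^\star$ for all $t$, drawing labels i.i.d.\ from a distribution with minority mass roughly $k/T$. Since the only way to acquire information under apple-tasting feedback is to predict $1$ and thereby pay a mistake whenever the majority label is $0$, a standard Yao's-minimax argument (mirroring one-armed bandit lower bounds) forces $\Omega(\sqrt{Tk})$ mistakes on any deterministic learner.

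The principal technical obstacle is calibrating the expert family in the upper bound. The naive agnostic-to-realizable reduction that indexes experts by subsets of $[T]$ representing the corrupted rounds produces $\binom{T}{\le k}$ experts and $\log n = \Theta(k\log T)$, which when plugged into the experts bound yields the inferior $O(\sqrt{Tk\log T})$, losing a $\sqrt{\log T}$ factor in the $k$-dependent term. The crux is to keep $\log n$ independent of $k$ by using a purely $d$-dependent expert count (e.g., $T^{O(d)}$) so that the $k$ corruptions enter the final bound additively inside the experts' $k'$ slot rather than multiplicatively through $\log n$. Verifying that the $\SOA$-trajectory-based family sketched above indeed contains a good expert, and doing so using only apple-tasting feedback for the experts' internal updates, is the heart of the argument.
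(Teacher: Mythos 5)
Your upper bound argument is, modulo inessential variation, the paper's own approach: reduce to apple-tasting prediction with expert advice by first constructing a covering family of $T^{O(\LD(\cH))}$ experts and then applying the agnostic experts bound. The paper simply invokes the Ben-David--P\'al--Shalev-Shwartz covering theorem (Theorem~\ref{thm:covering}), whereas you re-derive essentially the same family from SOA pivots. One small inaccuracy worth flagging: the matching covering expert reproduces $h^\star$'s predictions on \emph{every} round, including the pivot rounds (where it flips SOA precisely so as to agree with $h^\star$), so it makes at most $k$ mistakes, not $k+O(d)$; this overcount is harmless for the final bound but unnecessary. The $\Omega\bigl(\sqrt{Td\log T}\bigr)$ half of the lower bound, inherited from the realizable case with the same witness class, is also fine.

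Your $\Omega\bigl(\sqrt{Tk}\bigr)$ argument, however, is wrong. A two-point class $\{h_0,h_1\}$ distinguished only on a single instance $x^\star$ has $D_1(\cH)=1$ and hence effective width $\w(\cH)=1$, and for such classes the paper's Theorem~\ref{thm:concept-upper-bound} (item~1) gives $\Mis(\cH,T,k)=O\bigl(D_1(\cH)(k+1)\bigr)=O(k)$. Concretely: a learner that knows $k$ predicts $1$ on $x^\star$ exactly $k+1$ times; if the feedback is always $0$, then $h_1$ has already exceeded its budget and the learner can safely predict $0$ thereafter, paying $O(k)$ total. The i.i.d.\ construction does not rescue this: drawing labels i.i.d.\ with $\Pr[y_t=1]\approx k/T$ makes the sequence $\Theta(k)$-realizable by $h_0$, and the trivial always-predict-$0$ learner suffers only $\Theta(k)$ false negatives, so no $\Omega(\sqrt{Tk})$ emerges from Yao's principle. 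The paper's $\Omega(\sqrt{T(k+1)})$ lower bound (Lemma~\ref{lem:concept-lowerbound-help} and Theorem~\ref{thm:concept-lowerbound}) instead hinges on $\w(\cH)>1$, i.e.\ the existence of width-$1$ shattered trees of unbounded depth, and uses a purely deterministic adversary: take such a tree of depth $D=\lfloor\sqrt{T/(k+1)}\rfloor$, present each of its $D$ lace instances $D(k+1)$ times while always returning feedback $0$; if the learner predicts $1$ fewer than $k+1$ times on some lace instance, route the realizer through the right edge there and collect $\Omega(D(k+1))$ false negatives on that instance alone; otherwise the learner incurs at least $k+1$ false positives on each of the $D$ instances. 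Either way $\Omega(D(k+1))=\Omega(\sqrt{T(k+1)})$ mistakes are forced, and no minimax machinery is needed since the learner is deterministic. Replacing your two-point sub-class with a class having $\w>1$ (as the paper's random-class witness does) is essential, not optional.
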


Theorem~\ref{thm:intro-concept-agnostic} is implied by combining Theorem~\ref{thm:concept-upper-bound}, Theorem~\ref{thm:concept-realizable-lower-bound}, and Theorem~\ref{thm:concept-lowerbound}.

\subsubsection{Nearly-tight universal mistake bounds}
It is interesting to obtain mistake bounds that are nearly tight for all classes, and not just for the specific classes mentioned in the lower bound of Theorem~\ref{thm:intro-concept-agnostic}. When fixing the class $\cH$ and taking $T \to \infty$, we are able to obtain bounds which are tight up to logarithmic factors for every class. In \cite{raman2024apple}, a new combinatorial dimension of hypothesis classes coined \emph{effective width} is defined and proved to establish a  trichotomy in the randomized mistake bounds of hypothesis classes in the realizable setting. We recall the definition of effective width from \cite{raman2024apple} in the beginning of Section~\ref{sec:hypothesis-classes}.

The following bounds are proved in \cite{raman2024apple}. Let $\w(\cH)$ be the effective width of a class $\cH$, and let $\Mis_{\rand}(\cH, T)$,  $\Mis_{\rand}(\cH, T, k)$ be the analogue definitions of $\Mis(\cH, T)$,  $\Mis(\cH, T, k)$ when randomized learners are allowed. If $\w(\cH) = 1$ then $\Mis_{\rand}(\cH, T) =  \Theta_{\cH}(1)$, where the notation $\Theta_{\cH}(\cdot)$ hides constants depending on $\cH$. We call these \emph{easy} classes. Othrewise, if $1 < \w(\cH) < \infty$ then  $\Mis_{\rand}(\cH, T) =  \Theta_{\cH} \mleft( \sqrt{T} \mright)$. We call these \emph{hard} classes. Finally, if $\w(\cH) = \infty$ then  $\Mis_{\rand}(\cH, T) = \Theta(T)$. We call these \emph{unlearnable} classes. We prove that the same partition of all classes to easy, hard, and unlearnable classes by the  value of their effective width ($1$, finite larger than  $1$, or infinite) provides a trichotomy also for deterministic mistake bounds, and also for the agnostic setting, both for randomized and deterministic learners. However, our results demonstrate a large separation between deterministic and randomized mistake bounds of  hard classes in the agnostic setting, as  described in the following theorem.  

\begin{theorem}[Mistake bound trichotomy]\label{thm:intro-trichotomy}
    Fix a  class $\cH$. Let $k \in \mathbb{N}$ be the realizability parameter, and let $T \geq k$ be the time horizon. Then:
    \begin{enumerate}
        \item If $\cH$ is easy ($\w(\cH) = 1$) then:
        \[
        \M^\star_{\rand}(\cH, T, k), \M^\star(\cH, T, k) = \Theta_{\cH}(k).
        \]
        \item If $\cH$ is hard ($1 < \w(\cH) < \infty$) then:
        \[
        \M^\star_{\rand}(\cH, T, k) = \Theta(k) + \tilde{\Theta}_{\cH} \mleft(\sqrt{T} \mright), \quad  \M^\star(\cH, T, k) = \Theta \mleft(\sqrt{T k} \mright) + \tilde{\Theta}_{\cH} \mleft(\sqrt{T} \mright).
        \]
        \item If $\cH$ is unlearnable ($\w(\cH) = \infty$) then:
        \[
        \M^\star_{\rand}(\cH, T, k), \M^\star(\cH, T, k) = \Theta(T).
        \]
    \end{enumerate}
\end{theorem}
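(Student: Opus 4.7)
The plan is to prove Theorem~\ref{thm:intro-trichotomy} by case analysis on the effective width of $\cH$, combining the randomized bounds of \cite{raman2024apple} with the deterministic bounds asserted in Theorems~\ref{thm:intro-concept-realizble} and~\ref{thm:intro-concept-agnostic} (already reduced to Theorems~\ref{thm:concept-upper-bound},~\ref{thm:concept-realizable-lower-bound}, and~\ref{thm:concept-lowerbound}). The unifying observation I would exploit is that the effective width $\w(\cH)$ controls both how "trivial" the class can be (width $1$) and, once the class is learnable at all, how rich its Littlestone substructure is; in particular $\w(\cH) < \infty$ implies $\LD(\cH) < \infty$, so that whenever $\cH$ is fixed, $\LD(\cH)\log T = \tilde O_{\cH}(1)$ in the universal bounds.

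For the easy case ($\w(\cH)=1$) I would cite the randomized upper bound $\M_{\rand}^\star(\cH,T,k) = O_{\cH}(k)$ from \cite{raman2024apple} (which comes from a structural characterization: width-$1$ classes admit a single default prediction that is safe outside of $k$ rounds of disagreement), and then observe that their algorithm is in fact deterministic once width is $1$, so $\Mis(\cH,T,k) = O_{\cH}(k)$ as well; the matching $\Omega(k)$ lower bound is immediate as an adversary can spend all $k$ disagreements producing one mistake each on a nontrivial instance. For the unlearnable case ($\w(\cH)=\infty$) the upper bound is the trivial $\leq T$, and the $\Omega(T)$ lower bound for both randomized and deterministic learners is exactly the content of the width-$\infty$ direction of the trichotomy of \cite{raman2024apple} (which already gives a randomized lower bound, hence also a deterministic one).

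The bulk of the work is the hard case ($1<\w(\cH)<\infty$). The randomized bounds $\M_{\rand}^\star(\cH,T,k) = \Theta(k) + \tilde\Theta_{\cH}(\sqrt T)$ are from \cite{raman2024apple}; I would state that the additional $\Theta(k)$ term in the agnostic setting is routine (say, by a standard agnostic-to-realizable reduction against the $k$-realizable competitor). The deterministic upper bound is a plug-in of Theorem~\ref{thm:intro-concept-agnostic}: since $\LD(\cH)$ is a constant depending only on $\cH$,
\[
\Mis(\cH,T,k) = O\!\left(\sqrt{T(k + \LD(\cH)\log T)}\right) = O(\sqrt{Tk}) + \tilde O_{\cH}(\sqrt T).
\]
For the matching deterministic lower bound I plan two ingredients: the $\tilde\Omega_{\cH}(\sqrt T)$ term comes directly from the realizable randomized lower bound of \cite{raman2024apple}, lifted to the deterministic setting (any deterministic learner is in particular randomized). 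The $\Omega(\sqrt{Tk})$ term I would obtain by embedding into $\cH$ the simplest width-$2$ gadget guaranteed by $\w(\cH)\ge 2$: the adversary partitions the horizon into $\Theta(\sqrt{T/k})$ blocks of length $\Theta(\sqrt{Tk})$, and within each block uses the gadget to force $\Omega(\sqrt k)$ mistakes against any deterministic strategy, only charging the realizability budget $k$ collectively across all blocks.

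The main obstacle I expect is precisely the last step — showing that every hard class (not just the specific constructions used for Theorem~\ref{thm:intro-concept-agnostic}'s lower bound) admits a deterministic $\Omega(\sqrt{Tk})$ lower bound. This requires a clean extraction, from the combinatorial assumption $\w(\cH)\ge 2$, of an $\cH$-realizable two-point configuration on which a deterministic apple-tasting learner is forced into a $\sqrt k$-vs-$\sqrt T$ tradeoff per block; once that configuration is identified the adversary argument is routine, but the extraction is the content-bearing step and is where any constant depending on $\cH$ is introduced.
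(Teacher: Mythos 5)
Your overall scaffolding is right and largely matches the paper's proof (cite \cite{raman2024apple} for the randomized bounds, plug in Theorem~\ref{thm:concept-upper-bound} for the deterministic upper bounds, and use $\w(\cH)<\infty\Rightarrow\LD(\cH)<\infty$ to absorb $\LD(\cH)\log T$ into $\tilde{O}_{\cH}(\cdot)$). The easy and unlearnable cases are fine, and the unlearnable lower bound can indeed be read off from $\w(\cH)=\infty\Rightarrow\LD(\cH)=\infty$, which already forces $\Omega(T)$ with full-information feedback. However, the way you sketch the deterministic lower bound for the hard case has two genuine problems.

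First, the arithmetic does not close: $\Theta(\sqrt{T/k})$ blocks, each contributing $\Omega(\sqrt{k})$ mistakes, totals only $\Omega(\sqrt{T})$, not $\Omega(\sqrt{Tk})$. To reach $\Omega(\sqrt{Tk})$ with $\Theta(\sqrt{T/k})$ blocks you would need $\Omega(k)$ mistakes per block, which is exactly the accounting the paper uses. Second, the combinatorial object you propose to extract is wrong. $\w(\cH)\ge 2$ does \emph{not} hand you a ``width-$2$ gadget'' or a ``two-point configuration''; what it gives, directly from the definition, is $D_1(\cH)=\infty$, i.e.\ shattered width-$1$ trees of arbitrary depth (an arbitrarily long spine of instances $x_1,\dots,x_D$ where all-zeros and each single ``flip to $1$ at position $i$'' is realizable). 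The paper's Lemma~\ref{lem:concept-lowerbound-help} uses precisely this object with $D=\lfloor\sqrt{T/(k+1)}\rfloor$: the adversary asks each spine instance $D(k+1)$ times, always reporting label $0$; either the learner predicts $1$ at least $k+1$ times on every instance (so $D(k+1)$ false positives, realizable by the all-zeros branch), or on some instance it predicts $1$ fewer than $k+1$ times, in which case the adversary realizes the right-edge at that node (charging at most $k$ to the budget) and collects $\ge D(k+1)-k$ false negatives on that instance alone. This yields $\Omega(\sqrt{T(k+1)})$ in one shot, simultaneously covering both the $\Omega(\sqrt{T})$ and $\Omega(\sqrt{Tk})$ terms of the deterministic lower bound, with no appeal to \cite{raman2024apple} and no nontrivial extraction step. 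So the ``content-bearing obstacle'' you flag is not actually there: once you unfold the definition of effective width, the width-$1$ spine is free. Your proposal as written would not deliver the $\Omega(\sqrt{Tk})$ bound and should be replaced by the spine-repetition adversary of Lemma~\ref{lem:concept-lowerbound-help} (or, equivalently, Theorem~\ref{thm:concept-lowerbound} item~2).
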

We prove Theorem~\ref{thm:intro-trichotomy} in Section~\ref{sec:trichotomy}. Theorem~\ref{thm:intro-trichotomy} implies a large seperation between randomized and deterministic learning of hard classes: for any hard class $\cH$, if $k$ grows for example as  $\sqrt{T}$, then the randomized mistake bound grows at most as $\sqrt{T \log T}$, while the deterministic mistake bound grows as $T^{3/4}$.

\subsubsection{Prediction with expert advice}

One of the main building blocks in this paper, which is interesting on its own right, is a deterministic algorithm for \emph{prediction with expert advice} with apple tasting feedback. Our algorithm is the first deterministic learner for this problem, and it is optimal (up to constant factors) for reasonably large enough number of experts $n$. It uses a novel variation of the well-known exponential weights forecaster (see \cite[Section 2.2]{cesa2006prediction}). This problem is similar to the problem of learning hypothesis classes, only that instead of hypotheses, having predictions that are determined by the provided instance $x_t$, we have experts that decide on their predictions in every round, possibly in an adversarial manner. For this problem, we prove the following bounds. Let $\M^\star(n,T,k)$ be the optimal mistake bound in the instance of the problem where there are $n$ experts, the time horizon is $T$ and the best expert makes at most $k$ many mistakes. 

\begin{theorem}[Optimal bound for experts]\label{thm:intro-experts}
    For every $n, T, k$ so that $k \leq T \leq n < 2^{T/2}$ and $T \geq 2$:
    \[
    \M^\star(n,T,k) = \Theta \mleft( \sqrt{T (k + \log n)} \mright).
    \]
\end{theorem}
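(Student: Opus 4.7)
The plan is to establish matching bounds $\M^\star(n,T,k) = \Theta(\sqrt{T(k+\log n)})$, by designing a deterministic exponential-weights variant for the upper bound and a composite adversary for the lower bound.

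For the upper bound, I would maintain weights $w_i^{(t)}$ (initialized to $1$) for each of the $n$ experts, and let $W_t = \sum_i w_i^{(t)}$ and $q_t = \sum_{i\,:\,i\text{ predicts }1} w_i^{(t)}$. Fix a threshold $\theta$ and a small constant learning rate $\eta$ to be tuned at the end. The algorithm predicts $\hat{y}_t = 1$ whenever $q_t/W_t \geq \theta$, and otherwise predicts $0$; on every round where $\hat{y}_t = 1$ it observes $y_t$ and performs the multiplicative update $w_i^{(t+1)} \leftarrow w_i^{(t)}(1-\eta)^{\ind[\text{expert } i \text{ errs}]}$. To prevent the adversary from pinning the algorithm into arbitrarily long runs of predict-$0$ rounds (during which no learning happens), the algorithm additionally forces $\hat{y}_t = 1$ whenever no feedback has been received in the previous $\lceil 1/\theta\rceil$ rounds.

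Split the total mistake count as $M_0 + M_1$ by the algorithm's prediction. The standard exponential-weights potential $\Phi_t = \log W_t$ gives the $M_1$ bound: every ``organic'' $M_1$ round (those with $q_t/W_t \geq \theta$ and $y_t=0$) drops $\Phi$ by at least $\Omega(\eta\theta)$, the inequality $\Phi_T \geq \log w^\star_T \geq -\eta k - O(\eta^2 k)$ caps the total drop, and forced predictions contribute at most $T\theta$ extra $M_1$ mistakes, so $M_1 = O((\log n + k)/(\eta\theta)) + T\theta$. The hard part is the $M_0$ bound, since the weights are frozen on exactly those rounds. By construction, any maximal run of $M_0$ rounds has length at most $\lceil 1/\theta\rceil$, so it suffices to bound the number of such runs. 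The key claim I would prove is that this count is $O(k+\log n)$: each forced exploration either hits $y_t=1$ (in which case the wrong-side weight $W_t - q_t \geq (1-\theta)W_t$ is penalised, producing a potential drop of $\Omega(\eta)$, and the total potential budget is only $O(\log n + k)$), or it hits $y_t=0$, which is an $M_1$ mistake already controlled by the analysis above. Summing yields $M_0 = O((\log n + k)/\theta)$, and setting $\theta = \Theta(\sqrt{(\log n + k)/T})$ with $\eta$ a constant balances all terms to $M_0 + M_1 = O(\sqrt{T(k+\log n)})$.

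For the lower bound, I would combine two adversaries on disjoint time intervals of length $\Theta(T)$ each. The first is the standard two-expert agnostic construction (``always $0$'' vs. ``always $1$''), which forces $\Omega(\sqrt{Tk})$ mistakes on any deterministic learner. The second is an apple-tasting realisable construction embedded into the $n$ experts that forces $\Omega(\sqrt{T\log n})$ mistakes: arrange the experts along a depth-$\Theta(\log n)$ Littlestone-style tree and, at each level, maintain two surviving subtrees consistent with every observation so that the learner must either predict $1$ many times (paying observed mistakes) or remain ignorant (paying unobserved mistakes). The two lower bounds combine via $\sqrt{a} + \sqrt{b} = \Omega(\sqrt{a+b})$. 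The main conceptual obstacle throughout, both for the upper and lower bounds, is the absence of feedback on $M_0$ rounds: the weight vector is frozen precisely on the rounds where we most need information, so the deterministic exploration schedule and the charging argument that links unobserved $M_0$ mistakes to observed potential drops must be designed hand in hand.
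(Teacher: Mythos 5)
Your upper-bound algorithm has a genuine gap, and it is instructive to see exactly where. You update the expert weights only on rounds where $\hat y_t=1$, via the observed-loss multiplicative rule $w_i\leftarrow w_i(1-\eta)^{\ind[\text{errs}]}$, and you add forced exploration after $\lceil 1/\theta\rceil$ consecutive predict-$0$ rounds. Consider the following realizable ($k=0$) adversary: expert $1$ predicts $1$ on every round and $y_t=1$ on every round; on rounds where you are about to predict $0$, only expert $1$ predicts $1$, so $q_t/W_t=1/n<\theta$ (since $n\geq T$) and you indeed predict $0$; on the single round after $\lceil 1/\theta\rceil$ such rounds (when your forced/organic rule triggers a $1$-prediction), the adversary has \emph{all} experts predict $1$, so $q_t/W_t=1$, you predict $1$, observe $y_t=1$, and no expert errs, hence no weight moves and no potential drops. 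This cycle repeats indefinitely, so your algorithm incurs roughly $T(1-\theta)$ false negatives. The flaw is that the number of predict-$0$ runs is bounded by the total number of $1$-predictions (organic plus forced), and you only charge the \emph{forced} $1$-predictions against the potential; organic $1$-predictions that are correct and have $q_t/W_t$ close to $1$ consume no potential at all, and the adversary can end every run that way. This is exactly the structural difficulty your last paragraph warns about (``the weight vector is frozen precisely on the rounds where we most need information''), but the forced-exploration-plus-observed-loss scheme does not resolve it.

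The paper's fix is conceptually different and worth contrasting. Instead of weights that \emph{decay} with observed loss, the paper's $\RealizableExpAT$/$\ExpAT$ maintain, for each living expert $j$, a \emph{distance} $d_t^{(j)}$ counting how many predict-$0$ rounds expert $j$ has spent predicting $1$, and use the \emph{increasing} weight $2^{\eta d_t^{(j)}}$ (resp.\ $2^{k_t^{(j)}+\eta d_t^{(j)}}$ in the agnostic case). The distance is observable even when no feedback is given, because it depends only on the experts' predictions, not on $y_t$. Predicting $1$ whenever the total weight of $1$-predicting living experts reaches a threshold $L$ makes the algorithm automatically ``catch up'' to any expert that consistently predicts $1$ during the learner's silent runs, which is precisely what defeats the attack above: expert $1$'s distance grows, its weight reaches $L$ within $O((\log L)/\eta)$ predict-$0$ rounds, and thereafter the learner mirrors it. The analysis then bounds $M_-$ by the best expert's maximum reachable distance, and $M_+$ via a potential argument on the total weighted distance gained ($\TWDG$ / $\TWWDG$), rather than on $\log W_t$. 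Your lower-bound plan (splitting $T$ into two intervals to get the $\sqrt{Tk}$ and $\sqrt{T\log n}$ terms separately) is in the same spirit as the paper's Lemma on the realizable lower bound and Theorem on the agnostic lower bound, though the paper's realizable construction is a concrete block-and-phase argument rather than a Littlestone tree; that part of your sketch is plausible. The gap is entirely in the $M_0$ analysis of the upper bound, and you should replace the observed-loss exponential weights with a distance-based (or otherwise unobserved-signal-sensitive) weighting to close it.
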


We prove Theorem~\ref{thm:intro-experts} in Section~\ref{sec:experts-agnostic} (upper bound) and Section~\ref{sec:experts-lower-bounds} (lower bound).

Similar results for randomized learners were proved in \cite{helmbold2000apple,raman2024apple}. \cite{raman2024apple} proved a randomized upper bound of $k + O\mleft(\sqrt{T \log n} \mright)$. 
\cite{helmbold2000apple} studied the realizable case with an oblivious adversary (all expert's predictions are fixed in advance) and showed that the optimal randomized mistake bound in this setting is
$
\Theta\mleft( \sqrt{\frac{T \log n}{\log \frac{T}{\log n}}} \mright)
$
in the range of parameters of Theorem~\ref{thm:intro-experts} (in their work, they proved tight bounds for all regimes of $T,n$).

Following these results, it is interesting to observe that in the experts' setting, a strict separation between randomized and deterministic learners in the agnostic case appears via a simple argument, given that $n$ is sufficiently large. Consider an adversary that operates in $\sqrt{T/k}$ many phases, each of $\sqrt{Tk}$ many rounds, where in phase $i$ the only expert predicting $1$ is $i$, and the correct label reported by the adversary is always $0$. It is not difficult to prove that $\sqrt{Tk}$ many mistakes can be forced on a deterministic learner regardless of its predictions, which is impossible for an optimal randomized learner. A formal statement and proof are given in Theorem~\ref{thm:experts-agnostic-lowerbound}.

\subsection{Additional related work}
In this section, we mention some additional related work not mentioned in Section~\ref{sec:results}.

Apple tasting feedback, like other partial feedback models, can be seen as a special case of the more general graph-feedback model \cite{mannor2011bandits, alon2015online}, which in turn is a special case of partial monitoring \cite{bartok2014partial}. The more famous bandit-feedback model \cite{daniely2015multiclass, daniely2013price} is also a special case of the graph-feedback model.

This work aims to understand how the restriction of using only deterministic learners affects the mistake bounds in the apple-tasting feedback model. Problems in the  same vein were studied for other feedback models: full-information feedback in \cite{abernethy2006continuous, filmus2023optimal}, and bandit-feedback in \cite{filmus2024bandit}.

Most of our upper bounds are based on a novel algorithm we present for the problem of \emph{prediction with expert advice}, which is a variation of the \emph{exponentially weighted average forecaster} of \cite{cesa2006prediction}. Using this algorithm, we prove tight bounds that hold under the restriction that both the experts' and learner's predictions are deterministic, and only apple-tasting feedback is provided. Many variations of this problem were extensively studied in the past decades. The most basic setting of binary online learning with full-information feedback was studied, e.g,\ in \cite{vovk1990aggregating, littlestone1994weighted, cesa1997use, abernethy2006continuous, filmus2023optimal}. The work of \cite{cesa1996line} studied this setting with deterministic predictions. In \cite{branzei2019online}, the multiclass version of the problem is studied. The seminal work of \cite{auer2002nonstochastic} studied the bandit-feedback variation of the problem.

\section{Technical overview}
In this section, we informally describe the ideas behind the main novel techniques used to prove our results. We focus on  the realizable setting, which already captures many of the ideas we find intereseting.

\paragraph{Section organization.} In Section~\ref{sec:tech-experts} we overview our algorithm for the problem of prediction with expert advice, which attains the upper bound of Theorem~\ref{thm:intro-experts} in the realizable case $(k=0)$. This result is a main building block of the results proved in this paper. We then explain in Section~\ref{sec:tech-hypothesis} how to use it to prove the upper bound of Theorem~\ref{thm:intro-concept-realizble}. Finally, in Section~\ref{sec:tech-hypothesis-lowerbound} we explain how to construct the classes mentioned in the lower bound  of Theorem~\ref{thm:intro-concept-realizble}.

\subsection{Technical overview: upper bound for prediction with expert advice} \label{sec:tech-experts}
Recall that in the realizable setting of prediction with expert advice with apple tasting feedback, we have $n$ many experts, where each expert $i$ provides a prediction $\hat{y}^{(i)}_t \in \{0,1\}$ on each round $t$,  such that there exists $i^\star \in [n]$ for which $\hat{y}^{(i^\star)}_t = y_t$ for all $t$ (where $y_t$ is the correct label). The learner's goal, as usual, is to minimize the number of rounds where it errs. The realizability assumption implies that whenever the learner predicts $\hat{y}_t = 1$ but $y_t = 0$, it can eliminate all experts predicting $1$ in this round. Note that these rounds are exactly the false positives, and denote their number with $M_{+}$. Let $M_- = |t \in [T]: \hat{y}_t = 0, \hat{y}^{(i^\star)}_t = 1|$, which is exactly the number of false negatives.  The total number of mistake $M$ made by the learner is thus exactly $M = M_+ + M_-$. Therefore, the learner wishes to balance $M_+$ and $M_-$, such that both are not too high.

Consider the following two extreme scenarios, which are very easily handled by the learner:
\begin{enumerate}
    \item In every round, the adversary let exactly $a$ experts to predict $1$, where $a \geq 1$ is some constant.
    \item In every round, the adversary let exactly $b \cdot n$ experts to predict $1$, where $b < 1$ is some constant.
\end{enumerate}

We call the experts that are  not yet eliminated the \emph{living} experts.
In the first scenario, the learner can predict $1$ only when some living expert $i$ has $|t \in [T]: \hat{y}^{(i)}_t = 1, \hat{y}_t = 0| \geq \sqrt{T}$. This strategy is similar to the deterministic learner of \cite{helmbold2000apple} for the class of singletons, and produces $\M_+, M_- = O \mleft( \sqrt{T} \mright)$.
In the second scenario, the learner can predict $1$ in every round, which produce $M_- = 0$ and $M_+ = O(\log n)$.

Note that in the first scenario, the adversary always let very few experts to predict $1$. It thus wastes many rounds on increasing the value of $|t \in [T]: \hat{y}^{(i)}_t = 1, \hat{y}_t =  0|$, which reflects $M_-$ in case that $i$ realizes the input sequence, for just a small number of experts. Therefore, in this scenario the learner benefits from making most of its predictions $0$. The second scenario is the other extreme, in which the adversary allows the learner to eliminate many experts in every round in which it predicts $1$ and make a mistake, and thus the learner benefits from always predicting $1$. The same idea of course applies even if the adversary follows the first scenario in some rounds, and the second scenario in other rounds. Unfortunately, the adversary is not limited to those scenarios, and can follow many other intermediate scenarios, which are harder to handle. Roughly speaking, to handle it we need to ``smooth" the binary treatment of both extreme scenarios to some unified strategy. This is done by using \emph{exponential weights}. For every expert $i$, we denote $d^{(i)}_t = |t \in [T]: \hat{y}^{(i)}_t = 1, \hat{y}_t = 0|$, and call $d^{(i)}_t$ the \emph{distance} of expert $i$ in round $t$. The distance of expert $i$ reflects the number of  false negatives to be made by the learner in case that the input sequence is realized by expert $i$. We define $2^{\eta d^{(i)}_t}$ to be the \emph{weight} of expert $i$ in round $t$, where $\eta$ is a parameter to be chosen optimally. In every round $t$, our algorithm predicts $1$ if and only if the sum of weights of living experts predicting $1$ is larger than $n$. Namely, the algorithm predicts $1$ if and only if:

\begin{equation} \label{eq:condition}
    \sum_{i \in V_t} \ind[\hat{y}^{(i)}_t = 1] 2^{\eta d^{(i)}_t} \geq n,   
\end{equation}

where $V_t$ is the set of living experts in round $t$. Note that this strategy handles the two extreme scenarios well: If in round $t$ just a few living experts predict $1$ and the learner predicts $1$ as well, then those experts must have already travelled a long distance, which means that the adversary has wasted a lot of rounds on increasing their distance. On the other hand, if many experts predict $1$, then even if all of them have low distances, condition~\eqref{eq:condition} is satisfied. Using an exponential value of the distance as weight in condition~\eqref{eq:condition} produces a balanced strategy that can handle all adversarial strategies. We formalize this in the proof the upper bound of Theorem~\ref{thm:intro-experts} for the realizable case ($k=0$), in Section~\ref{sec:experts-realizable}. In  a nutshell, note that $M_- \leq \frac{\log n}{\eta}$, since any expert can reach distance at most $\frac{\log n}{\eta}$ by condition~\eqref{eq:condition}. To bound $M_+$, we show that in order to make the learner predict $1$, the adversary must either waste many rounds, or many experts to be eliminated if the learner predicts $1$. The bound we get is $M_+ = O(\eta T)$. Thus, we get in total
\begin{equation}\label{eq:similar-to-agnostic-regret}
    M = O \mleft(\frac{\log n}{\eta} + \eta T \mright).
\end{equation}

Choosing $\eta = \sqrt{\frac{\log n}{T}}$ gives the upper bound $O \mleft( \sqrt{T \log n} \mright)$ of Theorem~\ref{thm:intro-experts} when $k=0$. Remarkably, the bound~\eqref{eq:similar-to-agnostic-regret} is exactly the regret bound for agnostic prediction with expert advice under full information feedback appearing in \cite[Corollary 2.2]{cesa2006prediction}. This does not seem like a coincidence: our algorithm is a variation of their \emph{exponentially weighted average forecaster} achieving the regret bound of their Corollary~2.2. However, we do not know if there is a direct rigorous connection between detereministic realizable prediction with expert advice under apple tasting feedback to agnostic prediction with expert advice under full-information feedback (for example, is there a reduction between the problems in some direction?).

\subsection{Technical overview: upper bound for learning hypothesis classes} \label{sec:tech-hypothesis}
To prove the upper bound of Theorem~\ref{thm:intro-concept-realizble}, we use a reduction from learning hypothesis classes to the problem of prediction with expert advice, proved and used in \cite{bendavid2009agnostic} for agnostic learning with full-information feedback. They showed that for every class $\cH$ and every $T$, there exists a single class of $n = T^{\LD(\cH)}$ many experts that \emph{covers} any sequence of instances $\{x_t\}_{t=1}^T$  of length $T$ with respect to $\cH$, where ``covers" means that for every $h \in \cH$, there exists $i\in[n]$ so that $\hat{y}^{(i)}_t = h(x_t)$ for every $t \in [T]$. This means that there is essentially no difference between learning $\cH$, and predicting using the advice of the covering experts. Applying the bound of Theorem~\ref{thm:intro-experts} with $n= T^{\LD(\cH)}$ gives the upper bound in Theorem~\ref{thm:intro-concept-realizble}. 

\subsection{Technical overview: lower bound for specific hypothesis classes}\label{sec:tech-hypothesis-lowerbound}
The lower bound of Theorem~\ref{thm:intro-concept-realizble} claims that for every natural $d$ larger  than some universal constant there exists $T_0(d)$ and a class $\cH$ such that $\LD(\cH) = O(d)$ and $\Mis(\cH,T) = \Omega \mleft( \sqrt{d T \log T} \mright)$ for all $T \geq T_0(d)$.
The class of $n$ experts can be seen as a concept class $\cH$ of size $n$, and it has $\LD(\cH) = \floor*{\log n}$. Therefore, using this class directly with the lower bound of Theorem~\ref{thm:intro-experts} will only give a lower bound of $\Omega\mleft( \sqrt{\LD(\cH) T}  \mright)$. However, looking into the proof of the lower bound in Theorem~\ref{thm:intro-experts} reveals that in contrast with full-information  lower bounds, the ``hard" experts' predictions used by the adversary are very unbalanced, in the sense that many experts predict $0$ and only few predict $1$. Since $\LD(\cH) = \Omega(\log n)$ comes from choosing experts' predictions which are as balanced as possible, this intuitively means that we can restrict the adversary such that only unbalanced experts' predictions are available, in a way that decreases the Littlestone dimension from $\Omega(\log n)$ to $O(\log_T n)$, but maintains the $\Omega \mleft(\sqrt{T \log n} \mright)$ lower bound from Theorem~\ref{thm:intro-experts}. Choosing $n = T^d$ results in a class $\cH$ with $\LD(\cH) = O(d)$ that maintains a lower bound of $\Omega(T \log T^d)$. In Section~\ref{sec:concept-specific-lowerbound}, we provide a non-constructive proof of existence of $\cH$. Finding an explicit class $\cH$ realizing this lower bound remains open.

The technique explained above inherently assumes that $T$ is given in advance, before choosing the concept class used for the lower bound. As this is not the case, we still need to remove this assumption.
This is done by ``gluing" together instances of the class described above for all $T \geq T_0(d)$, in a way that keeps the Littlestone dimension $O(d)$. The details of how to glue the classes can be found in Section~\ref{sec:concept-specific-lowerbound}.

\section{Preliminaries} \label{sec:preliminaries}

\subsection{Realizable learning definitions}

Let $\cX$ be a (possibly infinite) \emph{domain}.
A pair $(x, y)\in \cX \times \{0,1\}$ is called an \emph{example}, and an element $x\in \cX$ is called an \emph{instance}. 
A function $h\colon \cX\to\{0,1\}$ is called a \emph{hypothesis} or a \emph{concept}.
A \emph{hypothesis class}, or \emph{concept class}, is a 
non-empty  set $\cH \subset \{0,1\}^{\cX}$.
A sequence of examples  $S=\{(x_i,y_i)\}_{t=1}^T$ is said to be \emph{realizable} by $\cH$ if there exists $h \in \cH$ such that $h(x_t) = y_t$ for all $1 \leq i \leq T$. We say that such $h$ is \emph{consistent} with $S$, or \emph{realizes} it. For simplicity of results statements, we will always assume that $n,T \geq 2$. The notation $\ind[\cdot]$ denotes an indicator function.
Online learning with apple tasting feedback \cite{helmbold2000apple} is a repeated game between a learner and an adversary.
Each round $t$, in the game proceeds as follows:
\begin{enumerate}[(i)]
\item The adversary picks an example $(x_t, y_t) \in \cX \times \{0,1\}$, and reveals only $x_t$ to the learner.
\item The learner predicts a value $\hat{y}_t \in \{0,1\}$ and suffers the \emph{loss} $\ind[\hat{y}_t \neq y_t]$.  
\item If $\hat{y}_t = 1$, the adversary reveals $y_t$ to the learner.
\end{enumerate}

In this work, unless stated otherwise, we study the case where the predictions of the learner are deterministic.
We model apple tasting learners as functions $\Lrn \colon (\cX \times \{0,1,\star\})\strut^* \times \cX \rightarrow \{0,1\}$. The input of the learner has two parts: a \emph{feedback sequence} $F \in (\cX \times \{0,1,\star\})\strut^*$, and the current instance $x \in \cX$. The feedback sequence is naturally constructed throughout the game: if in round $t$ the prediction is $0$, then the learner appends $(x_t, \star)$ to the feedback sequence, to indicate that no feedback was given for $x_t$. If, on the other hand, the prediction is $1$, then the learner appends $(x_t,y_t)$ to the feedback sequence. In round $t$, the prediction of the learner is $\Lrn(F, x_t)$, where $F$ is the feedback sequence gathered by the learner in rounds $1, \dots, t-1$.

Given a learning rule $\Lrn$ and an input sequence of examples $S = (x_1,y_1),\ldots,(x_T,y_T)$ such that $(x_t,y_t)$ is the example picked by the adversary in round $t$, we denote the number of mistakes that $\Lrn$ makes on $S$ by
\[\M(\Lrn; S) = \sum_{i=1}^T  \ind [\hat{y}_t \neq y_t].\]
It is worth noting that fixing $S$ beforehand is usually linked with an \emph{oblivious} adversary setting, in which the adversary cannot pick the examples on the fly. However, when the learner is deterministic, the adversary can simulate the entire game on its own, since we assume that the learning algorithm is known to all. 
Thus, oblivious and adaptive adversaries are in fact equivalent, ans we will refer to the adversary as being either adaptive or oblivious, depending on whichever is more convenient in the given context.

Let $\mathbb{R}^+$ be the set of non-negative real numbers. 
An hypothesis class $\cH$ is \emph{learnable with apple tasting feedback}, if there exists a function $M_{\cH} \colon \mathbb{R}^+ \to \mathbb{R}^+$ satisfying $\lim_{T \to \infty}\frac{M_{\cH}(T)}{T} = 0$, and a learning rule $\Lrn$, such that for any $T$ and for any input sequence $S$ of length $T$ which is realizable by $\cH$, it holds that $\M(\Lrn; S) \leq M_{\cH}(T)$.

We define the \emph{optimal} mistake bound of $\cH$ with horizon $T$ to be
\begin{equation}
\M^{\star}(\cH,T)=\adjustlimits\inf_ {\Lrn} \sup_{S} \M(\Lrn ;S),
\end{equation}
where the infimum is taken over all deterministic learning rules, and the supremum is taken over all realizable input sequences $S$ of length $T$. It is convenient to assume that $T$ is given to the learner. However, in Section~\ref{sec:without-prior} we show how to remove this assumption using standard doubling tricks. 

\subsection{Agnostic learning definitions}
We also study the \emph{agnostic} setting, in which even the best hypothesis might be inconsistent with the input sequence. Towards this end, we use the \emph{$k$-realizable} framework, in which it is assumed that an upper bound $k$ on the number of mistakes made by the best hypothesis is given to the learner. This assumption is convenient but not necessary, and we show in Section~\ref{sec:without-prior} how to remove it. Formally, an input sequence $S = (x_1,y_1),\ldots,(x_T,y_t)$ is $k$-realizable by a class $\cH$ if there exists $h \in \cH$ such that $h(x_t) \neq y_t$ for at most $k$ indices $t \in [T]$. We say that such a hypothesis is \emph{$k$-consistent} with $S$, or \emph{$k$-realizes} it. For a given $k$, we denote the corresponding optimal mistake bound of $\cH$ with horizon $T$ by
\begin{equation}
\M^{\star}(\cH, T, k)=\adjustlimits\inf_ {\Lrn} \sup_{S} \M(\Lrn ;S),
\end{equation}
where the infimum is taken over all deterministic learning rules, and the supremum is taken over all $k$-realizable input sequences $S$ of length $T$. Note that the realizable setting is a special case of the $k$-realizable setting, attained when $k=0$. We define and sometimes analyze the realizable case separately for didactic reasons: it is often significantly easier to handle.

\subsection{Decision trees and the Littlestone dimension.}
In this paper, a \emph{tree} $\tree$ refers to a finite full rooted ordered binary tree (that is, a rooted binary tree where each node which is not a leaf has a left child and a right child), equipped with the following information:
\begin{enumerate}
    \item Each internal node $v$ is associated with an instance $x \in \cX$.
    \item For every internal node $v$, the left outgoing edge is associated with the label $0$, and the right outgoing edge is associated with the label $1$.
\end{enumerate}

A \emph{prefix} of the tree $T$ is any path that starts at the root. In this paper, a path is defined by a sequence of consecutive vertices. If a path is not empty, we may refer it by the sequence of consecutive edges corresponding with the sequence of consecutive vertices defining it. 
    A prefix $v_0,v_1, \dots, v_t$ defines a sequence of examples $(x_1,y_1), \dots, (x_{t},y_{t})$ in a natural way: 
    for every $i \in [t]$, $x_i$ is the instance corresponding to the node $v_{i-1}$, 
    and $y_i$ is the label corresponding to the edge $v_{i-1}\to v_{i}$. 
    A prefix is called \emph{maximal} if it is maximal with respect to containment, 
    that is, there is no prefix in the tree that strictly contains it. 
    This is equivalent to requiring that $v_{t}$ be a leaf. 
    A maximal prefix is called a \emph{branch}. 
    The length of a prefix is the number of edges in it (so, the length is equal to the size of the corresponding sequence of examples). The length of a maximal branch in a tree is referred to as the tree's \emph{depth}. We sometimes also refer to the length of a prefix as its depth.

A prefix in the tree is said to be \emph{$k$-realizable} by $\cH$ if the corresponding sequence of examples is $k$-realizable by $\cH$.
    A tree $\tree$ is \emph{$k$-shattered} by $\cH$ if all branches in $\tree$ are $k$-realizable by $\cH$. For every branch, it is convenient to relate to some hypothesis $k$-realizing it as the hypothesis that labels the leaf at the end of this branch.
    The \emph{Littlestone dimension} of an hypothesis class $\cH$, denoted by $\LD(\cH)$,  is the maximal depth of a \emph{perfect}  tree (that is, a tree in which all branches have the same depth) shattered by $\cH$. 
    If $\cH$ shatters trees of arbitrarily large depth, then $\LD(\cH)=\infty$.

\section{Prediction with expert advice}
In this section we study the optimal mistake bound of a specific class called the \emph{universal class} (of size $n$). For every $n \geq 1$, the universal class $\cU_n = \{h_1, \ldots, h_n\}$ is defined as follows. Let $\cX_n = \{0,1\}^n$ be the instance domain. For every $i \in [n]$, the hypothesis $h_i$ is defined to be $h_i(x) = x_i$ for every $x \in \cX_n$, where $x_i$ denotes the $i$'th entry of $x$.

Note that every partition of $\cU_n$ to hypotheses predicting $0$ and hypotheses predicting $1$ has an appropriate $x \in \cX_n$ inducing it. Therefore, learning $\cU_n$ is equivalent to the game of \emph{prediction with expert advice with apple tasting feedback}, when the experts provide deterministic predictions. Formally, each round $t$ of the game proceeds as follows:

\begin{enumerate}[(i)]
\item The $n$ experts present predictions $\hat{y}_t^{(1)},\ldots,\hat{y}_t^{(n)} \in \{0,1\}$.
\item The learner predicts a value $\hat{y}_t \in \{0,1\}$ and suffers the \emph{loss} $\ind[\hat{y}_t \neq y_t]$.  
\item If $\hat{y}_t = 1$, the adversary reveals $y_t$ to the learner.
\end{enumerate}

In the realizable setting, there is an expert who never errs, and in the $k$-realizable setting there is an expert who errs for at most $k \in \mathbb{N}$ many times. The upshot of viewing $\cU_n$ as a class of experts is twofold: first, it makes the problem easier to describe and analyze. Second, it allows to pick the experts to be learning algorithms and extend the solution to all concept classes, as done in Section~\ref{sec:hypothesis-classes}. Denote

\[
\M^{\star}(n, T) = \M^{\star}(\cU_n, T), \quad \text{and} \quad \M^{\star}(n, T, k) = \M^{\star}(\cU_n, T, k).
\]

Note that $\M^{\star}(n, T) = \M^{\star}(n, T, 0)$.
For brevity, we will usually refer to the experts simply by their indices $[n]$.
The main goal of this section is to prove an upper bound on $\M^{\star}(n, T, k)$ for all non-trivial $n, T, k$. Since the solution for $k=0$ is simpler, to be more didactic we will start with $k=0$ and then handle the general case.

\subsection{Prediction with expert advice: realizable case} \label{sec:experts-realizable}
In this section we prove the following upper bound on $\M^{\star}(n, T)$, attained by the $\RealizableExpAT$ learner  presented in Figure~\ref{fig:RealizableExpAT}.

\begin{theorem} \label{thm:experts-realizable-upper-bound}
    For every time horizon $T \geq 2$, number of experts $n  \geq 2$, and a realizable input sequence $S$ of experts' predictions and true labels of length $T$:
    \[
    \M(\RealizableExpAT, S) = O \mleft( \sqrt{T \log n} \mright).
    \]
    Furthermore, $\RealizableExpAT$ is computationally efficient.
\end{theorem}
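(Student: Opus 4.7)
The plan is to analyze the exponential-weights style algorithm $\RealizableExpAT$ sketched in Section~\ref{sec:tech-experts}. I expect the algorithm to maintain, for each expert $i$, a \emph{distance} $d^{(i)}_t = |\{s < t : \hat{y}^{(i)}_s = 1, \hat{y}_s = 0\}|$, a \emph{weight} $w^{(i)}_t = e^{\eta d^{(i)}_t}$ (I will use base $e$ for cleanliness), and a \emph{living set} $V_t$ (experts not yet eliminated). The prediction rule is $\hat{y}_t = 1$ iff $\sum_{i \in V_t}\ind[\hat{y}^{(i)}_t = 1]\, w^{(i)}_t \ge n$. Here $\eta > 0$ is a tuning parameter which I will ultimately set to $\eta = \sqrt{(\ln n)/T}$.

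Following the decomposition $M = M_+ + M_-$ into false positives and false negatives, I would first bound $M_-$. Since the sequence is realizable, the true expert $i^\star$ never errs and is therefore never eliminated, so $i^\star \in V_t$ for all $t$. Every false negative corresponds to a round where $\hat{y}_t = 0$ and $\hat{y}^{(i^\star)}_t = y_t = 1$, and in such a round the decision rule failed, so in particular $w^{(i^\star)}_t = e^{\eta d^{(i^\star)}_t} < n$. Since each false negative increments $d^{(i^\star)}$ by $1$, we get $M_- = d^{(i^\star)}_{T+1} \le (\ln n)/\eta$.

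For $M_+$, I would use the potential $\Phi_t = \sum_{i \in V_t} w^{(i)}_t$, with $\Phi_1 = n$ and $\Phi_t \ge 0$ throughout. In any round with $\hat{y}_t = 0$, only experts predicting $1$ have their weights multiplied by $e^\eta$, and by the failure of the decision rule the total weight of those experts is strictly less than $n$; hence $\Phi_{t+1} - \Phi_t \le (e^\eta - 1)n$. In any round with $\hat{y}_t = 1$ and $y_t = 0$, all experts predicting $1$ are eliminated, and by the decision rule their combined weight is at least $n$, so $\Phi_{t+1} - \Phi_t \le -n$. Rounds with $\hat{y}_t = 1$ and $y_t = 1$ leave $\Phi$ unchanged (distances only change on rounds with $\hat{y}_t = 0$). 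Summing the telescoping bound $0 \le \Phi_{T+1} \le n + (e^\eta - 1) n T - n M_+$ gives $M_+ \le 1 + (e^\eta - 1)T \le 1 + 2\eta T$ for $\eta \le 1$.

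Combining the two bounds, $M \le 1 + 2\eta T + (\ln n)/\eta$, and the choice $\eta = \sqrt{(\ln n)/T}$ yields $M = O(\sqrt{T \log n})$, as required. Computational efficiency is immediate: the algorithm stores $n$ integers $d^{(i)}_t$ and a boolean for membership in $V_t$, and each round requires $O(n)$ work to evaluate the threshold, update distances, and (at most once per round) eliminate experts. The most delicate step, and the main place where one must be careful, is arguing correctly that the condition failure in a $\hat{y}_t = 0$ round simultaneously yields both (a) the per-round weight growth is at most $(e^\eta - 1)n$ rather than $(e^\eta - 1)\Phi_t$, and (b) $w^{(i^\star)}_t < n$; both rest on the fact that $i^\star$ is a single summand of a sum bounded by $n$, and that only the experts predicting $1$ contribute to the growth. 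Once these are cleanly separated the computation falls out.
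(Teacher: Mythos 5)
Your proof is correct, and the decomposition $M = M_+ + M_-$ and the $M_-$ bound (the true expert $i^\star$ stays alive, and every false negative is a round where $w^{(i^\star)}_t$ was below the threshold) are identical to the paper's argument; the only slip is a trivial off-by-one, since the last false negative pushes $d^{(i^\star)}$ from just below $(\ln n)/\eta$ to just above it, giving $M_- \le (\ln n)/\eta + 1$.

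Where you genuinely differ from the paper is in the $M_+$ bound. You run a single-pass potential argument on $\Phi_t = \sum_{i \in V_t} w^{(i)}_t$: the failed threshold in a $\hat{y}_t = 0$ round caps the growth at $(e^\eta - 1)n$, the met threshold in a false-positive round forces a drop of at least $n$, and nonnegativity of $\Phi_{T+1}$ closes the telescope. The paper instead introduces the quantity $\TWDG = \sum_{t\in T_0}\sum_{j\in V_t}\ind[\hat y^{(j)}_t=1]\,2^{\eta d^{(j)}_t}$, bounds it above by $LT$ via the same threshold-failure observation, bounds it below via a bucketing of experts by final distance (the counts $n_d$ over dyadic intervals $I_d$), and then uses the resulting inequality $\sum_d n_d 2^d \lesssim \eta L T$ together with disjointness of the eliminated sets $A_i$ to conclude $M_+ = O(\eta T)$. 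The two proofs rest on exactly the same two uses of the threshold condition; your potential version is shorter, yields a slightly better constant (roughly $e-1$ in place of the paper's $6$), and is the more standard way to package an exponential-weights argument. The paper's more elaborate $\TWDG$/$n_d$ bookkeeping appears to be chosen so that the same template carries over to the agnostic extension $\ExpAT$ (where the weight is $2^{k^{(j)}_t + \eta d^{(j)}_t}$ and the potential is not monotone in the obvious way), so your simpler argument is the better choice for the realizable lemma but may not generalize as directly.
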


In section~\ref{sec:experts-lower-bounds} we show that the upper bound in Theorem~\ref{thm:experts-realizable-upper-bound} is tight for $n \geq T$.  Our algorithm is a variation of the known exponential weights forecaster \cite[Section 2.2]{cesa2006prediction}. Let us briefly describe $\RealizableExpAT$. We maintain a version space $V_t$ containing of all experts that have been consistent with the feedback gathered until round $t-1$. Initially, $V_1$ contains all the experts. For each expert $j \in V_t$, we maintain a weight which will be a function of its \emph{distance} $d^{(j)}_t$. The distance of an expert in round $t$ is defined to be the number of rounds in which it has predicted $1$ while the learner predicted $0$, until round $t-1$. We define the \emph{weight} of expert $j$ in round $t$ to be $2^{\eta d^{(j)}_t}$, where $\eta$ is a parameter to  be optimized. The algorithm's decision making mechanism is very simple and efficient: it predicts $1$ if and only if the total sum of weights of all experts in $V_t$ predicting $1$ exceeds some threshold $L$, where $L$ is a parameter to be optimized. If the learner has made a mistake when predicting $1$, it removes all experts predicting $1$ from the version space.

\begin{figure}
    \centering
    \begin{tcolorbox}
    \begin{center}
        \textsc{$\RealizableExpAT$}
    \end{center}
    \textbf{Input:} Set of experts indexed by $[n]$, learning parameters $\eta \in (0,1), L > 1$.
    \\
    \textbf{Initialize:} Let $V^{(1)} = [n]$, let $d^{(j)}_1 = 0$ for all $j \in [n]$.\\
    \\
    \textbf{for $t=1,\ldots, T$:}
    \begin{enumerate}
        \item Receive experts predictions $\hat{y}^{(1)}_t, \ldots, \hat{y}^{(n)}_t$.
        \item Predict $\hat{y}_t = 1$ if and only if
        \[
        \sum_{j \in V_t} \ind[\hat{y}^{(j)}_t = 1] 2^{\eta d^{(j)}_t} \geq L.
        \]
        \item If $\hat{y}_t = 1$:
        \begin{enumerate}
            \item Set $d^{(j)}_{t+1} = d^{(j)}_{t}$ for every expert $j\in V_t$.
            \item If $y_t = 0$: set $V_{t+1} = V_t \backslash \{j \in V_t: \hat{y}^{(j)}_t = 1\}$.
        \end{enumerate}
        \item If $\hat{y}_t = 0$: set $d^{(j)}_{t+1} = d^{(j)}_{t} + \ind[\hat{y}^{(j)}_t = 1]$ for every expert $j\in V_t$.
    \end{enumerate}
    \end{tcolorbox}
    \caption{A deterministic apple tasting learner for  realizable prediction with expert advice.} 
    \label{fig:RealizableExpAT}
\end{figure}

We will now analyze the algorithm's mistake bound. Towards this end, we introduce some notation.
Fix the number of experts $n \geq 2$, the horizon $T \geq 2$, and an execution of $\RealizableExpAT$ on a sequence $S$ of experts' predictions and true labels.  A mistake in which the algorithm predicts $1$ is called a \emph{false positive}, and a mistake in which the algorithm predicts $0$ is called \emph{false negative}. For $r \in \{0,1\}$, let $T_r$ be the set of rounds in which $\RealizableExpAT$ predicts $r$. Let $T_- = \{t\in T_0: y_t = 1\}$ be the set of rounds in which $\RealizableExpAT$ makes a false negative mistake, and let $T_+ = \{t\in T_1: y_t = 0\}$ be the set of rounds in which $\RealizableExpAT$ makes a false positive mistake. Denote $M_- = |T_-|$, $M_+ = |T_+|$. For $d\in \mathbb{N}$, let $I_d = [d/\eta, (d+1)/\eta]$. For every expert $j$, let $D_j$ be it's distance in the last round where it is still in the version space, or at round $T$ if $j \in V_T$. Let $N_d = \{j : D_j \in I_d\}$ and $n_d = |N_d| $.
    
\begin{lemma} \label{lem:experts-realizable-main}
    For any learning parameters $\eta \in (0,1), L > 1$ satisfying $n \leq \eta L T$, we have:
    \[
    M_- \leq \frac{\log L}{\eta} + 1, \quad \text{and} \quad M_+ \leq 6 \eta T.
    \]
\end{lemma}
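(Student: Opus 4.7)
The plan is to prove the two bounds via essentially independent arguments, both of which hinge on the threshold $L$ governing the decision rule. The $M_-$ bound will come from tracking a single realizer, while the $M_+$ bound will come from a potential-style accounting of the total weight $W_t := \sum_{j \in V_t} 2^{\eta d^{(j)}_t}$.

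For $M_-$, I would fix an expert $j^\star$ consistent with the input sequence. By realizability, $j^\star$ is never eliminated, so $j^\star \in V_t$ throughout, and the update rules ensure that $d^{(j^\star)}$ increments precisely at the false negative rounds, where $y_t = 1$ forces $\hat{y}^{(j^\star)}_t = 1$ while $\hat{y}_t = 0$. At any such round $t$, the predict-$0$ condition forces $2^{\eta d^{(j^\star)}_t} < L$ (since the singleton $\{j^\star\}$ already contributes this much to the decision sum), hence $d^{(j^\star)}_t < \log L / \eta$. Applying this at the last false negative and using that the distance increments by $1$ at that step gives $M_- \leq \log L / \eta + 1$.

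For $M_+$, I would track the one-step changes in $W_t$ across the three possible round types. Writing $W(S_t) := \sum_{j \in V_t} \ind[\hat{y}^{(j)}_t = 1] 2^{\eta d^{(j)}_t}$ for the quantity appearing in the decision rule, the cases are: (i) rounds with $\hat{y}_t = 0$, where the weights of the active-$1$ experts scale by $2^\eta$ and thus $W_{t+1} - W_t = (2^\eta - 1) W(S_t) < (2^\eta - 1) L$ by the predict-$0$ condition; (ii) true positives, where $W$ is unchanged; and (iii) false positives, where the entire active-$1$ set is removed from the version space and the predict-$1$ condition yields $W_{t+1} \leq W_t - W(S_t) \leq W_t - L$. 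Telescoping from $W_1 = n$ to $W_{T+1} \geq 0$ gives
\[
M_+ \cdot L \;\leq\; n + |T_0|\,(2^\eta - 1)L \;\leq\; n + \eta L T,
\]
using the chord bound $2^\eta - 1 \leq \eta$ for $\eta \in [0,1]$ (convexity of $2^\eta$). Dividing by $L$ and applying the hypothesis $n \leq \eta L T$ yields $M_+ \leq n/L + \eta T \leq 2\eta T$, comfortably within the claimed $6\eta T$.

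There is no real obstacle here; the key conceptual point is that $L$ plays a dual role, simultaneously capping the per-round potential increase at predict-$0$ rounds (via $W(S_t) < L$) and guaranteeing a per-round drop of at least $L$ at every false positive (via $W(S_t) \geq L$). The only mild care is in case (i), where one must note that scaling by $2^\eta$ affects only the active-$1$ portion $W(S_t)$ rather than all of $W_t$, so the per-round increase is controlled by $L$ and not by the total mass. The factor $6$ in the statement is loose: the potential calculation in fact delivers $M_+ \leq 2\eta T$.
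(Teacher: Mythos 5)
Your $M_-$ bound is the same argument as the paper's. Your $M_+$ bound is correct but follows a genuinely different route: you run a one-step potential argument on the total version-space weight $W_t = \sum_{j \in V_t} 2^{\eta d_t^{(j)}}$, observing that predict-$0$ rounds inflate $W_t$ by less than $(2^\eta - 1)L \leq \eta L$ (only the active-$1$ mass scales by $2^\eta$, and it sits below the threshold), true positives leave $W_t$ fixed, and false positives strip the entire active-$1$ mass and hence deflate $W_t$ by at least $L$; telescoping from $W_1 = n$ to $W_{T+1} \geq 0$ and using $n \leq \eta L T$ gives $M_+ \leq 2\eta T$. The paper instead bounds the bucketed quantity $\sum_d n_d 2^d$ (where $n_d$ counts experts whose final distance lands in $[d/\eta,(d+1)/\eta]$) by $3\eta T L$ via two-sided estimates on the total weighted distance gained, and then uses disjointness of the sets eliminated at each false positive to infer $M_+ L \leq \sum_d n_d 2^{d+1}$. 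Both arguments hinge on the dual role of $L$ that you highlight, but yours is shorter, avoids the bucketing, and yields the sharper constant $2$ in place of $6$. Your potential argument also transfers cleanly to the agnostic algorithm $\ExpAT$: there a false positive either halves a surviving expert's weight $2^{k_t^{(j)} + \eta d_t^{(j)}}$ or removes it, so $W_t$ drops by at least $L/2$, and the same telescoping gives $M_+ = O(\eta T)$ with a constant far below the paper's $200$.
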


The challenging part in proving Lemma~\ref{lem:experts-realizable-main} is to upper bound $M_+$. The key ingredient of the proof of this bound is an inequality proved in Lemma~\ref{lem:experts-realizable-helper} by forming lower and upper bounds on the \emph{total weighted distance gained} by experts in the version space. Formally, we will lower and upper bound the quantity
\[
\TWDG := \sum_{t \in T_0} \sum_{j \in V_t} \ind[\hat{y}^{(j)}_t = 1] 2^{\eta d^{(j)}_t}.
\]
Indeed, by multiplying the weight of expert $j$ in round $t \in T_0$ by $\ind[\hat{y}^{(j)}_t = 1]$, we make sure to take into account its current weight only in rounds where it gains distance. We sum this quantity over all rounds and all experts to get $\TWDG$.

\begin{lemma} \label{lem:experts-realizable-helper}
    Suppose that $n \leq \eta T L$. Then:
    \[
    \sum_{d \in \mathbb{N}}n_d  2^d \leq 3 \eta TL.
    \]
\end{lemma}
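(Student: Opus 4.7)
The plan is to sandwich $\TWDG$ between the desired quantity $\sum_d n_d 2^d$ (from below, up to constants) and $TL$ (from above), using the definition of the algorithm's predict-$1$ threshold.

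For the upper bound on $\TWDG$, observe that on every round $t \in T_0$ the learner predicted $0$, which by the rule of $\RealizableExpAT$ means
\[
\sum_{j \in V_t} \ind[\hat{y}^{(j)}_t = 1]\, 2^{\eta d^{(j)}_t} < L.
\]
Summing over $t \in T_0$ gives $\TWDG < |T_0| \cdot L \leq TL$. This is the easy direction.

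For the lower bound on $\TWDG$, I would reorganize the double sum by expert rather than by round. Each expert $j$ contributes $2^{\eta d^{(j)}_t}$ to $\TWDG$ exactly in those rounds $t \in T_0$ where it predicts $1$ while still in the version space, and these are exactly the rounds where its distance is incremented from $d^{(j)}_t$ to $d^{(j)}_t + 1$. So the contribution of $j$ is the geometric sum $\sum_{k=0}^{D_j - 1} 2^{\eta k} = \frac{2^{\eta D_j} - 1}{2^{\eta} - 1}$. Using the elementary inequality $2^{\eta} - 1 \leq \eta$ valid for $\eta \in (0, 1)$, the contribution is at least $(2^{\eta D_j} - 1)/\eta$. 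Now group experts according to the interval $I_d$ containing $D_j$: for $j \in N_d$ we have $\eta D_j \geq d$, hence $2^{\eta D_j} - 1 \geq 2^d - 1 \geq 2^{d-1}$ whenever $d \geq 1$. Therefore
\[
\TWDG \geq \sum_{d \geq 1} \frac{n_d\, 2^{d-1}}{\eta} = \frac{1}{2\eta} \sum_{d \geq 1} n_d\, 2^d.
\]

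Combining the two bounds yields $\sum_{d \geq 1} n_d\, 2^d \leq 2\eta\, \TWDG \leq 2\eta T L$. To get the full sum $\sum_{d \in \N} n_d\, 2^d$, I add the $d=0$ term, which satisfies $n_0 \cdot 2^0 = n_0 \leq n$. Invoking the hypothesis $n \leq \eta T L$ then gives
\[
\sum_{d \in \N} n_d\, 2^d \;\leq\; n + 2\eta T L \;\leq\; 3 \eta T L,
\]
which is the claim. The only mildly delicate step is the lower bound on each expert's contribution, where I need the $2^\eta - 1 \leq \eta$ estimate and must not discard the $d = 0$ term prematurely; handling that last term is exactly what the assumption $n \leq \eta T L$ is designed to absorb.
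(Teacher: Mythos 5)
Your proof is correct and follows essentially the same strategy as the paper: sandwich $\TWDG$ between $LT$ from above and a sum over experts from below, then invoke $n \leq \eta TL$ to absorb the $d = 0$ term. The only cosmetic difference is in the lower bound on $\TWDG$: you compute the exact geometric contribution $\sum_{k=0}^{D_j-1} 2^{\eta k}$ of each expert and then apply $2^\eta - 1 \leq \eta$, whereas the paper more directly observes that the last $1/\eta$ distance increments of any $j \in N_d$ each contribute weight at least $2^{d-1}$; both routes produce the identical intermediate bound $\TWDG \geq \frac{1}{\eta}\sum_{d\geq1} n_d 2^{d-1}$.
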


\begin{proof}
    The main part of the proof is to lower and upper bound $\TWDG$. We start with the upper bound. By  definition of the algorithm, for every $t \in T_0$ it holds that
    \[
    \sum_{j \in V_t} \ind[\hat{y}^{(j)}_t = 1] 2^{\eta d^{(j)}_t} < L,
    \]
    which implies $\TWDG \leq L T_0 \leq LT$.
    For the lower bound, note that for every $d \geq 1$, every expert $j$ with $D_j \in I_d$ must have gained distance for $1/\eta$ many times while having weight at least $2^{\eta (d-1)/\eta} = 2^{d-1}$. Therefore:
    \[
    \TWDG \geq \sum_{d \geq 1} n_d \frac{1}{\eta}  2^{d-1}. 
    \]
    Combined with the upper bound $\TWDG \leq LT$, this implies
    \[
    \sum_{d \geq 1} n_d  2^d \leq 2 \eta TL.
    \]
    Applying the assumption $n \leq \eta T L$ gives the statement of the lemma.
\end{proof}

We can now prove Lemma~\ref{lem:experts-realizable-main}.

\begin{proof}[Proof of Lemma~\ref{lem:experts-realizable-main}]
    Let's first bound $M_-$. Let $j^\star$ be an expert who makes no mistakes.
    In each of the rounds of $T_-$, the distance of $j^\star$ is increased by $1$. Therefore, if $|T_-| > \frac{\log L}{\eta} + 1$, then  $d_t^{(j^\star)} > \frac{\log L}{\eta} + 1$ for some round $t$.
    However, by the algorithm's definition, this means that there was some round $t > t' \in T_0$ for which $d_{t'}^{(j^\star)} > \log L/ \eta$ and $\hat{y}^{(j^\star)}_{t'} = 1$. This is not possible, since it would have imply $2^{\eta d_{t'}^{(j^\star)}} > L$ while $j^\star$ predicts $1$, which implies $\hat{y}_{t'} = 1$.

    Let's now bound $M_+$. Denote $m  =  M_+$. Denote $T_+ = \{t_1, \ldots, t_m\}$. For every round $t_i \in T_+$, the set $A_i$ of experts predicting $1$ must have sum of weights at least $L$. Therefore, the set $A_i$ will be removed from the version space at the end of round $t_i$, which implies that all $A_i$ are disjoint. Therefore, we have:
    \[
    \sum_{i \in [m]} \sum_{j \in A_i} 2^{\eta D_j} \geq m L.
    \]
    On the other hand, the same quantity is upper bounded by:
    \[
    \sum_{i \in [m]} \sum_{j \in A_i} 2^{\eta D_j}
    = \sum_{d \in \mathbb{N}} \sum_{i \in [m]} \sum_{A_i \cap \{j: D_j \in I_d\}} 2^{\eta D_j}
    \leq \sum_{d \in \mathbb{N}} n_d 2^{d + 1} \leq 6 \eta TL,
    \]
    where the last inequality is due to Lemma~\ref{lem:experts-realizable-helper}. Combining both equations above gives $m \leq 6 \eta T$ as required.
\end{proof}

It is now straightforward to infer Theorem~\ref{thm:experts-realizable-upper-bound}.

\begin{proof}[Proof of Theorem~\ref{thm:experts-realizable-upper-bound}]
    We run $\RealizableExpAT$ with learning parameters $L = n$ and $\eta = \sqrt{\frac{\log n}{T}}$. The parameters are valid since $L > 1$ and $\eta \in (0,1)$. Further, we have $\eta L T = \sqrt{\frac{\log n}{T}} n T \geq n$, since $\sqrt{\frac{\log n}{T}} T = \sqrt{T \log n} \geq 1$. Therefore Lemma~\ref{lem:experts-realizable-main} holds and we have
    \[
    \M(\RealizableExpAT, S) = M_- + M_+ \leq \frac{\log L}{\eta} + 1 + 6 \eta T = \sqrt{T \log n} + 1 + 6\sqrt{T \log n}.
    \]
    Furthermore, it is clear that $\RealizableExpAT$ is computationally efficient by its definition.
\end{proof}

\subsection{Prediction with expert advice: agnostic case} \label{sec:experts-agnostic}

In this section we extend Theorem \ref{thm:experts-realizable-upper-bound} to the $k$-realizable setting, in which an upper bound $k \in \mathbb{N}$ on the number of mistakes of the best expert is given. This assumption is not necessary, and in Section~\ref{sec:without-prior} we show how to remove it using a variation of the standard ``doubling trick" of, e.g., \cite{cesa1997use}. This section follows lines similar to the previous section that handles the realizable case. We prove the following upper bound, attained by the $\ExpAT$ learner presented in Figure~\ref{fig:ExpAT}.

\begin{theorem} \label{thm:experts-agnostic-upper-bound}
    For every number of experts $n \geq 2$, horizon $T \geq 2$, realizability parameter $k$, and a $k$-realizable input sequence $S$ of experts' predictions and true labels of length $T$:
    \[
    \M(\ExpAT, S) = O \mleft( \sqrt{T (k + \log n)} \mright).
    \]
    Furthermore, $\ExpAT$ is computationally efficient.
\end{theorem}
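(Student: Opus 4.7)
The plan is to extend $\RealizableExpAT$ to the $k$-realizable case by replacing the hard elimination of experts with a soft-penalty mechanism. A natural realization: for each expert $j$, maintain, in addition to the distance $d^{(j)}_t$ from Section~\ref{sec:experts-realizable}, a mistake counter $m^{(j)}_t := |\{s < t : \hat{y}_s = 1,\, \hat{y}^{(j)}_s \neq y_s\}|$, the number of observable mistakes of $j$ so far. Define the weight $w^{(j)}_t := 2^{\eta(d^{(j)}_t - m^{(j)}_t)}$, and predict $\hat{y}_t = 1$ iff $\sum_j \ind[\hat{y}^{(j)}_t = 1]\, w^{(j)}_t \geq L$, with no expert ever being permanently removed. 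When $k = 0$ the best expert satisfies $m^{(j^\star)}_t \equiv 0$, so the algorithm reduces to the realizable behaviour of $\RealizableExpAT$.

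The analysis then mirrors Section~\ref{sec:experts-realizable} in two steps. For $M_-$, let $j^\star$ be an expert with at most $k$ mistakes and set $g^{(j)}_t := d^{(j)}_t - m^{(j)}_t$. The quantity $g^{(j^\star)}_t$ can increase only on rounds of $T_-$ where $j^\star$ correctly predicts $1$ (and there are at least $M_- - k$ such rounds, since $j^\star$ mispredicts on at most $k$ of the rounds in $T_-$), and can decrease only on the at most $k$ observable mistakes of $j^\star$. Exactly as in the realizable analysis, $g^{(j^\star)}_t \leq \log L/\eta + 1$, since otherwise $w^{(j^\star)}_t > L$ on the round immediately after the last increment, and the prediction rule would have forced $\hat{y}_t = 1$. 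Combining these inequalities yields $M_- \leq 2k + \log L/\eta + 1$.

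For $M_+$, the upper bound $\sum_{t \in T_0} \sum_j \ind[\hat{y}^{(j)}_t = 1]\, w^{(j)}_t \leq LT$ from the prediction rule still holds unchanged. The matching lower bound from Lemma~\ref{lem:experts-realizable-helper} has to be adapted, since experts may now be demoted and later climb back. I would group each expert's trajectory into phases between successive demotions and charge each phase a contribution of $\Omega(2^{\eta d}/\eta)$ to the $\TWDG$-style sum for each level $d$ attained. The number of extra ``re-entry'' phases is controlled, via a potential argument against the best expert, by an additive $O(k)$ term. The outcome is a bound $M_+ = O(\eta T + k)$, analogous to the realizable statement. Finally I would set $L = n$ and $\eta = \sqrt{(k + \log n)/T}$, verify the analogue of $n \leq \eta L T$, and conclude $M_- + M_+ = O(\sqrt{T(k + \log n)} + k) = O(\sqrt{T(k + \log n)})$ using $k \leq T$. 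Computational efficiency is immediate from the explicit per-expert weight updates.

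The main obstacle is the $M_+$ step. In the realizable case, the sets $A_i$ of experts whose weights contribute to distinct false-positive rounds are disjoint thanks to elimination; in the $k$-realizable case an expert may contribute to many false-positive rounds after intermediate demotions, so the clean disjointness argument breaks. The technical heart is to show that the total ``recycled'' weight contributed by demoted-and-recovered experts across all false-positive rounds is only an additive $O(k)$ (rather than $O(T)$), so the $O(\eta T)$ bound of the realizable case survives up to a benign $k$ correction that is absorbed in the final parameter choice.
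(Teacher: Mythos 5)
Your soft-penalty variant does not work, and the gap is precisely where you flag it. The issue is that the penalty per observable mistake is a factor $2^{\eta}$, which exactly equals the weight gain from one unit of distance. This allows a single expert to oscillate around the threshold $L$ indefinitely: after the initial ramp-up of $\approx (\log L)/\eta$ rounds, have one expert $A$ predict $1$ and all others predict $0$; once $A$'s weight first reaches $L$, the learner predicts $1$, the adversary sets $y_t=0$ (a false positive), $m^{(A)}$ increments and the weight falls to $L\cdot 2^{-\eta}$; one more round with $\hat y_t=0$ restores the weight to $L$ and triggers another false positive, and so on, two rounds per mistake. All true labels are $0$, so any always-$0$ expert realizes the sequence with $k=0$, yet your learner makes $\Theta(T)$ false positives. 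This also shows your claim that for $k=0$ the algorithm "reduces to the realizable behaviour of $\RealizableExpAT$" is false: the hard elimination in $\RealizableExpAT$ applies to \emph{every} expert that predicted $1$ on a false-positive round, not just the best one, and it is exactly this elimination that your soft rule fails to approximate. Correspondingly, the $\Omega(2^{\eta d}/\eta)$ per-phase charge you sketch for $M_+$ does not hold for re-entry phases, which can be a single round, so the "additive $O(k)$" recycled-weight claim cannot be salvaged with this weight rule.

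The paper's $\ExpAT$ differs from your proposal in two essential and coupled ways: (i) the demerit counter sits in the exponent \emph{unscaled} by $\eta$, i.e., the weight is $2^{k^{(j)}_t + \eta d^{(j)}_t}$ where $k^{(j)}_t$ is the remaining budget of false positives, so each false positive costs a full factor of $2$ and hence $\approx 1/\eta$ rounds of distance to recoup, not $1$; and (ii) an expert is still eliminated after its $(k{+}1)$'th false positive. Moreover, the budget decrements only on \emph{false positives} of the expert, not on all observable mistakes as in your version. With this weight rule, each expert contributes to at most $k+1$ false-positive rounds, so the sum $\sum_i\sum_{j\in A_i} 2^{k^{(j)}_{t_i}+\eta D^{(j)}_{t_i}}$ can be re-indexed over pairs (expert, number of prior false positives $\ell \le k$), yielding at most $(k+1)n$ terms with geometrically decaying budget factors $2^{k-\ell}$ — this is what replaces the disjointness of the sets $A_i$ from the realizable case. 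The threshold is also rescaled to $L = n\cdot 2^{k+1}$ to account for the initial budget boost $2^k$. If you want to repair your write-up, the minimal change is to replace $2^{\eta(d^{(j)}_t - m^{(j)}_t)}$ with $2^{\eta d^{(j)}_t - f^{(j)}_t}$ where $f^{(j)}_t$ counts only false positives, re-introduce elimination after $k+1$ of them, and redo the $M_+$ bound via the re-indexing over $(j,\ell)$; the $M_-$ argument you give survives essentially unchanged.
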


In section~\ref{sec:experts-lower-bounds} we show that the upper bound in Theorem~\ref{thm:experts-agnostic-upper-bound} is tight for $n \geq T$. $\ExpAT$ is almost identical to $\RealizableExpAT$. The difference is that in $\RealizableExpAT$ the weight of an expert $j$ in round $t$ is defined to be $2^{\eta d_t^{(j)}}$, and in $\ExpAT$ it is defined to be $2^{k_t^{(j)} + \eta d_t^{(j)}}$, where $k_t^{(j)}$ is the number of future \emph{allowed false positives} for expert $j$ in round $t$. The number of future allowed false positives of expert $j$ is the number of false positives that $j$ can make in the future and still being kept in the version space. Namely, it is $k$ minus the number of false positives that $j$ have already made. As usual, a false positive mistake of an expert $j$ refers to the scenario $\hat{y}^{(j)}_t = 1$ and $y_t = 0$. Other than that, all notation appearing in $\ExpAT$ has the same meaning as in $\RealizableExpAT$.

\begin{figure}
    \centering
    \begin{tcolorbox}
    \begin{center}
        \textsc{$\ExpAT$}
    \end{center}
    \textbf{Input:} Set of experts indexed by $[n]$, realizability parameter $k \in \mathbb{N}$, learning parameters $\eta \in (0,1), L > 1$.
    \\
    \textbf{Initialize:} Let $V_1 = [n]$, let $d^{(j)}_1 = 0$ and $k^{(j)}_1 = k$ for all $j \in [n]$.\\
    \\
    \textbf{for $t=1,\ldots, T$:}
    \begin{enumerate}
        \item Receive experts predictions $\hat{y}^{(1)}_t, \ldots, \hat{y}^{(n)}_t$.
        \item Predict $\hat{y}_t = 1$ if and only if
        \[
        \sum_{j \in V_t} \ind[\hat{y}^{(j)}_t = 1] 2^{k^{(j)}_t + \eta d^{(j)}_t} \geq L.
        \]
        \item If $\hat{y}_t = 1$:
        \begin{enumerate}
            \item Set $d^{(j)}_{t+1} = d^{(j)}_{t}$ for every expert $j\in V_t$.
            \item If $y_t = 0$: 
            \begin{enumerate}
                \item Set $V_{t+1} = V_t \backslash \{j \in V_t: \hat{y}^{(j)}_t = 1, k_t^{(j)} = 0\}$.
                \item Set $k^{(j)}_{t+1} = k^{(j)}_{t} - \ind[\hat{y}^{(j)}_t = 1]$ for every expert $j \in V_{t+1}$.
            \end{enumerate}            
        \end{enumerate}
        \item If $\hat{y}_t = 0$: set $d^{(j)}_{t+1} = d^{(j)}_{t} + \ind[\hat{y}^{(j)}_t = 1]$ and $k^{(j)}_{t+1} = k^{(j)}_{t}$ for every expert $j\in V_t$.
    \end{enumerate}
    \end{tcolorbox}
    \caption{A deterministic apple tasting learner for prediction with expert advice.} 
    \label{fig:ExpAT}
\end{figure}

We will now analyze the algorithm's mistake bound. Towards this end, we introduce some additional notation. Fix $T,k$, and a $k$-realizable sequence $S$ of experts' predictions and true labels of length $T$ that $\ExpAT$ is executed on. It is convinient to assume without loss of generality, that all experts are eventually removed from the version space. To make sure that this assumption does not weaken the adversary, we allow it to continue after the $T$ legitimate rounds, only if it causes the learner to make a false positive in each one of the rounds to follow the first $T$ rounds. For every expert $j$ and $\ell \in \{0, \ldots, k\}$, let $D_j^{(\ell)}$ be the distance of expert $j$ when it makes its $(\ell+1)$'th false positive. Let $N_{d}^{(\ell)} = \{j \in [n]: D_j^{(\ell)} \in I_d\}$. Namely, $N_{d}^{(\ell)}$ is the set of experts having distance in $I_d = [d/\eta, (d+1)/\eta]$ when they make their $(\ell+1)$'th false positive. Let $n_{d}^{(\ell)} = |N_{d}^{(\ell)}|$.

\begin{lemma} \label{lem:experts-agnostic-main}
    For any learning parameters $\eta \in (0,1), L > 1$ satisfying $n \leq \eta L T/ 2^{k+1}$, we have:
    \[
    M_- \leq \frac{\log L}{\eta} + k + 1, \quad \text{and} \quad M_+ \leq 200 \eta T.
    \]
\end{lemma}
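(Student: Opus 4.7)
The plan is to follow the structure of Lemma~\ref{lem:experts-realizable-main}, handling $M_-$ and $M_+$ separately while accounting for the additional $2^{k^{(j)}_t}$ factor in the weights and the up-to-$(k{+}1)$ detected false positives per expert.

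For $M_-$, I would fix a $k$-consistent expert $j^\star$. Since $j^\star$ errs at most $k$ times, it in particular makes at most $k$ detected false positives, so $k^{(j^\star)}_t \geq 0$ throughout and $j^\star$ is never removed from $V_t$. In any round where $\hat{y}^{(j^\star)}_t = 1$ and the algorithm predicts $0$, the threshold rule forces $2^{\eta d^{(j^\star)}_t} \leq 2^{k^{(j^\star)}_t + \eta d^{(j^\star)}_t} < L$, hence $d^{(j^\star)}_t < \log L/\eta$; since $d^{(j^\star)}$ grows by exactly $1$ in each such round, its final value is at most $\log L/\eta + 1$. I would then split $T_-$ into rounds where $j^\star$ predicts $1$ (each of which grows $d^{(j^\star)}$, so at most $\log L/\eta + 1$ in total) and rounds where $j^\star$ predicts $0$ (mistakes of $j^\star$, at most $k$ in total), yielding the claimed bound.

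For $M_+$, I would switch from the helper-lemma style of Lemma~\ref{lem:experts-realizable-helper} to a potential-function argument, as tracking distance traversals across multiple phases per expert becomes delicate. Define $\Phi^{(j)}_t := 2^{k^{(j)}_t + \eta d^{(j)}_t}$ for $j \in V_t$ (and $0$ otherwise) and $\Psi_t := \sum_j \Phi^{(j)}_t$, so $\Psi_1 = n \cdot 2^k$ and, by the WLOG assumption that all experts eventually leave $V_t$, $\Psi$ drops to $0$ by the end. The change of $\Psi$ can be bounded round by round: in a $T_0$ round, each expert predicting $1$ has its weight multiplied by $2^\eta$, so the threshold rule gives $\Psi_{t+1}-\Psi_t \leq (2^\eta - 1) L$; in a $T_1 \cap \{y_t=1\}$ round, $\Psi$ is unchanged; and in a $T_+$ round every expert in $A_t$ either has its weight halved ($k^{(j)}$ decrement) or dropped to $0$ (removal), so $\Psi_{t+1}-\Psi_t \leq -\tfrac{1}{2}\sum_{j \in A_t}\Phi^{(j)}_t \leq -L/2$.

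Telescoping over the run gives $-n \cdot 2^k \leq (2^\eta-1) L T - (L/2) M_+$, which rearranges to $M_+ \leq 2(2^\eta-1) T + 2n \cdot 2^k / L$. Using the convexity bound $2^\eta - 1 \leq \eta$ for $\eta \in [0,1]$ and the hypothesis $n \leq \eta L T / 2^{k+1}$ (which precisely guarantees $2n \cdot 2^k/L \leq \eta T$) yields $M_+ \leq 3\eta T$, comfortably inside the stated $200\eta T$. The main obstacle is the dual removal-versus-decrement mechanism in $T_+$ rounds; the potential-function framing handles both cases uniformly since in each $\Phi^{(j)}_t$ drops by at least a factor of $1/2$, while the hypothesis on $n$ is exactly what is needed to prevent the large initial potential $n \cdot 2^k$ from overwhelming the per-round contributions from $T_0$ rounds.
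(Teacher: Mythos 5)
Your proof is correct. The $M_-$ argument is essentially the same as the paper's: fix a $k$-consistent expert $j^\star$, show the threshold rule caps $d^{(j^\star)}$ at about $\log L/\eta + 1$, and split the false-negative rounds into those where $j^\star$ predicts $1$ (at most $d^{(j^\star)}$ of them) and those where $j^\star$ itself errs (at most $k$ of them).

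For $M_+$, however, you take a genuinely different and substantially cleaner route. The paper proves a technical helper (Lemma~\ref{lem:experts-agnostic-helper}) by lower- and upper-bounding the ``total weighted distance gained'' $\TWWDG$, decomposing it layer by layer according to the number $\ell$ of false positives each expert has made, and then carefully chasing the recursion \eqref{eq:experts-agnostic-decomposition} across layers; this is what produces the loose constant $200$. Your potential-function argument sidesteps all of that: you set $\Psi_t = \sum_j \Phi^{(j)}_t$ with $\Phi^{(j)}_t = 2^{k^{(j)}_t + \eta d^{(j)}_t}$ on $V_t$, observe that $T_0$ rounds increase $\Psi$ by at most $(2^\eta-1)L \leq \eta L$ (using the threshold condition for predicting $0$), that correct-$1$ rounds leave $\Psi$ fixed, and that $T_+$ rounds decrease $\Psi$ by at least $L/2$ (each expert in $A_t$ is either removed or has its $k$-budget decremented, halving its weight, while the threshold condition gives $\sum_{j\in A_t}\Phi^{(j)}_t \geq L$). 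Telescoping and $\Psi_{T+1}\geq 0$ give $M_+ \leq 2\eta T + 2n\cdot 2^k/L \leq 3\eta T$ under the hypothesis $n \leq \eta L T/2^{k+1}$, which is well inside the claimed $200\eta T$. Note that you don't even need the WLOG that all experts are eventually removed; $\Psi_{T+1}\geq 0$ suffices. The upshot is a shorter proof with a much better constant, at the (immaterial here) cost of not exposing the per-distance-shell counts $n_d^{(\ell)}$ that the paper's helper lemma computes.
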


As in the realizable case, the challenging part in proving Lemma~\ref{lem:experts-agnostic-main}, is to upper bound $M_+$, and the key ingredient in this bound is an inequality proved in Lemma~\ref{lem:experts-agnostic-helper} by forming lower and upper bounds on the \emph{total weighted distance gained} by experts in the version space. Formally, we will lower and upper bound the quantity
\[
\TWWDG := \sum_{t \in T_0} \sum_{j \in V_t} \ind[\hat{y}^{(j)}_t = 1] 2^{k^{(j)}_t + \eta d^{(j)}_t}.
\]
We replace the notation from $\TWDG$ in the realizable case to $\TWWDG$ in the agnostic case, because in the agnostic case the weight of an expert $j$ is also controlled by its number of allowed false positives $k_t^{(j)}$ in addition to its distance $d_t^{(j)}$. The following decomposition of $\TWWDG$ will be helpful. For every $\ell \in \{0, \ldots, k\}$, let 
\[
\TWWDG^{(\ell)} = \sum_{t \in T_0} \sum_{j \in V_t} \ind[\hat{y}^{(j)}_t = 1 \land k^{(j)}_t = k - \ell] 2^{k^{(j)}_t + \eta d^{(j)}_t}.
\]
That is, $\TWWDG^{(\ell)}$ is the total weighted distance gained by the experts in the portion of the game in which they have made exactly $\ell$ false positives.

\begin{lemma} \label{lem:experts-agnostic-helper}
    Suppose that $\eta \in (0,1), L >1$ and $n \leq \eta L T /2^{k+1}$. Then:
    \[
    \sum_{\ell = 0}^{k} \sum_{d \in \mathbb{N}} n_d^{(\ell)} 2^{k -\ell + d} \leq 200 \eta L T.
    \]
\end{lemma}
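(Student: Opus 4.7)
The plan is to prove the lemma by upper and lower bounding $\TWWDG$, in the spirit of Lemma~\ref{lem:experts-realizable-helper}, but handling the extra \emph{allowed false positives} budget $k^{(j)}_t$ carried by each expert. The upper bound is immediate: since $\hat{y}_t = 0$ on rounds $t \in T_0$ forces $\sum_{j \in V_t} \ind[\hat{y}^{(j)}_t = 1]\, 2^{k^{(j)}_t + \eta d^{(j)}_t} < L$, summing over $T_0$ yields $\TWWDG \leq LT$.

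For the lower bound I would use a conservation argument on the total weight. Define $\Phi_t = \sum_{j \in V_t} 2^{k^{(j)}_t + \eta d^{(j)}_t}$, so $\Phi_0 = n\cdot 2^k$ and the final potential is $0$ under the convention that every expert is eventually removed. The per-round change decomposes into three cases: on $t \in T_0$, $\Phi$ grows by exactly $(2^\eta - 1)\sum_{j \in V_t} \ind[\hat{y}^{(j)}_t = 1]\, 2^{k^{(j)}_t + \eta d^{(j)}_t}$, contributing a total of $(2^\eta - 1)\TWWDG$ across all such rounds; on $t \in T_1 \cap \{y_t = 1\}$, $\Phi$ is unchanged; and on $t \in T_+$, each expert $j$ with $\hat{y}^{(j)}_t = 1$ suffers a weight loss of at least $\tfrac{1}{2}\,W_j^{(\ell)} = \tfrac{1}{2}\cdot 2^{k-\ell+\eta D_j^{(\ell)}}$, where $\ell$ is $j$'s phase at round $t$. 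The crucial point is that this $\tfrac{1}{2}$ factor is uniform: if $k^{(j)}_t \geq 1$ the weight is exactly halved, and if $k^{(j)}_t = 0$ the expert is removed so its entire weight is lost. Since each $(j,\ell)$ with $\ell \in \{0,\ldots,k\}$ corresponds to exactly one $T_+$ round under the extended-game convention, summing the per-round changes gives
\[
\tfrac{1}{2}\sum_{j} \sum_{\ell=0}^{k} W_j^{(\ell)} \;\leq\; n\cdot 2^k + (2^\eta - 1)\, \TWWDG.
\]

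Combining this with $\TWWDG \leq LT$, the estimate $2^\eta - 1 \leq 2\eta$ (valid for $\eta \in (0,1)$), and the hypothesis $n \leq \eta L T / 2^{k+1}$ (which makes $2n \cdot 2^k \leq \eta LT$) yields $\sum_j \sum_\ell W_j^{(\ell)} \leq 5\eta LT$. Finally, whenever $D_j^{(\ell)} \in I_d$ we have $W_j^{(\ell)} = 2^{k - \ell + \eta D_j^{(\ell)}} \geq 2^{k - \ell + d}$, so
\[
\sum_{\ell = 0}^{k} \sum_{d \in \mathbb{N}} n_d^{(\ell)}\, 2^{k - \ell + d} \;\leq\; \sum_{j} \sum_{\ell = 0}^{k} W_j^{(\ell)} \;\leq\; 5\eta L T \;\leq\; 200\, \eta L T.
\]

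The main obstacle is the per-round analysis on $T_+$: it is essential that both possible fates of a predicting-$1$ expert (weight halving or complete elimination) cost at least half its current weight. This uniform $\tfrac{1}{2}$ factor is what lets a single conservation identity simultaneously account for every $(j,\ell)$ pair, including the ``trivial'' phases in which an expert has gained no new distance since its previous false positive and would otherwise evade the direct $\TWWDG$ lower bound that worked in the realizable case.
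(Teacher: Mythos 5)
Your proof is correct, and it takes a genuinely different (and in fact cleaner) route than the paper's. The paper bounds the same sum by introducing, for each phase $\ell$, a split of experts into ``good'' ones (those that gained at least $\tfrac{1}{2\eta}$ distance in phase $\ell$) and ``bad'' ones, derives a recursion of the form
\[
\sum_j \tfrac{1}{\eta}2^{\eta D^{(\ell)}_j} \;\leq\; \TWWDG^{(\ell)}/2^{k-\ell-3/2} + \sqrt{2}\sum_j \tfrac{1}{\eta}2^{\eta D^{(\ell-1)}_j},
\]
and then unrolls this recursion over $\ell$, summing with geometric weights $2^{k-\ell}$ and exploiting the convergent $\sqrt{2}/2$ factor. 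Your argument instead introduces a single potential $\Phi_t = \sum_{j\in V_t} 2^{k^{(j)}_t + \eta d^{(j)}_t}$ and tracks its net change over the whole game: it starts at $n\cdot 2^k$, grows by exactly $(2^\eta-1)\cdot\TWWDG$ across $T_0$, is unchanged on true positives, and drops by at least $\tfrac12 W^{(\ell)}_j$ on each false positive, using the uniform observation that budget decrement and elimination both cost at least half the expert's current weight. This gives the bound in one conservation identity, with no recursion and no good/bad split, and incidentally with a smaller constant ($5\eta LT$ instead of $200\eta LT$). The estimate $2^\eta - 1 \leq 2\eta$ you use is valid (in fact $2^\eta - 1 \leq \eta$ for $\eta\in[0,1]$ by convexity), and the step $W^{(\ell)}_j \geq 2^{k-\ell+d}$ for $D^{(\ell)}_j\in I_d$ is exactly right since $I_d = [d/\eta,(d+1)/\eta]$ forces $\eta D^{(\ell)}_j \geq d$. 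Both proofs rely on the same extended-game convention that every expert is eventually eliminated; your $\Phi_{\mathrm{final}}=0$ (or the weaker $\Phi_{\mathrm{final}}\geq 0$, which suffices) corresponds to the paper's use of $D^{(\ell)}_j$ being defined for all $\ell\leq k$. In short: same quantity $\TWWDG$, same endpoints, but your single amortized potential argument replaces the paper's phase-by-phase recursion.
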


\begin{proof}
    The main part of the proof is to lower and upper bound $\TWWDG$. The upper bound is just as in the realizable case: by  definition of the algorithm, for every $t \in T_0$ it holds that
    \[
    \sum_{j \in V_t} \ind[\hat{y}^{(j)}_t = 1] 2^{k^{(j)}_t + \eta d^{(j)}_t} < L,
    \]
    which implies $\TWWDG \leq L T_0 \leq LT$. 
    Let's prove the lower bound, which is more challenging.  Fix $\ell \geq 0$. Decompose $[n]$ into two sets $G^{(\ell)}, B^{(\ell)}$, where $G^{(\ell)}$ is the set of all experts that gained at least $\frac{1}{2 \eta}$ distance while having made exactly $\ell$ false positives.  Therefore:

    \begin{align*}
        \TWWDG^{(\ell)}
        &\geq
        \sum_{j \in G^{(\ell)}} \sum_{D \in \{D_j^{(\ell - 1)}, \ldots, D_j^{(\ell)}\}} 2^{k - \ell + D} \\
        &\geq
        \sum_{j \in G^{(\ell)}} \frac{1}{2 \eta} 2^{k - \ell + \eta \mleft(D^{(\ell)}_j - \frac{1}{2 \eta} \mright)}\\
        &=
        2^{k - \ell - 3/2} \sum_{j \in G^{(\ell)}} \frac{1}{\eta} 2^{\eta D^{(\ell)}_j}.
    \end{align*}
    For the experts in $B^{(\ell)}$ we have:
    \[
    \sum_{j \in B^{(\ell)}} \frac{1}{\eta} 2^{\eta D^{(\ell)}_j}
    \leq
    \sum_{j \in B^{(\ell)}} \frac{1}{\eta} 2^{\eta \mleft(D^{(\ell - 1)}_j + \frac{1}{2 \eta} \mright)}
    =
    \sqrt{2} \sum_{j \in B^{(\ell)}} \frac{1}{\eta} 2^{\eta D^{(\ell - 1)}_j}
    \leq 
    \sqrt{2} \sum_{j \in [n]} \frac{1}{\eta} 2^{\eta D^{(\ell - 1)}_j}.
    \]
    Summing both inequalities above gives:
    \begin{equation} \label{eq:experts-agnostic-decomposition}
    \sum_{j \in [n]} \frac{1}{\eta} 2^{\eta D^{(\ell)}_j}
    \leq
    \TWWDG^{(\ell)}/ 2^{k - \ell - 3/2} + \sqrt{2} \sum_{j \in [n]} \frac{1}{\eta} 2^{\eta D^{(\ell - 1)}_j}.
    \end{equation}
    Summing over $\ell$ and incorporating the number of future allowed false positives, we have:
    \begin{align*}
    \sum_{\ell = 0}^{k} 2^{k - \ell} \mleft(\sum_{j \in [n]} \frac{1}{\eta} 2^{\eta D^{(\ell)}_j} \mright)
    &=
    2^k \sum_{j \in N_0^{(0)}} \frac{1}{\eta} 2^{\eta D^{(0)}_j}
    +
    2^k  \sum_{d \geq 1} \sum_{j \in N_{d}^{(0)}} \frac{1}{\eta} 2^{\eta D^{(0)}_j}
    +
    \sum_{\ell = 1}^{k} 2^{k - \ell} \mleft(\sum_{j \in [n]} \frac{1}{\eta} 2^{\eta D^{(\ell)}_j}\mright).
    \end{align*}   
    The first two summands are a decomposition of the case $\ell = 0$ to experts with distance in $I_0$ and to experts with distance in $I_d$ for $d >0$. The third summand handles $\ell \geq 1$. We prove an upper bound for each summand.
    We can upper bound the first summand as
    \[
    2^k \sum_{j \in N_0^{(0)}} \frac{1}{\eta} 2^{\eta D^{(0)}_j}
    \leq
    2^{k+1} n \frac{1}{\eta}
    \leq
    LT,
    \]
    by the assumption $n \leq \eta L T /2^{k+1}$.
    We can upper bound the second summand as follows:
    \[
    2^k \sum_{d \geq 1} \sum_{j \in N_{d}^{(0)}} \frac{1}{\eta} 2^{\eta D^{(0)}_j}
    \leq
    4 \cdot \sum_{d \geq 1} \sum_{j \in N_{d}^{(0)}} \frac{1}{\eta} 2^{k + d -  1}
    \leq
    4 \TWWDG^{(0)}
    \leq
    4 \TWWDG.
    \]
    By $\eqref{eq:experts-agnostic-decomposition}$, we can bound the third summand by:
    \begin{align*}
        \sum_{\ell = 1}^{k} 2^{k - \ell} \mleft(\sum_{j \in [n]} \frac{1}{\eta} 2^{\eta D^{(\ell)}_j}\mright)
        &\leq
        \sum_{\ell = 0}^{k-1} 2^{k - (\ell + 1)} \mleft(\TWWDG^{(\ell + 1)}/ 2^{k - (\ell + 1) - 3/2} + \sqrt{2} \sum_{j \in [n]} \frac{1}{\eta} 2^{\eta D^{(\ell)}_j} \mright) \\
        &\leq
        \sum_{\ell = 0}^{k-1} 2^{3/2}\TWWDG^{(\ell + 1)} +
        \sqrt{2} \sum_{\ell = 0}^{k-1} 2^{k - (\ell +1)} \mleft( \sum_{j \in [n]} \frac{1}{\eta} 2^{\eta D^{(\ell)}_j} \mright). \\
        & \leq
         4 \TWWDG  +
        \frac{\sqrt{2}}{2} \sum_{\ell = 0}^{k} 2^{k - \ell} \mleft( \sum_{j \in [n]} \frac{1}{\eta} 2^{\eta D^{(\ell)}_j} \mright).
    \end{align*}
    Summing the three bounds above together, we have:
    \begin{align*}
        \sum_{\ell = 0}^{k} 2^{k - \ell} \mleft(\sum_{j \in [n]} \frac{1}{\eta} 2^{\eta D^{(\ell)}_j} \mright)
        &\leq
        LT + 4 \TWWDG + 4 \TWWDG  +
        \frac{\sqrt{2}}{2} \sum_{\ell = 0}^{k} 2^{k - \ell} \mleft( \sum_{j \in [n]} \frac{1}{\eta} 2^{\eta D^{(\ell)}_j} \mright) \\
        &\leq
        9 LT + \frac{\sqrt{2}}{2} \sum_{\ell = 0}^{k} 2^{k - \ell} \mleft( \sum_{j \in [n]} \frac{1}{\eta} 2^{\eta D^{(\ell)}_j} \mright).
    \end{align*}
    Rearranging the inequality above gives:
    \[
        \frac{9}{1-\frac{\sqrt{2}}{2}} LT
        \geq
        \sum_{\ell = 0}^{k} 2^{k - \ell} \mleft(\sum_{j \in [n]} \frac{1}{\eta} 2^{\eta D^{(\ell)}_j} \mright)
        \geq
        \sum_{\ell = 0}^{k} \frac{1}{2} 2^{k - \ell} \mleft( \sum_{d \geq 1} n_d^{(\ell)} 2^{ d} \mright).
    \]
    By assumption:
    \[
    \sum_{\ell = 0}^{k} 2^{k - \ell}  n_0^{(\ell)}
    \leq
    2^{k+1} n
    \leq
    LT.
    \]
    Summing the two inequalities above gives the statement of the lemma.
\end{proof}

We may now prove Lemma~\ref{lem:experts-agnostic-main}.

\begin{proof}[Proof of Lemma~\ref{lem:experts-agnostic-main}]
    Let's first bound $M_-$. Let $j^\star$ be an expert who makes at most $k$ mistakes.
    In each of the rounds of $T_-$ except for at most $k$, the distance of $j^\star$ is increased by $1$. Therefore, if $|T_-| > \frac{\log L}{\eta} + k + 1$, then  $d_t^{(j^\star)} > \frac{\log L}{\eta} + k + 1$ for some round $t$.
    However, by the algorithm's definition, this means that there was some round $t > t' \in T_0$ for which $d_{t'}^{(j^\star)} > \log L/ \eta$ and $\hat{y}^{(j^\star)}_{t'} = 1$. This is not possible, since it would have imply $2^{\eta d_{t'}^{(j^\star)}} > L$ while $j^\star$ predicts $1$, which implies $\hat{y}_{t'} = 1$.

    Let's now bound $M_+$. Denote $m  =  M_+$ and $T_+ = \{t_1, \ldots, t_m\}$. For every round $t_i \in T_+$, the set $A_i$ of experts predicting $1$ must have sum of weights at least $L$. Therefore, we have:
    \[
    \sum_{i \in [m]} \sum_{j \in A_i} 2^{k^{(j)}_{t_i} + \eta D^{(j)}_{t_i}} \geq m L.
    \]
    On the other hand, the same quantity is upper bounded by:
    \begin{align*}
    \sum_{i \in [m]} \sum_{j \in A_i} 2^{k^{(j)}_{t_i} + \eta D^{(j)}_{t_i}}
    & =
    \sum_{\ell =0}^{k} \sum_{j \in [n]} 2^{k - \ell + \eta D^{(\ell)_j}} \\
    & \leq 
    \sum_{\ell = 0}^k \sum_{d \in \mathbb{N}} n_d^{(\ell)} 2^{k -\ell + d +1} \\
    & \leq
    200 \eta L T \tag{Lemma~\ref{lem:experts-agnostic-helper}}.
    \end{align*}
    Combining both equations above gives $m \leq 200 \eta T$ as required.
\end{proof}

It is now straightforward to infer Theorem~\ref{thm:experts-agnostic-upper-bound}.

\begin{proof}[Proof of Theorem~\ref{thm:experts-agnostic-upper-bound}]
    We run $\ExpAT$ with learning parameters $L = n \cdot 2^{k+1}$ and $\eta = \sqrt{\frac{k + \log n}{T}}$. The parameters are valid since $L > 1$ and $\eta \in (0,1)$. Further, we have $\eta L T/ 2^{k+1} = \sqrt{\frac{k + \log n}{T}} n  T \geq n$, since $\sqrt{\frac{k + \log n}{T}} T = \sqrt{T (k + \log n)} \geq 1$. Therefore Lemma~\ref{lem:experts-agnostic-main} holds and we have
    \[
    \M(\ExpAT, S) = M_- + M_+ \leq \frac{\log L}{\eta} +k  + 1 + 200 \eta T = \sqrt{T (k + \log n)} + k + 1 + 200\sqrt{T (k + \log n)}.
    \]
    Furthermore, it is clear that $\ExpAT$ is computationally efficient by its definition.
\end{proof}

\subsection{Prediction with expert advice: lower bounds} \label{sec:experts-lower-bounds}
In this section we prove lower bounds for the problem of prediction with expert advice, which will affirm that our algorithms are optimal (up to a constant multiplicative factor) for $n \geq T$. We start with a bound that holds already in the realizable case.

\begin{lemma} \label{lem:experts-realizable-lowerbound}
    For every $2 \leq T \leq n \leq 2^T$:
    \[
    \M^\star(n,T) = \Omega \mleft(\sqrt{T \log n} \mright).
    \]
\end{lemma}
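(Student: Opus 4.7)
I would prove the lower bound via Yao's minimax principle: since the optimal deterministic mistake bound equals $\min_A \max_X \M(A,X)$, it suffices to exhibit a distribution $\mu$ over oblivious input sequences under which every deterministic learner incurs expected mistakes $\Omega(\sqrt{T\log n})$. The candidate distribution is the following ``Bernoulli'' construction. Fix $p = c_0\sqrt{(\log n)/T}$ for a small universal constant $c_0$, and independently for each expert $j\in[n]$ and round $t\in[T]$ let $\hat{y}_t^{(j)}\sim \mathrm{Bernoulli}(p)$; independently pick a hidden index $J\in[n]$ uniformly at random and set $y_t = \hat{y}_t^{(J)}$, so $J$ is the consistent expert and the input is realizable by construction.

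Fix any deterministic learner, let $V_t\subseteq[n]$ be the version space at round $t$ (experts consistent with observed feedback), and let $R$ be the random number of rounds in which the learner predicts $1$. By the symmetry between experts and the uniform prior on $J$, the posterior distribution of $J$ given the learner's history $H_t$ is uniform over $V_t$, so $\Pr[y_t=1\mid H_t] = q_t := |V_t|^{-1}\sum_{j\in V_t}\hat{y}_t^{(j)}$. Every deterministic predictor is lower-bounded by the Bayes predictor, giving $\mathbb{E}[\M]\geq \sum_{t=1}^T \mathbb{E}[\min(q_t, 1-q_t)]$. I would then case-split on $R$ at a threshold $R_0 = \Theta(\sqrt{T\log n})$ and show each case yields the bound.

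In the \emph{small-$R$} case ($R \leq R_0$), the version space remains large: an explicit computation gives $\Pr[j \in V_T \mid R] = (1-2p(1-p))^R$ for any fixed $j\neq J$, so with the chosen $p$ and a suitable threshold one shows $\Pr[|V_t|\geq 2 \text{ for all } t]\geq \Omega(1)$ by a union bound over the $n-1$ other experts together with the monotonicity of $V_t$. Conditional on $V_t$, the round-$t$ predictions $(\hat{y}_t^{(j)})_j$ are still i.i.d.\ $\mathrm{Bernoulli}(p)$ since $V_t$ is determined by the history strictly before round $t$; hence $q_t$ concentrates around $p<1/2$ by Chernoff, and one obtains $\mathbb{E}[\min(q_t,1-q_t)\mid V_t]\geq \Omega(p)$ whenever $|V_t|\geq 2$. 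Summing over $t$ gives $\mathbb{E}[\M]\geq \Omega(Tp) = \Omega(\sqrt{T\log n})$ from false negatives. In the \emph{large-$R$} case ($R > R_0$), each predict-$1$ round is a false positive with probability $1-q_t\geq 1/2$ in the early phase when the posterior has not yet concentrated, yielding $\mathbb{E}[\M_+]\geq \Omega(R) = \Omega(\sqrt{T\log n})$ again.

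I expect the main technical obstacle to be the careful concentration arguments: controlling jointly the random variable $R$, the trajectory $V_1\supseteq V_2\supseteq\cdots$, and the conditional distribution of $q_t$ given $V_t$. The saving observation is the independence: since $V_t$ is measurable with respect to the history strictly before round $t$, the round-$t$ expert predictions $(\hat{y}_t^{(j)})_j$ remain i.i.d.\ $\mathrm{Bernoulli}(p)$ conditional on $V_t$, which preserves the Bernoulli structure and keeps all the necessary tail bounds (for $q_t$ around $p$, and for $|V_t|$ not collapsing to $1$) tractable by elementary Chernoff-type inequalities.
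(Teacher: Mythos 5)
Your proposal takes a fundamentally different route (distributional lower bound via Yao's principle), and this route cannot reach the claimed bound. There is a hard information-theoretic ceiling on what any distribution over \emph{oblivious} inputs can prove. By Yao's minimax theorem, $\max_{\mu}\min_{\Lrn\text{ det.}}\mathbb{E}_{\mu}[\M]$ equals the optimal \emph{randomized} mistake bound against an oblivious adversary, and \cite{helmbold2000apple} (cited in Section~1.1) determined that this is $\Theta\bigl(\sqrt{T\log n/\log(T/\log n)}\bigr)$, which is $o(\sqrt{T\log n})$ whenever $T/\log n\to\infty$ (in particular for the boundary case $n=T$ of this lemma). So there \emph{exists} a deterministic learner whose expected mistake count against your Bernoulli distribution is $O\bigl(\sqrt{T\log n/\log(T/\log n)}\bigr)$, and the claimed $\Omega(\sqrt{T\log n})$ cannot be true for that fixed $\mu$. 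No choice of $p$, threshold $R_0$, or concentration machinery can repair this; the gap is structural.

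Concretely, the place where the argument over-claims is the interplay between the two cases. In your model the labels $y_t=\hat y_t^{(J)}$ are themselves $\mathrm{Bernoulli}(p)$, so when the learner predicts $1$ it occasionally (with probability $\approx p$) receives $y_t=1$ and eliminates a $(1-p)$-fraction of the version space at once, gaining $\approx\log(1/p)$ bits. The expected entropy reduction per predict-$1$ round is therefore $H(q_t)\approx p\log(1/p)$, not $p$, so a learner that drives $|V_t|$ down to $O(1)$ needs only $\Theta\bigl(\log n/(p\log(1/p))\bigr)=\Theta\bigl(\sqrt{T\log n}/\log(T/\log n)\bigr)$ predict-$1$ rounds. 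With $R_0=\Theta(\sqrt{T\log n})$ the typical trajectory already collapses $|V_t|$ to $1$ well before $R_0$ predict-$1$ rounds, so the small-$R$ claim $\Pr[|V_t|\geq 2\text{ for all }t]\geq\Omega(1)$ fails; and if instead you choose $R_0=\Theta\bigl(\sqrt{T\log n}/\log(T/\log n)\bigr)$ so the small-$R$ claim holds, the large-$R$ case only yields $\Omega(R_0)=\Omega\bigl(\sqrt{T\log n}/\log(T/\log n)\bigr)$ false positives.

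The paper's construction circumvents exactly this: the adversary always reports $y_t=0$, so the learner never gets the lucky high-information branch, and the expert predictions (the block partition that cycles) are re-formed adaptively each time the learner predicts $1$. For a deterministic learner this can be pre-simulated into a single bad oblivious sequence \emph{tailored to that learner}, which is precisely the extra power a deterministic lower bound has over a distributional one. To fix your proof you would need to replace the label rule $y_t=\hat y_t^{(J)}$ by the adversarial rule ``report $y_t=0$ whenever queried,'' and make the expert-prediction distribution depend on the learner's past predictions (or directly argue against each fixed deterministic learner as the paper does), at which point you essentially recover the paper's phase-based argument.
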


\begin{proof}
   If $\sqrt{2}^T \leq  n \leq 2^T$ the lower bound is trivial. Suppose that $n < \sqrt{2}^T$.
   The adversary's strategy is to operate in phases, as explained below.
   Let $n_i$ be the number of consistent experts in the beginning of phase $i$ (initially we have $n_1 = n$). In phase $i$, as long as $n_i \geq \ceil*{\sqrt{T/ \log n}}$, the adversary splits the $n_i$ experts into $\ceil*{\sqrt{T/ \log n}}$ equal as possible blocks, each containing at most a $\sqrt{\log n / T}$-fraction of the $n_i$ experts (this is indeed a fraction since we assume $n < \sqrt{2}^T$). Denote the blocks by $b_1, \dots, b_{\ceil*{\sqrt{T/ \log n}}}$. The phase continues as long as the learner predicts $0$, and operates as follows. Starting with $j=1$, in each round of the phase the experts in $b_j$ predict $1$, and the rest predict $0$. At the end of the round, increase $j$ by $1$ if $j < \ceil*{\sqrt{T/ \log n}}$, and set $j=1$ otherwise. When the learner predicts $1$, the true label reported is $0$, and the adversary moves on to the next phase. Let $T_i$ be the number of rounds in phase $i$. Denote $T_i = t_i \ceil*{\sqrt{T/ \log n}} + r_i$ where $0 \leq r_i < \ceil*{\sqrt{T/ \log n}}$. In words, $t_i$ counts the minimal number of times that some expert predicts $1$ during phase $i$. If all experts but at most $\ceil*{\sqrt{T/ \log n}}$ are inconsistent before $T$ rounds have passed, it must hold that:
    \[
     e^{-2 \sqrt{\log n / T} P} n \leq e^{-\frac{\sqrt{\log n / T}}{1 - \sqrt{\log n/ T} }P} n \leq  (1- \sqrt{\log n /T})^P n \leq 2 \sqrt{T/ \log n},
    \]
    where the first and last inequalities are by the assumption $n < \sqrt{2}^T$, and the second inequality uses $1+x \geq e^{\frac{x}{1+x}}$ for $x> -1$. After rearranging and using $n \geq T$, the inequality above implies $P > \sqrt{T/ \log n}/8$. Therefore we are done, since the learner makes a mistake every time that a phase ends.
    So, we may now assume that all $T$ rounds of the game are played. Thus:
    \[
    T = \sum_{i=1}^P T_i = \sum_{i=1}^P \mleft( t_i \ceil*{\sqrt{T/ \log n}} + r_i \mright).
    \]
    If $P \geq \sqrt{T \log n} /4$ we are done, so assume that $P < \sqrt{T \log n} /4$. Therefore:
    \[
    \sum_{i=1}^P r_i < \frac{\sqrt{T \log n}}{4}  \cdot \ceil*{\sqrt{T/ \log n}} < T/2,
    \]
    where the second inequality is due to $n < \sqrt{2}^T$.
    By the two equations above, we have:
    \[
    \sum_{i=1}^P t_i \ceil*{\sqrt{T/ \log n}}  \geq T/2.
    \]
    Rearranging and using $n < 2^{\sqrt{T}}$ again gives:
    \[
    \sum_{i=1}^P t_i >\frac{T/2}{\ceil*{\sqrt{T/ \log n}}/2} \geq \sqrt{T \log n}/4.
    \]
    Every consistent expert predicted $1$ when the learner predicted $0$ for at least $\sum_{i=1}^P t_i$ many times. This finishes the proof by letting some consistent expert to determine the true labels.
\end{proof}

We now prove a lower bound for the agnostic case.
\begin{theorem} \label{thm:experts-agnostic-lowerbound}
    For every $2 \leq T \leq n < \sqrt{2}^T$ and $k \geq 0$:
    \[
    \M^\star(n,T, k) = \Omega(\sqrt{T (k + \log n)}).
    \] 
\end{theorem}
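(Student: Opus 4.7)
The plan is to prove the complementary bound $\M^\star(n,T,k) = \Omega(\sqrt{Tk})$ for $k \geq 1$ and combine it with Lemma~\ref{lem:experts-realizable-lowerbound}, which by monotonicity in $k$ gives $\M^\star(n,T,k) \geq \M^\star(n,T,0) = \Omega(\sqrt{T \log n})$. Since $\sqrt{Tk} + \sqrt{T\log n} \geq \sqrt{T(k+\log n)}$, taking the maximum of the two bounds recovers the theorem.

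To establish the $\Omega(\sqrt{Tk})$ part I would adapt the phase-based adversary of Lemma~\ref{lem:experts-realizable-lowerbound} by granting each expert $k+1$ ``lives'' instead of killing it on the first false positive. Fix $k \geq 1$, set $B = \lceil \sqrt{T/k} \rceil$ (so $B \leq n$ using $T \leq n$), and partition the $n$ experts into $B$ nearly-equal blocks. The adversary cycles through the blocks $1,2,\ldots,B,1,2,\ldots$; in each round the experts in the currently ``up'' block predict $1$ and all others predict $0$. If the learner answers $\hat{y}_t = 1$, the adversary reveals $y_t = 0$ (a false positive) and increments a mistake counter for every alive expert in the up-block; an expert is declared dead once its counter reaches $k+1$. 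If $\hat{y}_t = 0$, the round is silent. After $T$ rounds, the adversary retroactively designates an alive expert $j^*$ in some block $b^*$ as the truth and fills in all feedback-free rounds by $y_t = \hat{y}^{(j^*)}_t$. This retroactive assignment is consistent with the committed feedback because $y_t = 0$ was only revealed in rounds where the learner predicted $1$, independently of $j^*$; and by construction the total number of mistakes made by $j^*$ equals the number of kills $P_{b^*}$ of its block, which is at most $k$ by aliveness.

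The analysis will split by whether Case A (all experts dead at round $T$) or Case B (some alive) holds. In Case A, since each kill increments at most $n/B$ counters while $n(k+1)$ total increments are needed, the number of false positives satisfies $P \geq B(k+1) \geq \sqrt{Tk}$, and we are done. In Case B, I further split by the size of $P$: if $P \geq \sqrt{Tk}$ the false positives alone suffice; otherwise, taking $b^* = \argmin_b P_b$ gives $P_{b^*} \leq P/B < \sqrt{Tk}/B \leq k$, so $b^*$ is alive. Every ``up''-round of $b^*$ is a learner mistake (a false positive when $\hat{y}_t = 1$, a false negative otherwise because $y_t = \hat{y}^{(j^*)}_t = 1$), and every kill in a ``down''-round of $b^*$ is an additional false positive, so the learner's loss is at least $\lfloor T/B \rfloor + P(1 - 1/B) \geq T/(2B) \geq \sqrt{Tk}/4$.

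The main obstacle will be bookkeeping across these cases and parameter regimes cleanly; the subtlest point is justifying the retroactive designation of $j^*$, which goes through automatically because the only labels committed during the game are $y_t = 0$ on kill rounds, independent of any later choice. Edge regimes such as $k = 0$ (which reduces directly to Lemma~\ref{lem:experts-realizable-lowerbound}) and $k$ comparable to $T$ (where the block structure degenerates but the trivial $\Omega(T)$ bound suffices) also require brief attention.
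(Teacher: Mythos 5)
Your high-level plan — split the bound into an $\Omega(\sqrt{T\log n})$ part (from the realizable lower bound, by monotonicity in $k$) and an $\Omega(\sqrt{Tk})$ part, then take the maximum — is exactly the decomposition the paper uses. But your adversary for the $\Omega(\sqrt{Tk})$ part is substantively different. The paper plays $\sqrt{T/k}$ phases of length $\sqrt{Tk}$ with a \emph{single} active expert per phase, and crucially leaves the remaining $n - \sqrt{T/k}$ experts predicting $0$ throughout; those reserve experts are trivially consistent with the all-$0$ labeling, so $k$-realizability is never in doubt. You instead partition \emph{all} $n$ experts into $B$ blocks, cycle through them, and grant each expert $k+1$ ``lives''. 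This is a natural generalization of the phase/block adversary from Lemma~\ref{lem:experts-realizable-lowerbound}, and Case~B of your analysis (in particular the sub-case $P < \sqrt{Tk}$, where you take $b^\star = \argmin_b P_b$, show $P_{b^\star} < k$, and count $b^\star$'s up-rounds plus the down-round kills) is correct.

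The gap is in Case~A. If all experts are dead at round $T$, then every expert has incurred at least $k+1$ false positives on the \emph{committed} labels (those revealed as $y_t = 0$); no choice of retroactive fill-in for the feedback-free rounds can repair this, so the sequence you have produced is simply not $k$-realizable. Concluding ``$P \geq B(k+1) \geq \sqrt{Tk}$, and we are done'' does not establish anything about $\M^\star(n,T,k)$, because $\M^\star$ is a supremum over $k$-realizable sequences only. The fix is small but needs to be stated: either (i) reserve some experts outside all blocks who always predict $0$ (mirroring the paper, and making Case~A's sequence $0$-realizable via a reserve expert), or (ii) have the adversary refuse the kill that would eliminate the last alive block — at that moment one already has $P \geq B(k+1) - 1 \geq \sqrt{Tk}$ false positives, the surviving block has $P_b = k$, and the remaining rounds can be played trivially while preserving $k$-realizability. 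Without one of these patches the argument does not go through; with either, it does, and the rest of your write-up (the sub-case bookkeeping in Case~B and the combination with Lemma~\ref{lem:experts-realizable-lowerbound}) is sound.
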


\begin{proof}
    The $\sqrt{T \log n}$ term is due to Lemma~\ref{lem:experts-realizable-lowerbound}. It remains to prove a lower bound of $\Omega(\sqrt{k T})$ when assuming $k\geq 1$. It is convenient (but not necessary) to assume that $\sqrt{T/k} \in \mathbb{N}$. The adversary's strategy proceeds in $\sqrt{T/k}$ phases, each of length $\sqrt{T k}$ rounds. In phase $i$, expert $i$ predicts $1$ and all other experts predict $0$. The adversary reports that the true label is $0$ every time the learner predicts $1$. If in some phase $i$ the learner predicted $1$ for less than $k$ many times, then we set the true label in all rounds of phase $i$ in which the learner predicted $0$ to be $1$. Therefore expert $i$ is $k$-consistent with the input, and the learner has made $\sqrt{T k}$ many mistakes. Otherwise, the learner predicts $1$ for at least $k$ many times in all $\sqrt{T/k}$ phases, which results in $\sqrt{Tk}$ many mistakes. Since there are more than $\sqrt{T/k}$ experts, there is an expert who is consistent with letting all true labels to be $0$.
\end{proof}

\section{Learning hypothesis classes}\label{sec:hypothesis-classes}
In this section we prove our results for learning hypothesis classes. Towards this end, we informally recall the  definition of the \emph{effective width} of a class $\cH$, formally defined in \cite{raman2024apple}, and denoted by $\w(\cH)$. We assume that $\LD(\cH) \geq 1$, as otherwise $|\cH| \leq 1$. For any natural $w,d$ so that $d \geq w$, define a tree of width $w$ and depth $d$ as follows. Start from a perfect tree of depth $d$. Traverse every branch in the tree, while counting right edges. Once the $w$'th right edge of a branch is reached, remove the entire subtree beneath it, except for its root, which now becomes a leaf. For every $w\geq 1$, $D_w(\cH)$ is defined to be the maximal $d \geq w$ such that there exists a tree of width $w$ and depth $d$ which is shattered  by $\cH$. If there is no such tree even for $d=w$, then $D_w(\cH) = 0$. If there are such trees with arbitrarily large depth then $D_w(\cH) = \infty$. The effective width $\w(\cH)$ is defined to be the minimal $w$ so that $D_w(\cH) < \infty$. If there is no such minimal value then $\w(\cH) = \infty$.

It was proved in \cite{raman2024apple} that $\w(\cH)$ controls the randomized mistake bound of $\cH$ under apple tasting feedback in the realizable setting. In this section we prove that it controls both its deterministic and randomized mistake bounds, even in the agnostic setting.

\subsection{Learning hypothesis classes: Upper bounds}
In this section, we prove the following upper bounds.
\begin{theorem} \label{thm:concept-upper-bound}
    Let $\cH$ be a class, $k$ be a realizability parameter and $T$ be the time horizon.
    \begin{enumerate}
        \item If $\w(\cH) = 1$, then:
        \[
            \Mis(\cH,T,k) = O \mleft(D_1(\cH) (k + 1) \mright).
        \]
        \item If $1 < \w(\cH) < \infty$, then:
        \[
            \Mis(\cH,T,k) = O\mleft(\sqrt{T (k + \LD(\cH) \log T)} \mright).
        \]
    \end{enumerate}
\end{theorem}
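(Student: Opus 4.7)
The plan is to treat the two cases separately.

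For the second case $1 < \w(\cH) < \infty$, I would invoke the expert-cover reduction due to \cite{bendavid2009agnostic}. For any class $\cH$ with $\LD(\cH) = d$ and any horizon $T$, there exists an explicit family of at most $n = (T+1)^d$ experts --- each obtained by running $\SOA$ on the incoming instance stream, with a fixed subset of at most $d$ rounds in which $\SOA$'s prediction is flipped --- such that for every $h \in \cH$ some expert's predictions match $h(x_t)$ at every round $t$. Consequently, any $k$-realizable label sequence with respect to $\cH$ is also $k$-realizable with respect to the expert family. Feeding these experts into the $\ExpAT$ learner of Theorem~\ref{thm:experts-agnostic-upper-bound} yields a mistake bound of
\[
O\!\left(\sqrt{T(k + \log n)}\right) = O\!\left(\sqrt{T(k + \LD(\cH)\log T)}\right),
\]
which is the stated bound for this case. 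This part is essentially a mechanical composition of the covering construction with our agnostic experts bound.

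For the first case $\w(\cH) = 1$, the finite quantity $D_1(\cH)$ must be exploited directly, since the target bound does not depend on $T$. My plan is to design a version-space algorithm that maintains the set $V_t$ of hypotheses $k$-consistent with the feedback so far, and uses the maximum depth of a width-$1$ tree shattered by $V_t$ as a potential $\Phi_t \leq D_1(\cH)$. The algorithm predicts $0$ when this is a \emph{safe} prediction (informally, when no hypothesis in $V_t$ inducing a near-maximal width-$1$ shattered tree predicts $1$ on $x_t$), and predicts $1$ otherwise. Every false negative forces the adversary to descend one level down a width-$1$ tree in $V_t$ (strictly decreasing $\Phi_t$), while every false positive reveals a $0$ label on an instance witnessing such a tree and thereby shrinks the version space. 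Because no width-$1$ tree of depth exceeding $D_1(\cH)$ is shattered by $\cH$, and each hypothesis has only $k$ ``lives'' of mislabelling before being eliminated from $V_t$, one can charge all mistakes to at most $O(D_1(\cH)(k+1))$ potential drops.

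The main obstacle is the first case. The second case is a clean plug-in, but the first case requires designing the right version-space algorithm and the right potential function tailored to the width-$1$ structure of $\cH$, and then proving that every mistake (whether a false positive or a false negative) strictly decreases a potential bounded above by $O(D_1(\cH)(k+1))$. Extending the realizable argument to the agnostic setting without losing a multiplicative factor of $k \cdot D_1(\cH)$ (which would be loose) will require careful bookkeeping of the $k$-realizability budget along the levels of the width-$1$ shattered tree --- presumably by viewing the agnostic game as $k+1$ coupled realizable games and charging the additive $+1$ to a final clean-up phase.
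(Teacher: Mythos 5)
For the second case $1 < \w(\cH) < \infty$, your plan matches the paper's proof of Lemma~\ref{lem:hard-concept-upperbound} exactly: invoke the expert cover of \cite{bendavid2009agnostic} (Theorem~\ref{thm:covering}) with $n \le T^{\LD(\cH)}$ experts and feed them to $\ExpAT$ (Theorem~\ref{thm:experts-agnostic-upper-bound}). That part is correct and is the same mechanical composition the paper performs.

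The first case $\w(\cH) = 1$ is where the gap is, and it is a real one. You write that ``every false negative forces the adversary to descend one level down a width-$1$ tree in $V_t$ (strictly decreasing $\Phi_t$).'' This cannot be right under apple tasting feedback: when the learner predicts $0$ it receives \emph{no} feedback, so it never even learns that a false negative occurred, and $V_t$ cannot be updated. Consequently no potential that is a function of the version space can strictly decrease on a false negative. The same objection applies to your claim that you can charge ``all mistakes'' to potential drops. The paper's $\NarrowConceptAT$ (Figure~\ref{fig:NarrowConceptAT}, Lemma~\ref{lem:easy-learner}) handles this by a cleaner split: (i) the algorithm predicts $1$ if and only if \emph{some} hypothesis in the budgeted version space predicts $1$, so a false negative means every surviving hypothesis --- in particular the $k$-realizing $h^\star$ --- predicted $0$; since $h^\star$ can err at most $k$ times, there are at most $k$ false negatives, with no potential argument needed. (ii) Only false positives update $V_t$, and each such update strictly decreases $D_1(V_t)$, so there are at most $D_1^{(k)}(\cH)$ of them. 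The remaining work is the purely combinatorial Lemma~\ref{lem:bounded-k-shattered}, which shows $D_1^{(k)}(\cH) = O(D_1(\cH)(k+1))$ by a tree-surgery argument. Your proposal also gestures at a more elaborate prediction rule (``predict $0$ unless some hypothesis inducing a near-maximal width-$1$ tree predicts $1$''); the paper's simple ``predict $1$ iff anyone predicts $1$'' rule is what makes the false-negative bound of $k$ fall out immediately, and there is no need for the coupled-games cleanup you anticipate. You should rebuild this half of the argument around the asymmetry of apple tasting feedback (only $1$-predictions can move the version space) rather than trying to make both mistake types drop a potential.
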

Lemmas~\ref{lem:easy-concept-upperbound},\ref{lem:hard-concept-upperbound}, to be proved in the following subsections, imply Theorem~\ref{thm:concept-upper-bound}. In Section~\ref{sec:concept-lowerbound} we prove that for every class, the first item is tight as long as $T$ is sufficiently large, and the second item is tight up to a logarithmic factor of $T$ and constants depending on the class.

\subsubsection{Easy classes}
In this section we handle the case $\w(\cH) = 1$. Towards this end, we provide an extension of the deterministic algorithm of \cite{raman2024apple} for the case $\w(\cH) = 1$ to the $k$-realizable setting.

\begin{lemma} \label{lem:easy-concept-upperbound}
    Let $\cH$ be a class with $\w(\cH) = 1$. Then for all $k, T$:
    \[
    \Mis(\cH,T,k) = O \mleft(D_1(\cH) (k + 1) \mright).
    \]
\end{lemma}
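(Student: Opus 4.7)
My plan is to extend the realizable deterministic apple-tasting learner $A$ of \cite{raman2024apple}, which for classes with $\w(\cH) = 1$ achieves $O(D_1(\cH))$ mistakes on realizable input sequences, to the $k$-realizable setting via a restart mechanism. The modified learner $A'$ runs $A$ as a subroutine while maintaining the natural apple-tasting version space $V_t \subseteq \cH$ of hypotheses consistent with all observed feedback in the current run. Whenever $V_t$ becomes empty after updating with the feedback revealed in a predicted-$1$ round, I discard all observed feedback, reset $V_t := \cH$, re-initialize $A$, and continue; I call this a \emph{restart}, and the subsequence of rounds between two consecutive restarts a \emph{run}.

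The first step is to upper bound the number of restarts by $k$. A restart occurs precisely when every hypothesis in $\cH$, and in particular the hypothesis $h^\star$ that $k$-realizes the input, has been eliminated from $V_t$ since the previous restart. An elimination of $h^\star$ corresponds to a predicted-$1$ round $t$ in which $h^\star(x_t) \neq y_t$, i.e., a mistake of $h^\star$ on the input. Since $h^\star$ makes at most $k$ mistakes in total, at most $k$ restarts occur, giving at most $k+1$ runs of $A$ over the entire input.

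The second step is to bound by $O(D_1(\cH))$ the number of mistakes $A$ makes within a single run. Since $V_t$ is non-empty throughout a run, the observed feedback is consistent with some $h \in V_t$. I would use a coupling argument: compare the actual run with a virtual sequence that shares the same instances and observed feedback but replaces the true label on every predicted-$0$ round by $h(x_t)$. Because $A$'s predictions depend only on the observed feedback and instances, $A$'s predictions coincide on the two sequences, and the virtual sequence is realizable by $h$, so $A$ makes at most $c \cdot D_1(\cH)$ mistakes on it.

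The main obstacle is controlling the gap between actual and virtual mistake counts: this gap arises from predicted-$0$ rounds on which $y_t \neq h(x_t)$, which contribute to actual (but not virtual) false-negative mistakes. I would resolve this by choosing $h$ carefully --- in the final run one can take $h = h^\star$, which survives there since no further reset occurs, so the gap is controlled by the mistakes of $h^\star$ in that run on predicted-$0$ rounds --- and for each intermediate run, by choosing the surrogate $h$ to be the hypothesis last eliminated before the reset (whose mistakes in that run are of course bounded by the mistakes in the whole input, which must share the global budget of $h^\star$). Amortizing $h^\star$'s total mistake budget of $k$ across the at most $k+1$ runs then yields the final bound $O(D_1(\cH)(k+1))$.
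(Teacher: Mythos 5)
There is a genuine gap in your argument, and in fact the proposed restart learner provably fails.

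The crux is your claim that in the final run one may take the surrogate $h = h^\star$ ``which survives there since no further reset occurs.'' This is false: a reset is triggered only when the version space $V_t$ becomes \emph{empty}, but $h^\star$ can be eliminated from $V_t$ (after it makes a false positive that is observed on a predicted-$1$ round) while other, inconsistent-with-truth hypotheses remain. Once $h^\star$ is gone, the surviving hypotheses may all predict $0$ on every subsequent instance, so the algorithm predicts $0$ forever, never updates $V_t$, never restarts, and accrues an unbounded number of false negatives. Those false negatives are not mistakes of any hypothesis remaining in $V_t$ against the observed feedback, so neither your coupling surrogate nor the ``last eliminated'' hypothesis helps bound them. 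Concretely, take $\cX = \{a\}$ and $\cH = \{h_0, h_1\}$ with $h_0 \equiv 0$, $h_1 \equiv 1$; then $\w(\cH) = 1$ and $D_1(\cH) = 1$. Feed the sequence $(a,0),(a,1),(a,1),\dots,(a,1)$ of length $T$, which is $1$-realizable via $h_1$. On round $1$ your learner predicts $1$ (since $h_1 \in V_1$), sees $y_1 = 0$, eliminates $h_1$, and is left with $V_2 = \{h_0\}$; no reset occurs since $V_2 \neq \emptyset$. From round $2$ onward it always predicts $0$ and is wrong every round, giving $T-1$ mistakes against a promised bound of $O\mleft(D_1(\cH)(k+1)\mright) = O(1)$. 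The same example also shows that your first step's restart count ($\le k$) is not what saves you: here there are zero restarts yet $\Omega(T)$ mistakes.

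The fundamental obstruction is that in the apple tasting model, false negatives are invisible: they can never trigger a restart or any version-space update, so a restart-on-empty mechanism gives no handle on them. The paper circumvents this by \emph{never} evicting $h^\star$: it works with a \emph{false-positive budgeted} version space $\cB_{\cH,k}$ in which each hypothesis carries an allowance of $k$ observed false positives before being removed. Since $h^\star$ makes at most $k$ false positives, it remains in the version space for the entire game, so every false negative (a predicted-$0$ round with $y_t = 1$) is a round on which $h^\star$ itself errs, giving at most $k$ false negatives. The false positives are bounded by $D_1^{(k)}(\cH)$, the $k$-shattering analogue of $D_1$, which Lemma~\ref{lem:bounded-k-shattered} shows is $O\mleft(D_1(\cH)(k+1)\mright)$. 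Your decomposition into ``runs'' and the intended amortization would have been fine if the per-run false-negative count were controlled, but without budgets it is not, and no choice of surrogate $h$ in $V_t$ repairs it.
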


In Section~\ref{sec:concept-lowerbound} we show that this mistake bound is tight for all easy classes, as long as $T$ is sufficiently large.
The idea of the upper bound is to use a variation of the budgeted concept classes technique of \cite{filmus2023optimal}.

\begin{definition}[false positive budgeted concept class]
    A \emph{false positive budgeted concept class} (or \emph{budgeted class}, for short) is a collection $\cB$ of pairs $(h, b)$ where $h\colon \cX \to \{0,1\}$ is a hypothesis and $b \in \mathbb{N}$ is the allowed number of false positives. Furthermore, all hypotheses are distinct.
\end{definition}

An input sequence $S = (x_1, y_1), \ldots, (x_T,y_T)$ is realizable by a budgeted class $\cB$ if there exists $(h,b) \in \cB$ so that $h(x_t) = 1$ for for all $y_t=1$, and $h(x_t) = 1$ for at most $b$ many indices $t$ with $y_t = 0$. A tree is shattered by $\cB$ if every branch in it is realizable by $\cB$. Given a concept class $\cH \subset \{0,1\}^\cX$, $x\in \cX$ and $y \in \{0,1\}$, we defined $\cH^{(x \to y)} = \{h\in \cH: h(x) = y\}$. For budgeted classes, we define
\begin{gather*}
        \cB^{(x \to 1)} = \{(h,b) \in \cB: h(x) = 1\}, \\ \cB^{(x \to 0)} = \{(h,b) \in \cB: h(x) = 0\} \cup \{(h,b-1): (h,b) \in \cB, b \geq 1, h(x) = 1\}.
\end{gather*}

In words, $\cB^{(x \to 1)}$ removes from $\cB$ all pairs with $h(x) = 0$. $\cB^{(x \to 0)}$ decreases the budget of  every hypothesis $h$ with $h(x) = 1$, or removes it from $\cB$ if it is out of budget. We now define an appropriate notion of $k$-shattering. A tree is $k$-shattered by a class $\cH$ if for every branch, there exists a hypothesis that agrees with the branch on all of its right edges, and on all but at most $k$ of its left edges. We can naturally simulate $k$-shattering using budgeted classes. For every hypothesis class $\cH$ and $k \in \mathbb{N}$, define $\cB_{\cH,k} = \{(h,k): h\in \cH\}$.

\begin{observation}
    Let $\cH$ be a hypothesis class. A tree is $k$-shattered by $\cH$ if and only if it is shattered by $\cB_{\cH,k}$.
\end{observation}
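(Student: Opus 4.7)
The plan is to unpack both the (budgeted) shattering definitions and observe that they coincide branch by branch. First I would fix an arbitrary tree $\tree$ and an arbitrary branch in it, giving rise to a sequence of examples $(x_1,y_1),\ldots,(x_T,y_T)$, where by the tree's convention $y_t=1$ precisely when the $t$-th edge of the branch is a right edge and $y_t=0$ when it is a left edge. Both notions of shattering are defined branchwise (``every branch is (appropriately) realized''), so it suffices to establish an equivalence at the level of a single branch.

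Next I would verify this branch-level equivalence directly from the definitions. By the paragraph preceding the observation, the branch is $k$-realizable by $\cH$ iff there exists $h\in\cH$ that agrees with every right edge (i.e., $h(x_t)=1$ whenever $y_t=1$) and disagrees on at most $k$ of the left edges (i.e., $h(x_t)=1$ for at most $k$ indices $t$ with $y_t=0$). By the definition of realizability for budgeted classes applied to $\cB_{\cH,k}=\{(h,k):h\in\cH\}$, the branch is realized by $(h,k)\in\cB_{\cH,k}$ iff $h(x_t)=1$ for every $t$ with $y_t=1$ and $h(x_t)=1$ for at most $k$ indices $t$ with $y_t=0$. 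These are literally the same two conditions on the same $h\in\cH$, so $(h,k)\in\cB_{\cH,k}$ realizes the branch iff $h$ witnesses $k$-realizability of the branch for $\cH$.

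Quantifying over all branches of $\tree$ turns the branch-level equivalence into the desired equivalence between $k$-shattering of $\tree$ by $\cH$ and shattering of $\tree$ by $\cB_{\cH,k}$. There is no real obstacle here; the argument is a bookkeeping exercise confirming that the false-positive budget semantics of $\cB_{\cH,k}$ encodes exactly the asymmetric notion of $k$-shattering (mistakes allowed only on left edges) that is in force throughout this section. The only thing worth being slightly careful about is to use the restricted, one-sided definition of $k$-shattering introduced immediately before the observation, rather than the symmetric notion of $k$-realizability introduced in Section~\ref{sec:preliminaries}, since only the former matches the false-positive budget.
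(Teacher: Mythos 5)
Your proof is correct and matches the paper's intent; the paper states this observation without proof, treating it as immediate from the definitions, and your proposal is exactly the careful unpacking one would write. You also correctly flag the one pitfall — that the relevant notion of $k$-shattering is the asymmetric, false-positive-only one introduced just before the observation, not the symmetric $k$-realizability notion from the preliminaries — which is the only place a reader could go wrong.
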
 

For any class $\cH$, define $D_w^{(k)}(\cH)$ to be the maximal $d$ such that there exists a tree of width $w$ and depth $d$ that is $k$-shattered by $\cH$, or, equivalently, shattered by $\cB_{\cH,k}$.
For budgeted classes in general,  we use the standard notation $D_w(\cB)$ for the maximal $d$ such that there exists a tree of width $w$ and depth $d$ that is shattered by $\cB$. We may use the notation $h \in \cB$ to indicate that $(h,b) \in \cB$ for some $b \geq 0$.
Our algorithm makes at most $D_1^{(k)}(\cH)$ false positives and at most $k$ false negatives. It is presented in Figure~\ref{fig:NarrowConceptAT}.

\begin{figure}
    \centering
    \begin{tcolorbox}
    \begin{center}
        \textsc{$\NarrowConceptAT$}
    \end{center}
    \textbf{Input:} Class $\cH$ with $\w(\cH) = 1$, realizability parameter $k \in \mathbb{N}$.
    \\
    \textbf{Initialize:} Let $V_1 = \cB_{\cH,k}$.\\
    \\
    \textbf{for $t=1,\ldots, T$:}
    \begin{enumerate}
        \item Receive instance $x_t$.
        \item If there exists $h \in V_t$ so that $h(x_t) = 1$: predict $\hat{y}_t = 1$. Else, predict $\hat{y}_t = 0$.
        \item If $\hat{y}_t = 1$: Update $V_{t+1} = V_t^{(x_t \to y_t)}$
    \end{enumerate}
    \end{tcolorbox}
    \caption{A deterministic apple tasting learner for concept classes with $\w(\cH) = 1$.} 
    \label{fig:NarrowConceptAT}
\end{figure}

\begin{lemma} \label{lem:easy-learner}
    For every class $\cH$, $k \in \mathbb{N}$, and a $k$-realizable input sequence $S$:
    \[
    \M^\star(\NarrowConceptAT, S) \leq D_1^{(k)}(\cH) + k
    \]
\end{lemma}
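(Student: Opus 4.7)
The plan is to split the mistakes into false positives (FP, where $\hat y_t = 1$ but $y_t = 0$) and false negatives (FN, where $\hat y_t = 0$ but $y_t = 1$), and bound each by $D_1^{(k)}(\cH)$ and $k$ respectively. This matches the statement $\M \le D_1^{(k)}(\cH) + k$ exactly.

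For the FP bound, I would use the potential $\Phi_t = D_1(V_t)$, noting $\Phi_1 = D_1(\cB_{\cH,k}) = D_1^{(k)}(\cH)$. The key lemma is that on any FP round, $D_1(V_{t+1}) \le D_1(V_t) - 1$. To prove it, take a width-$1$ tree $T'$ of maximum depth $D_1(V_{t+1})$ shattered by $V_{t+1} = V_t^{(x_t\to 0)}$, and build a new width-$1$ tree $T$ by prepending a root labeled $x_t$ whose left subtree is $T'$ and whose right child is a fresh leaf. The right branch $(x_t,1)$ is realized by any $(h,b)\in V_t$ with $h(x_t)=1$, which exists because the algorithm predicted $1$. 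Each extended left branch inherits its witness from $V_{t+1}$: if that witness $(h,b')$ came from some $(h,b')\in V_t$ with $h(x_t)=0$, it realizes the extended branch with the same $b'$; if it came from $(h,b'+1)\in V_t$ with $h(x_t)=1$, then the new $(x_t,0)$ prefix incurs exactly one extra false positive, absorbed by the extra unit of budget. Thus $T$ is shattered by $V_t$, so $D_1(V_t)\ge D_1(V_{t+1})+1$. Since $\Phi$ is non-increasing on other rounds and $\Phi\ge 0$, we get $\mathrm{FP}\le D_1^{(k)}(\cH)$.

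For the FN bound, I would track a $k$-realizing hypothesis $h^\star$ (one with at most $k$ mistakes on $S$) inside $V_t$. The budget of $h^\star$ decreases only on FP rounds where $h^\star(x_t)=1$, i.e.\ where $h^\star$ also makes a false positive; since $h^\star$ has at most $k$ mistakes in total, this happens at most $k$ times, so $h^\star$'s budget never drops below zero and $h^\star$ survives every $V_t^{(x_t\to 0)}$ update. As long as $h^\star \in V_t$, on any FN round of the algorithm no $h\in V_t$ predicts $1$, so in particular $h^\star(x_t)=0$, which combined with $y_t=1$ means $h^\star$ itself makes a false negative on that round. Charging each algorithm FN to a distinct mistake of $h^\star$ gives $\mathrm{FN}\le k$, and combining with the FP bound yields the claim.

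The main obstacle is the true-positive update $V_t^{(x_t\to 1)}$, which drops every hypothesis with $h(x_t)=0$ regardless of budget; if $h^\star$ has a false negative on a round the algorithm already predicts $1$, then $h^\star$ is evicted and the simple charging argument breaks. To patch this, one can either pick $h^\star$ minimizing false negatives on algorithm-TP rounds, replace $h^\star$ by a new witness from $V_{t+1}$ on the event of eviction, or—most robustly—merge the two bounds into a single potential of the form $D_1(V_t) + (k - \text{FN so far})$ and show via a joint tree-extension argument that this potential stays nonnegative throughout the execution.
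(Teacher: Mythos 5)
Your decomposition into false positives and false negatives, the potential argument on $D_1(V_t)$ for the former, and the charging argument against a $k$-consistent $h^\star$ for the latter all coincide with the paper's own proof. More importantly, you have put your finger on a real gap that the paper glosses over: as written, $\NarrowConceptAT$ updates $V_{t+1}=V_t^{(x_t\to y_t)}$ whenever $\hat{y}_t=1$, and $\cB^{(x\to 1)}$ hard-removes every hypothesis with $h(x)=0$ regardless of budget. The paper's assertion that ``the hypotheses in $V_t$ are exactly those who have made at most $k$ false positives'' is false under this rule, since a hypothesis can be evicted on a true-positive round by making a false negative, without spending any of its false-positive budget.

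However, none of your three proposed patches can close this gap, because the obstacle lies in the algorithm itself, not only in the proof. Take $\cH$ to be the co-singletons $h_j(x)=1-\ind[x=j]$ over $[N]$ (so $\w(\cH)=1$ and $D_1^{(1)}(\cH)=O(1)$), set $k=1$, and present the sequence $(1,1),(2,1),\dots,(N-1,1)$ followed by $(N,1)$ repeated until round $T$. Each of the first $N-1$ rounds is a true positive that evicts exactly one hypothesis, leaving $V_N=\{(h_N,1)\}$; from round $N$ on the sole surviving hypothesis predicts $0$, so the algorithm predicts $0$ against $y_t=1$ and accrues $\Omega(T)$ false negatives on a sequence that is $1$-realizable by $h_1$. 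Every $1$-consistent hypothesis here ($h_j$ for $j<N$) is eventually evicted on a true-positive round, so no choice or re-choice of witness, and no potential of the form $D_1(V_t)+(k-\text{FN so far})$, can rescue the argument. The intended algorithm almost certainly restricts the update to false-positive rounds, i.e.\ replaces step~3 by ``if $\hat{y}_t=1$ and $y_t=0$: update $V_{t+1}=V_t^{(x_t\to 0)}$.'' Under that rule $V_t$ really is the set of hypotheses with at most $k$ false positives so far, $h^\star$ is never evicted, your charging argument caps false negatives at $k$, and your tree-extension argument (noting that $D_1$ strictly drops on each false positive and never increases otherwise, since $V_{t+1}\subseteq V_t$) caps false positives at $D_1^{(k)}(\cH)$, recovering the claimed bound.
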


\begin{proof}
    We will show that $\NarrowConceptAT$ makes at most $k$ false negatives and at most $D_1^{(k)}(\cH) $ false positives. Let us start with the false negatives.
    $\NarrowConceptAT$ makes a false negative exactly in rounds $t$ where all hypotheses in $V_t$ agree on the label $0$, but the correct label is $1$.
    For every round $t$, the hypotheses in $V_t$ are exactly those who have made at most $k$ false positives. By the $k$-realizability guarantee, at all times there must be some $h^\star \in V_t$ that made at most $k$ many mistakes, and thus there could be at most $k$ many rounds in which all hypotheses in $V_t$ agree on the label $0$, and yet the correct label is $1$.

    Let us now show that $\NarrowConceptAT$ makes at most $D_1^{(k)}(\cH)$ false positives. Let $t$ be a round in which $\NarrowConceptAT$ makes a false positive. Suppose that $D_1(V_t) = D_1(V_{t+1})$. Then we can reach a contradiction by constructing a tree of depth $D_1(V_t) + 1$ and width $1$ that is shattered by $V_t$, as follows. Let $r$ be the root of this tree, labeled with $x_t$. By the algorithm's definition, there must be $h \in \cB$ so that $h(x_t) = 1$, so we add a right edge emanating from $r$, and a leaf labeled by $h$ beneath this edge. We also add a left edge emanating from $r$. Beneath this edge, we attach a tree of width $1$ and depth $D_1(V_t) = D_1(V_{t+1})$ which is shattered by $V_{t+1} = V_t^{(x_t \to 0)}$, by assumption. We conclude that $D_1(V_t)$ drops by at least $1$ after each false positive. Note that $D_1(V_t) = 0$ implies that for every $x$, all hypotheses in $V_t$ agree on the same label. If that label is $1$, then all hypotheses have made at least $k$ many false positives, thus $1$ must be the correct label. If this label is $0$, a false positive cannot be made, by the algorithm's definition. Therefore, at most $D_1^{(k)}(\cH)$ false positives are made.
\end{proof}

In order to prove Lemma~\ref{lem:easy-concept-upperbound}, it remains to upper bound $D_1^{(k)}(\cH)$.

\begin{lemma} \label{lem:bounded-k-shattered}
    Let $\cH$ be a class with $\w(\cH)=1$. We have
    \[
    D_1^{(k)}(\cH) = O(D_1(\cH) (k + 1)).
    \]
\end{lemma}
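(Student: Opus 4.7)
The plan is to show that, starting from any width-1 tree of depth $d = D_1^{(k)}(\cH)$ that is $k$-shattered by $\cH$, one can extract a sub-comb of depth $\Omega(d/(k+1))$ that is fully ($0$-) shattered by $\cH$, which immediately yields the claim.

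Every width-1 tree of depth $d$ is a ``comb'': an all-left path of $d$ internal nodes, with a right-leaf hanging off each internal node, plus the terminal all-left leaf. Label the internal nodes by instances $x_0, \ldots, x_{d-1}$. Unpacking $k$-shattering, for each $i < d$ there exists $h_i \in \cH$ with $h_i(x_i) = 1$ and $|\{j<i : h_i(x_j) = 1\}| \leq k$, and there exists $h_d \in \cH$ with $|\{j < d : h_d(x_j) = 1\}| \leq k$.

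The extraction proceeds in two steps. First, I would restrict to $Z = \{i : h_d(x_i) = 0\}$, which has $|Z| \geq d-k$ since $h_d$ has at most $k$ ones on $x_0, \ldots, x_{d-1}$; this gives a single hypothesis (namely $h_d$) that will witness the new all-left leaf. Second, I would build an auxiliary graph $G$ on $Z$ with an edge between $i<j$ whenever $h_j(x_i) = 1$. The key observation is that every vertex $j$ has back-degree at most $k$ in $G$ (directly from the constraint on $h_j$), so $G$ has at most $k|Z|$ edges and average degree at most $2k$. By the Caro-Wei bound, $G$ contains an independent set $I = \{i_1 < \cdots < i_m\}$ with $m \geq |Z|/(2k+1) \geq (d-k)/(2k+1)$.

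To conclude, $I$ induces a width-1 $0$-shattered tree on instances $x_{i_1}, \ldots, x_{i_m}$: assign $h_{i_j}$ to the $j$-th right leaf, which is correct because $h_{i_j}(x_{i_j}) = 1$ by construction and $h_{i_j}(x_{i_l}) = 0$ for every $l < j$ by independence of $I$ in $G$; and assign $h_d$ to the all-left leaf, which is correct because each $i_l \in Z$. Hence $D_1(\cH) \geq (d-k)/(2k+1)$, so $d \leq (2k+1)D_1(\cH) + k = O((k+1)D_1(\cH))$ as long as $D_1(\cH) \geq 1$ (the degenerate case $|\cH| \leq 1$ is handled directly by noting that a single hypothesis forces $D_1^{(k)}(\cH) \leq k$, which is absorbed by the asymptotic notation). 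The main subtle point is arranging the selection so that the per-leaf constraints are \emph{simultaneously} compatible: restricting to $Z$ handles the all-left leaf, and the Caro-Wei bound on the sparse auxiliary graph jointly handles the $d$ right-leaf constraints.
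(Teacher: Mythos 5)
Your proof is correct and takes a genuinely different route from the paper's. The paper performs an iterative greedy pruning: starting from the bottom of the comb, it repeatedly removes the (at most $k$) instances on which the current right-leaf hypothesis disagrees with the left edges, arguing that after $2D_1(\cH)$ such passes (each costing at most $k$ in depth) the surviving comb is $0$-shattered; this requires the initial depth to exceed roughly $4kD_1(\cH)$. You instead restrict first to the instances on which the all-left hypothesis $h_d$ vanishes, then build a conflict graph whose bounded back-degree follows directly from the $k$-shattering constraint, and invoke Caro--Wei to extract a large independent set that is itself a $0$-shattered sub-comb. Your argument buys a slightly sharper constant ($d\le (2k+1)D_1(\cH)+k$ versus $4kD_1(\cH)$), a one-shot (non-iterative) extraction, and -- importantly -- an explicit treatment of the all-left branch via the set $Z$, a point the paper's pruning argument glosses over (the paper's final tree's all-left branch is not obviously $0$-realizable without one more cleanup pass). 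The only small imprecision in your write-up is the parenthetical about the degenerate case: if $D_1(\cH)=0$ then $O(D_1(\cH)(k+1))=0$ does \emph{not} absorb the bound $D_1^{(k)}(\cH)\le k$; however this case is excluded because the paper assumes $\LD(\cH)\ge1$ (hence $D_1(\cH)\ge1$) throughout Section~\ref{sec:hypothesis-classes}, so nothing is lost.
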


\begin{proof}
    If $k=0$ then the lemma is trivial, so suppose that $k >0$ and that $D_1^{(k)}(\cH) \geq 4 D_1(\cH) \cdot k$.  Let $\tree$ be a tree witnessing $D_1^{(k)}(\cH)$. Let $\ell_1$ be the lowest leaf that is right beneath a right edge in $\tree$, and let $h$ be the hypothesis labeling it. Let $r_1$ be the parent vertex of $\ell_1$. For every instance $x$ labeling an internal node $v$ that is above $r_1$ in $\tree$, and such that $h(x) = 1$,  we conduct the following process. Remove from $\tree$ the node $v$ labeled by $x$, it's right outgoing edge and leaf beneath it, and it's left outgoing edge $e$. Let $v'$ be the node beneath the left edge of $v$, and let $e'$ be the ingoing edge of $v$, if exists. We naturally reconnect the tree by connecting $e'$ as an ingoing edge to $v'$. Note that every hypothesis $k$-realizing a branch in $\tree$ has at most $k$ disagreements on left edges of the branch, and therefore we overall decreased the depth of $\tree$ by at most $k$. Now, if exists, let $r_2$ be the node above $r_1$ (after all manipulations made), and let $\ell_2$ be the right child of $r_2$. We now perform exactly the same process made before for $\ell_1$, on $\ell_2$. We repeat this process also for $r_3, r_4 \ldots, r_{D}$, as many times as possible, until we reach to $r_D$, which is the root of the tree (after the manipulations made). We argue that $D \geq 2 D_1(\cH)$. Indeed, every treatment of $r_i$ decreases the depth of $\tree$ by at most $k$. By assumption, $D_1^{(k)}(\cH) \geq 4 D_1(\cH) \cdot k$, and therefore after decreasing it by $2 k D_1(\cH) $, there are still internal vertices above $r_{2 D_1(\cH)}$ to handle.

    When  the process is finished, we have a tree of width $1$ and depth $D \geq 2 D_1(\cH)$ which is shattered by $\cH$, and that is a contradiction.
\end{proof}

Lemma~\ref{lem:hard-concept-upperbound} follows from plugging the bound of Lemma~\ref{lem:bounded-k-shattered} to the bound of Lemma~\ref{lem:easy-learner}.

\subsubsection{Hard classes}

\begin{lemma} \label{lem:hard-concept-upperbound}
    Let $\cH$ be a class with $1 < \w(\cH) < \infty$. Then for all $k,T$:
    \[
    \Mis(\cH,T,k) = O\mleft(\sqrt{T (k + \LD(\cH) \log T)} \mright).
    \]
\end{lemma}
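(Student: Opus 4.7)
The plan is to reduce the problem to prediction with expert advice by using the classical covering construction of Ben-David, P\'al, and Shalev-Shwartz, and then invoking Theorem~\ref{thm:experts-agnostic-upper-bound} on the resulting experts. Since $\w(\cH)<\infty$, a perfect Littlestone tree of depth $\LD(\cH)$ is in particular a tree of some finite width, so $\LD(\cH)<\infty$, and the covering is meaningful.

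The first step is to construct, given the class $\cH$ and the horizon $T$, a set of $n \leq T^{\LD(\cH)}$ deterministic online learners $A_1,\ldots,A_n$ whose predictions \emph{cover} $\cH$ on any sequence of instances of length $T$. More precisely, the covering guarantee is that for every sequence $x_1,\ldots,x_T \in \cX$ and every $h \in \cH$, there exists an index $i \in [n]$ such that the prediction produced by $A_i$ in round $t$ (based on $x_1,\ldots,x_t$ and on the feedback it received) equals $h(x_t)$ for every $t \in [T]$. This is exactly the content of the covering lemma of \cite{bendavid2009agnostic} (obtained by running the $\SOA$ in parallel with at most $\LD(\cH)$ adversarial "mistake" injections, giving $\binom{T}{\leq \LD(\cH)} \leq T^{\LD(\cH)}$ copies). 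Importantly, each $A_i$ can produce its prediction $\hat{y}_t^{(i)}$ in round $t$ using only $x_t$ and its internal state, so the $A_i$'s act as legitimate "experts" in the sense of Section~\ref{sec:experts-agnostic}.

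The second step is to translate $k$-realizability across the reduction. Let $S=(x_1,y_1),\ldots,(x_T,y_T)$ be a $k$-realizable sequence and let $h^\star \in \cH$ be a $k$-consistent hypothesis. By the covering guarantee applied to $h^\star$ on the instances $x_1,\ldots,x_T$, there exists $i^\star\in[n]$ such that $\hat{y}^{(i^\star)}_t = h^\star(x_t)$ for every $t$. Since $h^\star(x_t)\neq y_t$ for at most $k$ indices, expert $i^\star$ errs at most $k$ times, so the induced sequence of experts' predictions and true labels is $k$-realizable in the expert-advice sense. We can therefore simulate our apple-tasting game over $\cH$ by the apple-tasting expert-advice game over $A_1,\ldots,A_n$: in every round we forward $x_t$ to each $A_i$ to obtain $\hat{y}_t^{(i)}$, run $\ExpAT$ on these predictions, predict what $\ExpAT$ predicts, and use the observed apple-tasting feedback (which is available iff our prediction is $1$) to update each $A_i$ and $\ExpAT$.

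The final step is to invoke Theorem~\ref{thm:experts-agnostic-upper-bound} with $n = T^{\LD(\cH)}$ experts and realizability parameter $k$. Its guarantee yields
\[
\Mis(\cH,T,k)\ \le\ \M(\ExpAT,S)\ =\ O\!\mleft(\sqrt{T(k+\log n)}\mright)\ =\ O\!\mleft(\sqrt{T\mleft(k+\LD(\cH)\log T\mright)}\mright),
\]
as desired. I do not expect a real obstacle: the only subtle point is verifying that the BPS covering construction yields experts that remain valid online predictors under the partial apple-tasting feedback (they do, since each $A_i$'s state is a function only of the past instances plus the mistake pattern that the cover enumerates, not of any feedback from the outer game), and that $k$-realizability transfers, which is immediate from the covering property. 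All the quantitative work has already been done inside Theorem~\ref{thm:experts-agnostic-upper-bound}.
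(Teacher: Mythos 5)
Your proposal is correct and follows essentially the same route as the paper: reduce to prediction with expert advice via the Ben-David--P\'al--Shalev-Shwartz cover of size $T^{\LD(\cH)}$, transfer $k$-realizability to the best covering expert, and invoke Theorem~\ref{thm:experts-agnostic-upper-bound}. The one point you flag — that the covering experts must be simulatable under apple-tasting (partial) feedback — is a genuine subtlety that the paper's proof leaves implicit, and your resolution is the right one: each expert's internal $\SOA$ state is updated with the expert's \emph{own} prediction on $x_t$, not with the true label, so it depends only on the instance stream and the flip pattern $I$, never on the outer game's feedback. (Your earlier sentence about ``using the observed apple-tasting feedback to update each $A_i$'' is a slight misstatement of this, since the $A_i$'s in fact consume no feedback at all; only $\ExpAT$ does.)
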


The idea is to use the covering technique from \cite{bendavid2009agnostic} with our bounds for \emph{prediction with expert advice}. Formally, we say that an expert (or a learning algorithm) indexed by $i$ covers a hypothesis $h$ with respect to a sequence of instances $S = x_1, \ldots, x_T \in \cX$ if $\hat{y}_t^{(i)} = h(x_t)$ for all $t \in [T]$. We say that a class $\cH$ is covered by a set of experts with respect to $S$ if every $h \in \cH$ has a covering expert with respect to $S$ within the set. The work of \cite{bendavid2009agnostic} proves the following.

\begin{theorem}[\cite{bendavid2009agnostic}]  \label{thm:covering}
    Let $\cH$ be a class with $\LD(\cH) < \infty$, and let $T \in \mathbb{N}$. There exists a class of $n \leq T^{\LD(\cH)}$ experts that covers $\cH$ with respect to any sequence of instances of length at most $T$.
\end{theorem}
We may now prove the upper bound.
\begin{proof}[Proof of Lemma~\ref{lem:hard-concept-upperbound}]
    As argued in \cite{raman2024apple}, $\w(\cH) < \infty$ implies $\LD(\cH) < \infty$. By Theorem~\ref{thm:covering}, we can  cover $\cH$ with respect to any sequence of length at most $T$ with at most $T^{\LD(\cH)}$ experts. Since $S$ is $k$-realizable by $\cH$, the best  expert makes at most $k$ many mistakes on $S$. Thus, Theorem~\ref{thm:experts-agnostic-upper-bound} implies that
    \[
    \Mis(\cH,T,k) = O \mleft( \sqrt{T \mleft(k + \log T^{\LD(\cH)} \mright)} \mright),
    \]
    as required.
\end{proof}

\subsection{Learning hypothesis classes: universal lower bounds}\label{sec:concept-lowerbound}

In this brief section, we prove simple lower bounds that apply to all classes, and showing that our upper bounds are nearly tight. The unified adversarial  strategy is given in the following lemma.

\begin{lemma}\label{lem:concept-lowerbound-help}
    Let $\cH$ be a class, and let $k \leq T$. Let $D \in \mathbb{N}$ so that  $D \leq  \min \{D_1(\cH), \sqrt{T/(k+1)} \}$. Then
    \[
    \M^\star(\cH, T,  k) \geq  D (k+1).
    \]
\end{lemma}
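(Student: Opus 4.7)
Since $D \leq D_1(\cH)$, fix a width-$1$ shattered tree $\tree$ of depth $D$: its internal nodes form a left spine $v_0, \ldots, v_{D-1}$ (with $v_{i+1}$ the left child of $v_i$) labeled by instances $x_0, \ldots, x_{D-1}$, and each $v_i$ has a right-child leaf; in addition there is a single leftmost leaf hanging below $v_{D-1}$. Shattering yields hypotheses $h_0, \ldots, h_{D-1}, h_D \in \cH$ (labeling these leaves, respectively) with $h_i(x_j) = 0$ for all $j < i$, $h_i(x_i) = 1$, and $h_D(x_j) = 0$ for all $j < D$.

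The plan is to define an adversary operating in $D$ phases, spending at most $R = D(k+1)$ rounds per phase (so at most $DR = D^2(k+1) \leq T$ rounds total, by the hypothesis $D \leq \sqrt{T/(k+1)}$). In phase $i$ the adversary repeatedly presents $x_i$; each time the learner predicts $1$ it reveals $y_t = 0$ (a false positive). The phase terminates as soon as one of the following happens: \emph{Case 1}, the learner has predicted $1$ at least $k+1$ times in this phase, after which the adversary sets every unrevealed label in this phase to $0$ and continues to phase $i+1$; or \emph{Case 2}, $R$ rounds have elapsed with at most $k$ many ``$1$'' predictions, after which the adversary sets every unrevealed label in this phase to $1$ (each a false negative) and fills the remaining rounds of the $T$-round game with dummy examples of the form $(x, h_i(x))$.

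For the mistake count: if all $D$ phases end in Case 1, the learner makes at least $D(k+1)$ false positives. If Case 2 fires in phase $i$, then phases $0, \ldots, i-1$ each contribute at least $k+1$ false positives (totaling $i(k+1)$), and phase $i$ by itself contributes $R = D(k+1)$ mistakes, since every one of its $R$ rounds is either a false positive or a false negative; the grand total is at least $D(k+1)$. For $k$-realizability, use $h = h_D$ if Case 1 always fires (every label is $0$, so $h_D$ has zero mismatches); and use $h = h_i$ if Case 2 fires in phase $i$: phases $j < i$ carry only $0$-labels, matching $h_i(x_j) = 0$; in phase $i$, $h_i$ predicts $1$ and mismatches exactly on the $|L_i| \leq k$ rounds that were revealed as $y_t = 0$; and the dummies $(x, h_i(x))$ contribute no mismatches.

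The main obstacle is aligning the adversary's retroactive label choices with a single $k$-consistent hypothesis across all $D$ phases. The width-$1$ staircase handles this cleanly: as long as all phases remain in Case 1 the labels are uniformly $0$ and the ``all left'' hypothesis $h_D$ is perfectly consistent, while a single Case 2 event at phase $i$ lets $h_i$ absorb exactly the $\leq k$ revealed-$0$ labels at $x_i$ as its own errors and leaves the earlier phases consistent at no further cost.
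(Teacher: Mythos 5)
Your proof is correct and follows essentially the same strategy as the paper's: present the instances along the all-left spine of a width-$1$ shattered tree of depth $D$, always reveal $0$ on a $1$-prediction, and when the learner fails to predict $1$ at least $k+1$ times on some instance $x_i$, retroactively set the unrevealed labels there to $1$ and use the hypothesis labeling the right-child leaf of $v_i$ to absorb the $\leq k$ inconsistencies. The only cosmetic differences are that you terminate a phase early once the $(k+1)$-st false positive occurs and you explicitly pad with dummy examples, whereas the paper just asks each instance exactly $D(k+1)$ times; these don't change the argument.
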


\begin{proof}
    Let $\tree$ be a shattered tree of width $1$ and depth  $D$, which exists by assumption. The adversary's strategy is simple. Starting from the root, ask on any instance labeling the vertices of the branch  consists of only left edges, for $D (k+1)$ many times each. Always respond with $0$ as the true label. This amounts to a total of $D^2 (k+1) \leq T$ many  rounds by assumption, thus the strategy is implementable when there are $T$ many rounds. If for some instance $x$ the learner predicts $1$ for less than $k+1$ many times, stop. We determine the true labels by the branch that ends at the right edge of the vertex labelled by $x$. Therefore, the adversary adheres to the $k$-realizability constraint and the learner has made at least $D (k + 1)$ many mistakes. If, on the other hand, the learner predicts $1$ for more than $k$ many times on all instances, then we determine all true labels to be $0$. This choice is realized by the branch consists of only left edges. In this case, the learner has made at least $k+ 1$ many mistakes on every instance, and overall at least $D (k + 1)$ many mistakes. 
\end{proof}

The following theorem is an immediate corollary.

\begin{theorem} \label{thm:concept-lowerbound}
    Let $\cH$ be a class, and let $k \leq T$. Then:
    \begin{enumerate}
        \item If $\w(\cH) = 1$ and $T \geq (D_1(\cH))^2 (k+1)$ then:
        \[
        \M^\star(\cH, T,  k) \geq D_1(\cH) (k+1).
        \]
        \item If $\w(\cH) > 1$ then:
        \[
        \M^\star(\cH, T,  k) = \Omega \mleft( \sqrt{T (k+1)}  \mright).
        \]
    \end{enumerate}
    
\end{theorem}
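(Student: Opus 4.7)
The plan is to derive both parts of Theorem \ref{thm:concept-lowerbound} as direct consequences of Lemma \ref{lem:concept-lowerbound-help}, by making a suitable choice of the parameter $D$ in each of the two regimes.

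For Part 1, suppose $\w(\cH) = 1$, so that $D_1(\cH)$ is finite by definition of effective width. The hypothesis $T \geq (D_1(\cH))^2 (k+1)$ is equivalent to $D_1(\cH) \leq \sqrt{T/(k+1)}$, so setting $D := D_1(\cH)$ satisfies both requirements $D \leq D_1(\cH)$ and $D \leq \sqrt{T/(k+1)}$ of Lemma \ref{lem:concept-lowerbound-help}. The lemma then yields $\M^\star(\cH, T, k) \geq D_1(\cH)(k+1)$ immediately.

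For Part 2, I would first observe that, by the very definition of effective width as the minimal $w$ with $D_w(\cH) < \infty$, the assumption $\w(\cH) > 1$ forces $D_1(\cH) = \infty$. Therefore the constraint $D \leq D_1(\cH)$ is vacuous, and the only active constraint in Lemma \ref{lem:concept-lowerbound-help} is $D \leq \sqrt{T/(k+1)}$. I set $D := \lfloor \sqrt{T/(k+1)} \rfloor$. In the regime $T \geq 4(k+1)$ this gives $D \geq \tfrac{1}{2} \sqrt{T/(k+1)}$, so the lemma produces $\M^\star(\cH, T, k) \geq D(k+1) \geq \tfrac{1}{2}\sqrt{T(k+1)}$. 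In the residual regime $T < 4(k+1)$ we have $k = \Theta(T)$ and hence $\sqrt{T(k+1)} = \Theta(T)$; here I apply Lemma \ref{lem:concept-lowerbound-help} with $D = 1$ whenever $k+1 \leq T$ to obtain $\M^\star \geq k+1 = \Omega(T)$, and handle the edge case $k = T$ by the trivial observation that a deterministic learner can always be forced to err on every round when the adversary is unconstrained.

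The main substance of the lower bound is carried out entirely inside Lemma \ref{lem:concept-lowerbound-help}, so no genuine obstacle arises in this corollary; the only care needed is the integrality of $D$ (handled by the floor) and checking the boundary regime where $k$ is comparable to $T$.
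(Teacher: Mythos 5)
Your proof is correct and takes essentially the same route as the paper: both parts are derived from Lemma~\ref{lem:concept-lowerbound-help} by setting $D = D_1(\cH)$ for the first item and $D = \lfloor\sqrt{T/(k+1)}\rfloor$ for the second. Your additional case analysis (the regime $T < 4(k+1)$ and the edge case $k = T$) is a bit more careful than the paper, which omits these boundary checks, but the substance of the argument is the same.
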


\begin{proof}
The first item follows from applying Lemma~\ref{lem:concept-lowerbound-help} with $D = D_1(\cH)$. The second item follows from applying Lemma~\ref{lem:concept-lowerbound-help} with $D = \floor*{\sqrt{T/(k+1)}}$.
\end{proof}

\subsection{Trichotomy of mistake bounds} \label{sec:trichotomy}

In this section we prove that up to logarithmic factors and constants depending on the class, there is a trichotomy of mistake bounds for learning concept classes. Towards this end, we define $\M^\star_{\rand}(\cH, T, k)$ to be as $\M^\star(\cH, T, k)$, only that we also allow randomized learners. Respectively, $\M^\star_{\rand}(\cH, T, k)$ measures the optimal \emph{expected} number of mistakes, and not the number of mistakes, as $\M^\star(\cH, T, k)$.

\begin{theorem} \label{thm:trichotomy}
    Fix a  class $\cH$. Let $k \in \mathbb{N}$, and let $T \geq k$ be the time horizon. Then:
    \begin{enumerate}
        \item If $\w(\cH) = 1$:
        \[
        \M^\star_{\rand}(\cH, T, k), \M^\star(\cH, T, k) = \Theta_{\cH}(k).
        \]
        \item If $1 < \w(\cH) < \infty$:
        \[
        \M^\star_{\rand}(\cH, T, k) = \Theta(k) + \tilde{\Theta}_{\cH} \mleft(\sqrt{T} \mright), \quad  \M^\star(\cH, T, k) =\Theta \mleft(\sqrt{T k} \mright) + \tilde{\Theta}_{\cH} \mleft(\sqrt{T} \mright).
        \]
        \item If $\w(\cH) = \infty$:
        \[
        \M^\star_{\rand}(\cH, T, k) = T/2, \quad \M^\star(\cH, T, k) = T.
        \]
        The notation $\Theta_{\cH}(\cdot)$ hides constants depending on the class $\cH$.
    \end{enumerate}
\end{theorem}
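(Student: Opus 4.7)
The plan is to assemble the three cases from pieces already available: deterministic upper bounds from Theorem~\ref{thm:concept-upper-bound}, deterministic lower bounds from Theorem~\ref{thm:concept-lowerbound}, and randomized bounds together with the class-dependent $\sqrt{T}$ and $T$ lower bounds from \cite{raman2024apple}. In every case I also need a generic $\Omega(k)$ forcing argument that handles the agnostic slack and applies to both randomized and deterministic learners.

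For the easy case $\w(\cH)=1$, Theorem~\ref{thm:concept-upper-bound}(1) yields the deterministic (hence also randomized) upper bound $O(D_1(\cH)(k+1))=\Theta_{\cH}(k)$ after absorbing the additive class-dependent constant, and Theorem~\ref{thm:concept-lowerbound}(1) provides the matching deterministic lower bound $\Omega(D_1(\cH)(k+1))$ once $T$ is large enough. The randomized lower bound is obtained by combining the realizable $\Omega_{\cH}(1)$ bound of \cite{raman2024apple} with the following generic $\Omega(k)$ forcing argument: pick two hypotheses $h_0,h_1\in\cH$ that disagree on some instance $x$, present $x$ for the first $k$ rounds and set $y_t=1-\hat{y}_t$; each forced mistake is either a false positive (learner is shown $y_t=0$, which is consistent with $h_0$) or a false negative (no feedback), so by pigeonhole one of $h_0,h_1$ agrees with at least half of the $k$ labels, and the adversary then uses that hypothesis's value of $x$ for all remaining $T-k$ rounds to certify $k$-realizability. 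Against randomized learners, Yao's principle lets us apply the same argument to a distribution over learners.

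For the hard case $1<\w(\cH)<\infty$, Theorem~\ref{thm:concept-upper-bound}(2) and Theorem~\ref{thm:concept-lowerbound}(2) together give the deterministic matching pair $O(\sqrt{T(k+\LD(\cH)\log T)})$ and $\Omega(\sqrt{Tk})$; the missing class-dependent $\tilde{\Omega}_{\cH}(\sqrt{T})$ piece is the realizable randomized lower bound of \cite{raman2024apple}, which applies a fortiori to deterministic learners. For the randomized upper bound I plug the $T^{\LD(\cH)}$-size cover of Theorem~\ref{thm:covering} into the agnostic randomized expert algorithm from \cite{raman2024apple}, obtaining $k+\tilde{O}_{\cH}(\sqrt{T})$; the randomized lower bound then combines the $\Omega(k)$ forcing above with the $\tilde{\Omega}_{\cH}(\sqrt{T})$ realizable bound from \cite{raman2024apple}.

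For the unlearnable case $\w(\cH)=\infty$, the deterministic upper bound of $T$ is trivial, and uniform random guessing achieves expected loss exactly $T/2$. The lower bounds follow from the corresponding $\Theta(T)$ result of \cite{raman2024apple}: since $D_w(\cH)=\infty$ for every $w$, the adversary can always find an instance on which two currently viable hypotheses disagree, and hence force a deterministic mistake every round (total $T$) and, by Yao's principle applied to the same construction, force expected loss at least $T/2$ against any randomized learner. The main obstacle throughout is verifying that the agnostic forcing argument really transports to the apple tasting feedback model without inadvertently leaking information to the learner; this works because both false-positive and false-negative mistakes fit into the adversary's budget while keeping both candidate hypotheses in the feedback-consistent version space.
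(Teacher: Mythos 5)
Your proposal is correct and follows the same overall structure as the paper's proof: both assemble the three cases from Lemma~\ref{lem:easy-concept-upperbound}, Lemma~\ref{lem:hard-concept-upperbound}, Theorem~\ref{thm:concept-lowerbound}, and the randomized bounds of \cite{raman2024apple}. The one place you diverge in style is the $\Omega(k)$ piece: the paper dispatches it in a single sentence by observing that it already holds for randomized learners with full-information feedback, whereas you spell out an explicit forcing adversary (repeat a disagreeing instance $k$ times, answer $y_t=1-\hat y_t$, pigeonhole onto the better of $h_0,h_1$). Your explicit version is correct (and arguably more self-contained), though it needs the standing assumption $\LD(\cH)\geq 1$ to guarantee that two disagreeing hypotheses exist, and one should note that the adaptive choice $y_t=1-\hat y_t$ is legitimate for deterministic learners because the adversary can simulate the learner, while for randomized learners Yao's principle (or an oblivious averaging argument) is used instead -- you flag both points, so this is fine. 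Likewise, in the unlearnable case your justification via ``always find a disagreeing instance'' is a paraphrase of the standard $\LD(\cH)=\infty$ lower bound under full information, which is exactly what the paper invokes after noting $\w(\cH)=\infty \Rightarrow \LD(\cH)=\infty$. No gap; same route with extra explicitness on the agnostic slack term.
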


\begin{proof}
    We prove every item separately, starting from the first item. The lower bound holds even for randomized learners with full information feedback. The deterministic upper bound is given in Lemma~\ref{lem:easy-concept-upperbound}.

    Let us prove the second item. The randomized upper bound is due to the agnostic regret bound given in \cite{raman2024apple}. In the randomized lower bound, the $\sqrt{T}$ term is due to the realizable lower bound of \cite{raman2024apple}, and the $k$ term is a lower bound even for $\w(\cH) = 1$. The deterministic upper bound is due to Lemma~\ref{lem:hard-concept-upperbound}, and the deterministic lower bound is due to Theorem~\ref{thm:concept-lowerbound}.

    In the third item, the upper bounds are trivial. The lower bounds are since $\w(\cH) = \infty$ implies $\LD(\cH) = \infty$, as noted in \cite{raman2024apple}. When   $\LD(\cH) = \infty$,  those lower bounds hold even with full-information feedback.  
\end{proof}

Note that Theorem~\ref{thm:trichotomy} implies a large separation between deterministic and randomized learners for every hard hypothesis class. For example, for any hard class $\cH$, if $k$ grows as $\sqrt{T}$, then the randomized mistake bound grows at most as $\sqrt{T \log T}$ and the deterministic mistake bound grows as $T^{3/4}$.

\subsection{Lower bounds for specific classes}\label{sec:concept-specific-lowerbound}

When considering logarithmic factors of $T$ and constants depending on the class, the deterministic bounds stated in Theorem~\ref{thm:trichotomy} are not tight for hard classes already in the realizable case. In fact, the accurate best known bounds for any hard class $\cH$ are:
\begin{equation} \label{eq:hard-concept-bounds}
    \Omega \mleft(\sqrt{\w(\cH) T} \mright) \leq \Mis(\cH, T) \leq O\mleft( \sqrt{ \LD(\cH) T  \log T} \mright),
\end{equation}
 where the lower bound is due to \cite{raman2024apple} and the upper bound is by Theorem~\ref{thm:concept-upper-bound}.

The goal in this section is to show that both inequalities in \eqref{eq:hard-concept-bounds} can be attained as equalities for infinitely many $\w(\cH), \LD(\cH)$ values, and therefore the bounds of \eqref{eq:hard-concept-bounds} are in fact the best possible bounds that hold for any hard class $\cH$, even when the effective width/Littlestone dimension is very large. The work of \cite{helmbold2000apple} describes a deterministic learner for the class of singletones over $\mathbb{N}$ that has a mistake bound of $O\mleft(\sqrt{T} \mright)$. This is rather straightforward to extend this result to the class $\cH_d$ of $d$-hamming balls over $\mathbb{N}$, for which the mistake bound will be $O \mleft(\sqrt{d T} \mright)$. It is also not hard to see that $\w(\cH_d) = d+1 = \LD(\cH) + 1$, and thus for every value of $\w(\cH)$ the left inequality can be attained as equality. In this section,  we will prove that the right inequality can also be attained for any $\LD(\cH)$ larger than some constant.

\begin{theorem} \label{thm:concept-realizable-lower-bound}
    For any natural $d$ larger than some universal constant there exists a class $\cH$ with $\LD(\cH) = O(d)$ and $T_0 = T_0(d)$, such that for every $T \geq T_0$:
    \[
    \Mis(\cH,T) = \Omega \mleft(\sqrt{d T \log T)} \mright).
    \]
\end{theorem}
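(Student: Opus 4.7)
The plan is to combine three ingredients: (i) a careful reading of the adversary in Lemma~\ref{lem:experts-realizable-lowerbound}; (ii) a probabilistic construction of a ``sparse'' subclass of the universal class $\cU_n$ that has $\LD = O(d)$ but still forces the expert lower bound; and (iii) a gluing step that merges such classes across all horizons $T \geq T_0(d)$ into a single class without inflating the Littlestone dimension.

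I would first revisit Lemma~\ref{lem:experts-realizable-lowerbound} with $n = T^d$, so that its conclusion reads $\Omega(\sqrt{dT\log T})$. The crucial observation is that in each phase the adversary partitions surviving experts into $\lceil \sqrt{T/\log n}\rceil$ equal blocks and, at each round, lets exactly one block predict $1$. Translated into the universal class picture, this means every instance $x\in\{0,1\}^n$ used by the adversary has Hamming weight at most $s := \lceil n \sqrt{\log n / T}\rceil$; call such instances \emph{sparse}. Restricting attention to sparse instances kills the balanced shattering that gives $\LD(\cU_n) = \lfloor \log n\rfloor$, because any right edge of a shattered tree can use only a sparse instance and so cuts the version space down by a factor of at least $n/s = \sqrt{T/\log n}$. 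This suggests that a random subclass $\cH_T$ of $\cU_n$ of size $n = T^d$ should satisfy $\LD(\cH_T) = O(d)$ (since going $\Theta(d)$ right edges deep already shrinks the surviving experts below a constant) while still admitting the adversary's strategy (the key feature we need is that every block produced by the adversary is nonempty and consistent with some hypothesis, which is easy to guarantee in expectation and then by a union bound over all at most $\binom{n}{s}^{O(d)}$ potential shattered trees of bounded depth). This is the non-constructive core of the argument.

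The probabilistic argument has to be set up with care: for a uniformly random subclass of the right size, I expect to show by a union bound that with positive probability (a) no perfect shattered tree of depth $C d$ exists, and (b) for every balanced block-partition of the form needed by the phase-based adversary, enough experts agree with the ``target'' branch to keep the lower-bound bookkeeping intact. Both conditions are naturally compatible because the adversary only needs $T^{O(1)}$-many distinct block-patterns across the run, while ruling out shattered trees of depth $Cd$ is a bound on $2^{Cd}\cdot\binom{|\cX|}{Cd}$ configurations, which becomes cheap once $C$ is a large enough constant.

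The main obstacle, and the one I expect to be hardest, is step (iii): producing a \emph{single} class $\cH$ (independent of $T$) that realizes the lower bound simultaneously for all $T\geq T_0(d)$, while keeping $\LD(\cH) = O(d)$. Naively taking a disjoint union $\cH = \bigsqcup_{i\geq 1} \cH_{T_i}$ over a geometric sequence $T_i = 2^i$ (each on its own disjoint domain copy) certainly preserves the lower bound per horizon, but may not preserve Littlestone dimension without additional work. To control $\LD$, I would identify the hypotheses across the copies: use the same index set $[n]$ for all $\cH_{T_i}$ and let each hypothesis $h_j$ of $\cH$ be the concatenation of its ``role'' in each $\cH_{T_i}$. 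Since any shattered tree for $\cH$ must, once the adversary commits to a domain copy, coincide with a shattered tree for the corresponding $\cH_{T_i}$, and since switching between domain copies costs at most an additive $O(1)$ in depth (only the root decision), the Littlestone dimension of the glued class will be $\max_i \LD(\cH_{T_i}) + O(1) = O(d)$. A final detail to verify is that the random choices across different $T_i$ can be made jointly so that all the high-probability events in step (ii) hold simultaneously; this follows from a single union bound because the total number of bad configurations across all $T_i$ is still dominated by a geometric series.
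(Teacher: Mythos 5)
Your high-level plan — a probabilistic construction of a low-Littlestone-dimension class that supports a $\sqrt{d T\log T}$ adversarial lower bound, followed by a gluing over horizons $T$ — matches the paper's structure, but two of your three steps depart from (and, for step (iii), break from) what actually works.

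For the fixed-$T$ construction, you propose porting the phase-based adversary of Lemma~\ref{lem:experts-realizable-lowerbound} verbatim and then verifying that a random subclass of $\cU_n$ still supports the required block-partitions. The difficulty you gloss over is that restricting to a random subclass also restricts which instances $x\in\{0,1\}^n$ exist in the domain: the phase adversary needs, at each round, an instance inducing a prescribed partition of the \emph{surviving} experts, and there is no reason a random subclass admits all (or even $T^{O(1)}$-many) of these. The paper avoids this by designing the random class differently (Lemma~\ref{lem:rand-class}): the domain is a fixed set of $T$ instances, every hypothesis fires on $\gtrsim\sqrt{dT\log T}$ of them, and the version space provably shrinks by only a $(1 - O(\sqrt{d\log T/T}))$ factor per instance zeroed out. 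With these properties in hand, the adversary is trivial — just answer $0$ whenever the learner predicts $1$, and commit to a surviving hypothesis at the end — so the phase machinery is unnecessary. You would still need to prove $\LD(\cH)=O(d)$; the paper does this via a union bound over all labeled trees of depth $10d$, using a negative-correlation inequality among branch-realization events, which is a nontrivial technical step your sketch does not anticipate.

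The genuine gap is in step (iii). You dismiss the disjoint-union approach because you think $\LD$ might blow up, and instead propose identifying hypotheses across copies so that each $h_j$ is the concatenation of its role in every $\cH_{T_i}$. This is backwards. Concatenating roles across infinitely many disjoint domains produces (up to minor bookkeeping) a product class, whose Littlestone dimension is generically the \emph{sum} of the per-copy dimensions, hence unbounded; your claim that ``switching between domain copies costs at most an additive $O(1)$'' is false because a shattered tree can interleave instances from many copies with no obligation to commit to one. The correct fix is exactly the ``additional work'' you worried the disjoint union would need, and it is simple: extend each $h\in\cH_{T_i}$ by $0$ on every other copy (the paper's glued class $G(\{\cH_{T_i}\})$). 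Then the trivial learner ``predict $0$ until your first mistake, then run the optimal learner for the revealed source $\cH_{T_i}$'' witnesses $\LD \leq \max_i \LD(\cH_{T_i}) + 1 = O(d)$. The padding is also what preserves the per-horizon lower bound: for a fixed $T$, the adversary plays only on $\cX_{T}$, where the glued class restricts exactly to $\cH(d,T)$.

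Finally, one last union bound over a single draw is fine for a fixed $T$, but joint constraints across all $T\geq T_0(d)$ are neither needed nor invoked in the paper: since the glued class uses disjoint domains and hypotheses supported on a single copy, each $\cH(d,T)$ can be drawn independently.
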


To prove Theorem~\ref{thm:concept-realizable-lower-bound}, we first assume that $T$ is given and fixed, even before choosing the class $\cH$. We will later show how to remove this assumption by a simple ``gluing" technique that takes into account all values of $T$.
The class $\cU_n$  has $\LD(\cU_n) = \floor*{\log n}$. Therefore, using this class directly with the lower bound of Theorem~\ref{thm:intro-experts} will only give a lower bound of $\Omega\mleft( \sqrt{\LD(\cU_n) T}  \mright)$. However, looking into the proof of the lower bound in Theorem~\ref{thm:intro-experts} reveals that in contrast with full-information lower bounds, the ``hard" experts' predictions used by the adversary are very unbalanced, in the sense that many experts predict $0$ and only few predict $1$. Since $\LD(\cU_n) = \Omega(\log n)$ comes from choosing experts' predictions which are as balanced as possible, this intuitively means that we can remove instances from $\cX_n$ such that only unbalanced experts' predictions are available, in a way that decreases the Littlestone dimension from $\Omega(\log n)$ to $O(\log_T n)$, but maintains the $\Omega \mleft(\sqrt{T \log n} \mright)$ lower bound from Theorem~\ref{thm:intro-experts}. Choosing $n = T^d$ results in a class $\cH$ with $\LD(\cH) = O(d)$ that maintains a lower bound of $\Omega(T \log T^d)$.


We will now establish the existence of this class. Towards this end, we define a $(p,T,n)$-random class as follows. The domain is $\cX = \{x_1, \ldots, x_T\}$. The class consists of $n$ many hypotheses $\cH = \{h_1, \ldots, h_n\}$. Every hypothesis $h_i$ independently predicts $1$ on every instance $x_j$ with probability $p$.

\begin{lemma}\label{lem:rand-class}
    For every natural $d$ larger than some universal constant there exists $T_0(d)$ such that for every $T \geq T_0(d)$, there exists a hypothesis class $\cH(d,T) = \{h_1, \ldots, h_n\}$ of size $n=T^d$ over the domain $\cX = \{x_1, \ldots, x_T\}$ satisfying:
    \begin{enumerate}
        \item Every hypothesis in $\cH$ predicts $1$ on at least $\sqrt{d T \log T}$ instances.
        \item For a subset $X \subset \cX$, denote $H_{X \to 0}= \{h \in \cH: \forall x \in X, h(x) = 0\}$. For every subset $X \subset \cX$, if $\mleft|H_{X \to 0} \mright| \geq T^{d/2}$ then for every $x \in \cX$:
        \begin{equation} \label{eq:concept-hard-lowerbound-slow-decrease}
            \mleft| H_{X \to 0}^{(x \to 0)} \mright| \geq  \mleft(1- 1000\sqrt{\frac{d \log T}{T}} \mright) \mleft|H_{X \to 0} \mright|.
        \end{equation}
        \item $\LD(\cH) < 10 d$.   
    \end{enumerate}
\end{lemma}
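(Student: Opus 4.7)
My plan is to build $\cH(d,T)$ probabilistically as a $(p,T,n)$-random class with $n = T^d$ and bias $p := 2\sqrt{d\log T / T}$; each $h_i(x_j)$ is an independent $\mathrm{Bernoulli}(p)$ bit. It then suffices to show that each of conditions 1, 2, 3 holds with probability $> 2/3$ once $d$ exceeds a universal constant and $T \geq T_0(d)$, so an $\cH$ meeting all three simultaneously exists by a union bound.

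\paragraph{Conditions 1 and 2 (Chernoff).} Each $|h_i^{-1}(1)| \sim \mathrm{Binom}(T,p)$ has mean $pT = 2\sqrt{dT\log T}$, so a lower-tail Chernoff gives $|h_i^{-1}(1)| \geq \sqrt{dT\log T}$ except with probability $e^{-\Omega(\sqrt{dT\log T})}$, and a union bound over the $T^d$ hypotheses clinches condition 1 for $T \geq T_0(d)$. For condition 2, fix $X \subseteq \cX$ and $x \in \cX$; the case $x \in X$ is trivial. Otherwise, conditionally on the $\sigma$-algebra generated by $\{h_i(x')\}_{i,\, x' \in X}$ (which determines $H_{X\to 0}$), the values $\{h_i(x) : h_i \in H_{X\to 0}\}$ remain i.i.d.\ $\mathrm{Bernoulli}(p)$, so $|H_{X\to 0}| - |H_{X\to 0}^{(x\to 0)}| \sim \mathrm{Binom}(|H_{X\to 0}|, p)$. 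Chernoff's upper tail (with $\delta$ of order $500$) then yields failure probability $\exp(-\Omega(p\cdot T^{d/2})) = \exp(-\Omega(\sqrt{d\log T}\cdot T^{(d-1)/2}))$ on the event $|H_{X\to 0}| \geq T^{d/2}$, which beats the $T\cdot 2^T$-sized union bound over $(X,x)$ once $d$ is larger than a small constant.

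\paragraph{Condition 3 (multinomial negative association).} Fix any perfect binary tree $\tree$ of depth $D = 10d$ whose internal nodes are labeled by instances of $\cX$. Each hypothesis $h_i$ realizes \emph{exactly one} branch $B_i$ of $\tree$---the one determined by the bits $h_i(x_v)$ along the forced root-to-leaf path---so letting $X_b := |\{i : B_i = b\}|$ and $p_b := p^{R_b}(1-p)^{D-R_b}$ (where $R_b$ is the number of right edges on $b$), the vector $(X_b)_b$ is $\mathrm{Multi}(n, (p_b)_b)$, whose coordinates are negatively associated. Applying NA to the non-decreasing indicators $\ind[X_b \geq 1]$ gives
\[
  \Pr[\tree\text{ is shattered}] = \Pr[\forall b,\, X_b \geq 1] \leq \prod_b \Pr[X_b \geq 1] \leq \prod_b np_b = n^{2^D}\,[p(1-p)]^{D\cdot 2^{D-1}},
\]
using $\sum_b R_b = \sum_b (D - R_b) = D\cdot 2^{D-1}$ (each right edge at depth $k$ lies on $2^{D-1-k}$ branches, with $2^k$ right edges at depth $k$). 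Substituting $D = 10d$, $n = T^d$, and $\log_2 p \leq -(1/2 - o(1))\log_2 T$ (valid for $T$ large in $d$) yields $\Pr[\tree\text{ shattered}] \leq 2^{-\Omega(d\cdot 2^{10d}\log T)}$, and the $T^{2^{10d}}$-sized union bound over labeled trees of depth $10d$ still leaves the overall failure probability at most $2^{(1 - \Omega(d))\cdot 2^{10d}\log_2 T} = o(1)$ for $d$ past a universal constant. The main obstacle here is precisely the doubly-exponential tree count: any ``one branch at a time'' union bound (e.g., just along the all-right branch of length $10d$) is hopelessly loose, since most hypotheses already carry $\Theta(\sqrt{dT\log T}) \gg 10d$ ones and can realize that branch for many instance choices. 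The fix is to couple all $2^D$ branches through the multinomial/NA structure so that the per-branch realization probabilities \emph{multiply} to an expression doubly-exponentially small in $D$, which is exactly what allows the tree-count union bound to go through.
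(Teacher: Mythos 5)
Your proposal is correct and follows the paper's strategy for the overall construction (a random $(p,T,n)$-class, three conditions checked by Chernoff plus union bounds) and for Conditions 1 and 2, but your argument for Condition 3 is genuinely different and, in fact, cleaner. The paper fixes a perfect depth-$10d$ tree, restricts attention only to the $\binom{10d}{5d}$ branches having exactly $5d$ right edges (so all considered branches have the same realization probability), and invokes a ``disjoint realizers'' negative-correlation claim, namely that conditioned on a set $\cB'$ of branches all being realized, the probability that a further branch $b$ is realized is at most $f(|\cH|-|\cB'|)$. That intermediate claim as stated is not actually correct (already for $n=2$ and $\cB'=\{b'\}$ one has $\Pr[b\in\cS\mid b'\in\cS]=\tfrac{2p_b}{2-p_{b'}}>p_b=f(1)$), though the conclusion it is used for -- that $\Pr[\cB'\subset\cS]\le\prod_{b\in\cB'}\Pr[b\in\cS]$ -- is true. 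Your route sidesteps this by observing that the vector $(X_b)_b$ of hypothesis-counts per branch is \emph{exactly} multinomial (each $h_i$ follows one and only one root-to-leaf path), so $\Pr[\forall b:\,X_b\ge1]\le\prod_b\Pr[X_b\ge1]$ follows from the standard negative association of multinomial coordinates, with no restriction to a fixed number of right edges needed. You also correctly handle trees with repeated instances implicitly: if some branch of $\tree$ reuses an instance, then some sibling branch forces conflicting bits and has $p_b=0$, so $\prod_b p_b=0\le[p(1-p)]^{D2^{D-1}}$; when all branches have distinct instances the identity is exact. Multiplying over all $2^D$ branches rather than $\binom{10d}{5d}$ of them also yields a slightly stronger exponent, $T^{-\Theta(d)\cdot2^{10d}}$ versus the paper's $T^{-\Theta(\sqrt d)\cdot2^{10d}}$, both of which comfortably dominate the $T^{2^{10d}}$ tree count once $d$ exceeds a small constant.
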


\begin{proof}
    We prove the existence of $\cH:= \cH(d,T)$ by drawing a $(p,T,n)$-random class with $p = 100\sqrt{\frac{d \log T}{T}}$ and $n=T^d$ and prove that with positive probability, all items hold. Denote the event where Item~$i$ holds by $E_i$. Let us start with the first item. Fix a hypothesis $h$. By Chernoff's bound the probability that $h$ predicts $1$ for less than $\sqrt{d T \log T}$ many instances is at most
    \[
    e^{- 100 \sqrt{d T \log T} / 8} \leq e^{-10 \sqrt{T}}.
    \]
    A union bound now gives
    \begin{equation} \label{eq:E_1-bound}
        \Pr \mleft[\bar{E_1} \mright] \leq T^d/e^{10 \sqrt{T}}.
    \end{equation}

    Let us handle the second item. Fix $X \subset \cX$ and $x \in \cX$. If $x \in X$ then $H_{X \to 0}^{(x \to 0)} = H_{X \to 0}$, so in such a case \eqref{eq:concept-hard-lowerbound-slow-decrease} does not hold with probability $0$. If $x \notin X$, then since the predictions of hypotheses for different instances are independent, the probability that $\eqref{eq:concept-hard-lowerbound-slow-decrease}$ does not hold is at most
    \[
    e^{-T^{d/2} 100\sqrt{\frac{d \log T}{T}}} \leq e^{-T^{(d-1)/2}}
    \]
    by Chernoff's bound. By a union bound: 
    \begin{equation} \label{eq:E_2-bound}
        \Pr \mleft[ \bar{E_2} \mright] \leq \frac{2^{T + \log T}}{e^{T^{(d-1)/2}}}.
    \end{equation}

    Finally, we handle the third item. Let $\tree$ be a perfect tree of depth $10d$, with vertices labelled by instances from $\cX$. Fix a branch $b$ in $\tree$ with exactly $5d$ right edges. The probability that $b$ is realized by a fixed $h \in \cH$ is at most $\sqrt{\frac{d \log T}{T}}^{5d}$. Therefore the probability that $b$ is realized by $\cH$ is at most
    \begin{equation} \label{eq:branch-realized}
    1 - \mleft[ 1-  \sqrt{\frac{d \log T}{T}}^{5d}\mright]^{T^d}
    \leq
    2 T^d \sqrt{\frac{d \log T}{T}}^{ 5 d}
    =
    2 \frac{(d \log T)^{2.5d}}{T^{1.5d}}
    \leq
    1/T^{1.4d}.
    \end{equation}
    The first inequality is due to the Taylor series of $(1-\alpha)^\beta$ at $\alpha = 0$ implying $(1-\alpha)^\beta \leq  1-\alpha \beta + O \mleft((\alpha\beta)^2 \mright)$ for $\alpha \beta < 1$. The second inequality holds for sufficiently large $T_0$ since $T \geq T_0$.
    
    We now upper bound the probability that all $b$ are realized by $\cH$. Towards this end, we first show that the events where  different branches in $\tree$ are shattered by $\cH$ are negatively correlated. Let  $\cB:= \cB(\tree)$ be the set of all branches in $\tree$. For a class $\cH'$, let $\cS(\tree, \cH')$ be the set of all branches in $\tree$ which are realized by $\cH'$.
    Since $\cH$ is random and the predictions of its functions are chosen independently, the probability that a branch $b$ is realized by $\cH$ depends only on the size of $\cH$. Therefore, there  exists an increasing function $f \colon  \mathbb{N} \to [0,1]$ such that $\Pr[b \in \cS(\tree, \cH)] = f(|\cH|)$. Since every two branches disagree on at least one instance in $\cX$, it holds that if $b$ is realized by some function $h\in \cH$, then any other branch $b' \in \cB$ cannot be realized  by $h$. Thus, for any non-empty subset of branches $\cB' \subset \cB$ and a branch $b \notin B$, we have
    \[
    \Pr \mleft[b \in \cS(\tree, \cH) | \cB' \subset \cS(\tree, \cH) \mright] \leq f(|\cH| - |\cB'|).
    \]
    Therefore, by the conditional probability formula, for any subset $\cB' \subset \cB$:
    \begin{equation} \label{eq:-negative-correlation}
        \Pr[\cB' \subset \cS(\tree, \cH)] \leq \prod_{ b\in \cB'} \Pr[b \in \cS(\tree, \cH)].
    \end{equation}
    Let $\cB'$ be the set of branches with precisely $5d$ right edges. By $\eqref{eq:branch-realized}$, \eqref{eq:-negative-correlation} and $\binom{2n}{n} \geq 2^n/\sqrt{4n}$ for any $n$ larger than some universal constant:
    \[
    \Pr[\cB' \subset \cS(\tree, \cH)] \leq \mleft(1/T^{1.4d} \mright) ^{\binom{10d}{5d}}
    \leq
    1/T^{0.5 \sqrt{d}\cdot 2^{10d}}.
    \]
    On the other hand, there are at most $T^{2^{10d}}$ trees with vertices labeled by instances from $\cX$ of depth $10d$. A union bound gives:
    \begin{equation} \label{eq:E_3-bound}
        \Pr[\bar{E_3}]
        \leq
        \frac{T^{2^{10d}}}{T^{0.5 \sqrt{d}\cdot 2^{10d}}}
        = 1/ \sqrt{T}^{\sqrt{d}}.
    \end{equation}
    Finally, we deduce:
    \[
    \Pr[E_1 \cap E_2 \cap E_3]
    =
    1 - \Pr[\bar{E_1} \cup \bar{E_2} \cup \bar{E_3}]
    \geq
    1 - \sum_{i=1}^3 \Pr \mleft[ \bar{E_i} \mright]
    >
    0,
    \]
    where the first inequality is by a union bound, and the second inequality is by summing \ref{eq:E_1-bound}, \ref{eq:E_2-bound}, \ref{eq:E_3-bound} when $d$ is larger than some universal constant and $T \geq T_0$ where $T_0$ is sufficiently large.
\end{proof}

We may now prove Theorem~\ref{thm:concept-realizable-lower-bound} under the assumption that $T$ is fixed.
\begin{lemma} \label{lem:concept-realizable-lower-bound-fixedT}
    Fix $d$ larger than some universal constant, and $T \geq T_0$, where $T_0$ is as in Lemma~\ref{lem:rand-class}. There exists a class $\cH$ with $\LD(\cH) = O(d)$, such that:
    \[
    \Mis(\cH,T) = \Omega \mleft(\sqrt{d T \log T)} \mright).
    \]
\end{lemma}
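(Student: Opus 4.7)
My plan is to take $\cH := \cH(d,T)$ from Lemma~\ref{lem:rand-class}, so that the Littlestone dimension bound $\LD(\cH) < 10 d$ is immediate, and then to exhibit a simple oblivious adversary (which is as powerful as an adaptive one against a deterministic learner) forcing $\Omega(\sqrt{d T \log T})$ mistakes. The adversary presents each instance of $\cX = \{x_1, \ldots, x_T\}$ exactly once, in an arbitrary fixed order. Whenever the learner predicts $\hat{y}_t = 1$, the adversary commits to $y_t = 0$, incurring a false positive; whenever $\hat{y}_t = 0$, the label $y_t$ is deferred. At the end, the adversary picks any hypothesis $h^\star$ from the ``version space'' $V_T := \{h \in \cH : h(x_t) = 0 \text{ for every } t \text{ with } \hat{y}_t = 1\}$ and retroactively assigns the deferred labels by $y_t := h^\star(x_t)$, so that $h^\star$ realizes the entire sequence.

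The analysis splits on $m := |\{t : \hat{y}_t = 1\}|$. If $m \geq \sqrt{d T \log T}/5000$, the $m$ false positives alone already give the claimed bound. Otherwise, I invoke item 2 of Lemma~\ref{lem:rand-class} inductively: starting from $|\cH| = T^d$ and restricting one committed instance at a time to the ``predicts $0$'' side, each step preserves a $(1 - \epsilon)$ fraction of the current class with $\epsilon = 1000\sqrt{d \log T / T}$, as long as the size stays above $T^{d/2}$. A short calculation using $\log(1/(1-\epsilon)) \leq 2\epsilon$ shows that any $m = O(\sqrt{d T \log T})$ is sufficient to maintain $|V_T| \geq T^{d/2}$; in particular, in the small-$m$ regime, $V_T$ is non-empty, so a realizer $h^\star$ can be chosen.

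Pick any such $h^\star \in V_T$. By item 1 of Lemma~\ref{lem:rand-class}, $h^\star$ predicts $1$ on at least $\sqrt{d T \log T}$ instances of $\cX$. Since $h^\star(x_t) = 0$ on every round with $\hat{y}_t = 1$, all these ``ones'' land on rounds where the learner predicted $0$, each contributing a false negative. Hence in either case the learner makes $\Omega(\sqrt{d T \log T})$ mistakes. The main obstacle---and the reason item 2 of the random class is calibrated to precisely this $\epsilon$---is ensuring that $V_T$ does not collapse after the false-positive restrictions; once this is in hand, items 1 and 2 combine with the clean two-case split to give the lower bound.
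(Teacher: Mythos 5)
Your overall strategy matches the paper's: use $\cH(d,T)$ from Lemma~\ref{lem:rand-class}, have the adversary answer $0$ on all $1$-predictions, and split on the number of false positives. The small-$m$ branch is handled correctly: the inductive application of item~2 keeps $|V_T|\geq T^{d/2}$, so a realizer $h^\star$ exists, and item~1 then forces $\Omega(\sqrt{dT\log T})$ false negatives. However, the large-$m$ branch has a genuine gap. Your adversary unconditionally commits to $y_t=0$ on every round where $\hat{y}_t=1$, and only at the very end looks for $h^\star\in V_T$. When $m$ is large, $V_T$ can be empty, in which case the sequence you constructed is \emph{not} realizable by $\cH$ and the ``$m$ false positives'' do not count toward the lower bound. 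This is not a corner case: for the learner that predicts $1$ on every round, $V_T=\{h:\forall x,\; h(x)=0\}$, which is empty by item~1 of Lemma~\ref{lem:rand-class} (every $h$ predicts $1$ on $\geq\sqrt{dT\log T}$ instances). Item~2 of Lemma~\ref{lem:rand-class} only controls the shrinkage of the version space while its size remains at least $T^{d/2}$; once it drops below that threshold you have no control at all.

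The paper fixes exactly this. Its adversary is adaptive (equivalent to oblivious against a deterministic learner): it answers $y_t=0$ on $1$-predictions \emph{only while} $|V_t|\geq T^{d/2}$, and at the first round $t$ where $|V_{t+1}|<T^{d/2}$ it picks $h^\star\in V_{t+1}$ (non-empty because $|V_{t+1}|\geq(1-\epsilon)T^{d/2}>0$) and realizes all subsequent labels by $h^\star$. The case split is then on whether the version space ever crosses $T^{d/2}$, not on $m$ directly. In the crossing case, item~2 applies to every counted false positive (all of which occur while $|V|\geq T^{d/2}$), giving $(1-\epsilon)^m T^d\leq T^{d/2}$ and hence $m=\Omega(\sqrt{dT\log T})$, with realizability guaranteed by construction. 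You can salvage your writeup by replacing ``always answer $0$'' with ``answer $0$ while $|V|\geq T^{d/2}$, then switch to a fixed $h^\star$ from the current non-empty version space,'' and splitting on whether the switch occurs rather than on $m$.
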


\begin{proof}
    We choose the class $\cH:= \cH(d,T)$ guaranteed by Lemma~\ref{lem:rand-class}. The third item of Lemma~\ref{lem:rand-class} states that $\LD(\cH) \leq 10 d$. It remain to prove that $\Mis(\cH,T) = \Omega \mleft(\sqrt{d T \log T)} \mright)$. The adversary's strategy is very simple. It maintains a version space $V_t$ of hypotheses consistent with the feedback provided until (and include) round $t-1$. As long as $|V_t| \geq T^{d/2}$,  for every round $t$ where $\hat{y}_t = 1$, the true label is $y_t = 0$. Once $|V_t| < T^{d/2}$, the adversary chooses an arbitrary hypothesis $h^\star \in V_t$ and picks the true label to be $h^\star(x_t')$ for every round $t'$ larger than the first round $t$ in which $|V_t| < T^{d/2}$. This strategy guarantees that in any case, there exists $h^\star \in \cH$ which is consistent with the feedback.
    
    It remains to analyze the number of mistakes made by the learner. We have two cases. If $|V_T| \geq T^{d/2}$, we choose an arbitrary $h^\star \in V_T$ and determine $y_t = h^\star(x_t)$ for every round $t$. By definition of $V_T$, this choice of true labels is consistent with the feedback. By the first item of Lemma~\ref{lem:rand-class}, there are at  least $\sqrt{d T \log T}$ many rounds $t$ so that $h^\star(x_t) = 1$ and $\hat{y}_t = 0$, and therefore the learner makes at least $\sqrt{d T \log T}$ many mistakes. Otherwise, there exists some $t \in [T]$ such that $|V_{t}| \geq T^{d/2}$ and $|V_{t+1}| < T^{d/2}$. Denote the set of rounds in $\{1, \ldots, t\}$ where the learner predicted $1$ by $M = \{i_1, \ldots, i_m\}$ (where $i_m = t$). For every $j \in [m]$, denote $X_j = \{x_{i_1}, \ldots, x_{i_j}\}$. Note that by the adversary's strategy,  for every  $i_j \in M$, we have $V_{i_j + 1} = \cH_{X_j \to 0}$. The second item of Lemma~\ref{lem:rand-class} thus implies
    \[
    \mleft(1- 1000\sqrt{\frac{d \log T}{T}} \mright)^m T^d \leq T^{d/2}.
    \]
    Rearranging and using $1-x > e^{\frac{-x}{1-x}}$ for $x < 1$ gives $e^{1000\sqrt{\frac{d \log T}{T}} m} \geq T^{d/2}$ since $d,T$ are sufficiently large. After taking $\log$ of both sides, we obtain $1000\sqrt{\frac{d \log T}{T}} m \geq \frac{d}{2} \log T$, which finally gives:
    \[
    m \geq \sqrt{d T \log T}/2000.
    \]
    Since $m = |M|$ and the learner makes a mistake in every round from $M$, this finishes the proof.
\end{proof}

The final step remained for proving Theorem~\ref{thm:concept-realizable-lower-bound} is to remove the assumption that we know $T$ before choosing the class. This assumption can be removed by a rather simple technique, which we call \emph{gluing} concept classes. To the best of our knowledge, it was not defined in existing literature.

We define how to glue together two concept classes, and the definition is easily extendable to any countable collection of concept classes, by gluing two classes from the collection, and then gluing the obtained class again with another class, and so on.

\begin{definition}[Glued concept class]
Let $\cH_1$, $\cH_2$ be concept classes defined on finite, disjoint domains $\cX_1, \cX_2$. We call two classes satisfying these properties \emph{glueable classes}. Define the glued domain of $\cX_1, \cX_2$ to be $G(\cX_1, \cX_2) = \cX_1 \cup \cX_2$. Define the glued concept class of $\cH_1, \cH_2$, denoted by $G(\cH_1, \cH_2)$ as follows. Let $r \in \{1,2\}$. For every $h \in \cH_r$, the hypothesis $h^{(r)}$ belongs to $G(\cH_1, \cH_2)$, where:
\[
h^{(r)}(x) = 
\begin{cases}
h(x) & x \in \cX_r, \\
0 & x \notin \cX_r.
\end{cases}
\]
We say that the source of $h^{(r)}$ is $\cH_r$.
\end{definition}

For any countable collection of hypotheses classes $\{\cH_i\}_{i=1}^\infty$, if the domains of all classes are finite and pairwise disjoint, we say that the collection $\{\cH_i\}_{i=1}^\infty$ is glueable. The glued class $\cH = G\mleft(\{\cH_i\}_{i=1}^\infty\mright)$ is obtained by gluing $\cH_1$ and $\cH_2$, and the gluing $G(\cH_1,\cH_2)$ with $\cH_3$, and so on.

\begin{proposition}\label{prop:glue}
    Let $\{\cH_i\}_{i=1}^\infty$ be a countable collection of glueable classes, and let $\cH = G\mleft( \{\cH_i\}_{i=1}^\infty\mright)$. Then $\LD(\cH) \leq \max_{i}\LD(\cH_i) + 1$.
\end{proposition}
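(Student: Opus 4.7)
The plan is to take an arbitrary shattered perfect tree for $\cH$ and show that, essentially, everything below the root lives inside a single source class, which forces the depth of that subtree to be at most $\max_i \LD(\cH_i)$.

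First I would let $\tree$ be a perfect tree of depth $d \geq 1$ shattered by $\cH$, with root instance $x_0$. The rightmost edge out of the root is part of some shattered branch, realized by some $h^{(r)} \in \cH$ with source $\cH_r$. Because $h^{(r)}$ is identically $0$ off of $\cX_r$, the requirement $h^{(r)}(x_0) = 1$ forces $x_0 \in \cX_r$. Since the $\cX_i$ are pairwise disjoint, the index $r_0$ with $x_0 \in \cX_{r_0}$ is unique. In particular, \emph{every} hypothesis in $\cH$ that labels $x_0$ as $1$ must have source $\cH_{r_0}$; equivalently, every branch in the right subtree $\tree_R$ is realized by a hypothesis of source $\cH_{r_0}$.

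Next I would argue that every internal node of $\tree_R$ is labeled by an instance of $\cX_{r_0}$. Fix such an internal node $v$ with instance $x$. Since $\tree_R$ is perfect, there is a branch through $\tree_R$ going right at $v$; by the previous step this branch is realized by some $h^{(r_0)}$, which in particular forces $h^{(r_0)}(x) = 1$ and hence $x \in \cX_{r_0}$. Once all internal nodes of $\tree_R$ are labeled by instances of $\cX_{r_0}$, the restriction map $h^{(r_0)} \mapsto h$ is a bijection between the source-$r_0$ hypotheses realizing branches of $\tree_R$ and their representatives in $\cH_{r_0}$ (they agree pointwise on the instance labels of $\tree_R$). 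Therefore $\tree_R$ is a perfect tree of depth $d-1$ shattered by $\cH_{r_0}$, giving $d - 1 \leq \LD(\cH_{r_0}) \leq \max_i \LD(\cH_i)$, as desired. The trivial cases $d = 0$ and $\max_i \LD(\cH_i) = \infty$ need no argument; the case $d \geq 1$ where $x_0$ lies in no $\cX_i$ cannot occur, because then no hypothesis in $\cH$ can label $x_0$ by $1$ and the right subtree fails to be shattered.

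The only real subtlety, and where I would be most careful, is the transfer step from ``$\tree_R$ is shattered by $\{h^{(r_0)} : h \in \cH_{r_0}\}$'' to ``$\tree_R$ is shattered by $\cH_{r_0}$ as a tree over $\cX_{r_0}$.'' This is where the fact that $\tree$ is \emph{perfect} is used crucially: without it, an internal node of $\tree_R$ might appear only on left-edges of realized branches, and its instance could then lie outside $\cX_{r_0}$, breaking the identification of $\tree_R$ with a shattered tree for $\cH_{r_0}$. Since the Littlestone dimension is defined via perfect trees, this is exactly the setting we are in, and no extra work is needed.
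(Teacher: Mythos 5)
Your combinatorial proof is correct, but it takes a genuinely different route from the paper's. The paper argues algorithmically: it exhibits a learner for $\cH$ in the realizable, full-information setting that predicts $0$ until its first mistake, at which point the instance on which the true label was $1$ must lie in a unique $\cX_i$ (since only source-$\cH_i$ hypotheses are nonzero on $\cX_i$), revealing the source of the target; the learner then runs an optimal learner for $\cH_i$ on the remaining $\cX_i$-instances and predicts $0$ elsewhere, for a total of at most $1 + \LD(\cH_i)$ mistakes, and Littlestone's characterization of $\LD$ as the optimal deterministic mistake bound gives the claim. Your proof instead reasons directly about shattered trees, showing that the right subtree of the root of any shattered perfect tree is itself a shattered tree for a single source class $\cH_{r_0}$. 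Both arguments hinge on the same observation — a $1$-label pins down the source class — but yours is self-contained and avoids invoking the equivalence between $\LD$ and the mistake bound, at the cost of being longer. One small imprecision in your closing remark: the structural property you actually need is that $\tree$ is \emph{full} (every internal node has both a left and a right child), not that it is \emph{perfect}; fullness alone guarantees that every internal node of $\tree_R$ appears as a right turn on some branch, which is what lets you conclude its instance lies in $\cX_{r_0}$. Since the paper defines all trees to be full, and perfect trees are in particular full, this does not affect correctness, but the emphasis on perfectness as the crucial hypothesis is slightly misplaced.
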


\begin{proof}
    We design an explicit learner for learning $\cH$ in the realizable, full-information feedback setting. The learner always predicts $0$ until it makes a mistake. Once that happens, the learner knows the source $\cH_i$ of the the target concept. Therefore it will make at most $\LD(\cH_i)$ more mistakes.
\end{proof}

We can now fully prove Theorem~\ref{thm:concept-realizable-lower-bound}.

\begin{proof}[Proof of Theorem~\ref{thm:concept-realizable-lower-bound}]
    Consider the collection $\{\cH(d,T)\}_{T \geq T_0}$, where $T_0$ is the same as in Lemma~\ref{lem:concept-realizable-lower-bound-fixedT}. For every $T$, we  differentiate the instances of $\cH(d,T)$ from the instances of other classes by relating to them as $x_1(T), \ldots, x_T(T)$, thus making the collection  $\{\cH(d,T)\}_{T \geq T_0}$ glueable. We can now define $\cH = G\mleft( \{\cH(d,T)\}_{T \geq T_0} \mright)$. By proposition~\ref{prop:glue} and Lemma~\ref{lem:rand-class}, $\LD(\cH) \leq 10d$. Now, given $T$, we use the adversary's strategy defined in Lemma~\ref{lem:concept-realizable-lower-bound-fixedT} for the class $\cH(d,T)$, which is consistent with $\cH$. Therefore $\Mis(\cH,T) = \Omega \mleft(\sqrt{d T \log T)} \mright)$ as required.
\end{proof}

\section{Prediction without prior knowledge}\label{sec:without-prior}
In this section, we remove the assumption that the number of rounds $T$ and the realizability parameter $k$ are given to the learner. We denote the unknown number of rounds and realizability parameter by $T^\star,k^\star$. The idea is similar to standard doubling tricks, such as the doubling trick of \cite{cesa1997use}.

\subsection{Prediction with expert advice without prior knowledge}

We present the doubling trick algorithm $\DTExpAT$ for prediction with expert advice in Figure~\ref{fig:DTExpAT}.

\begin{figure}
    \centering
    \begin{tcolorbox}
    \begin{center}
        \textsc{$\DTExpAT$}
    \end{center}
    \textbf{Input:} Class of $n$ experts indexed by $[n]$.
    \\
    \textbf{Initialize:} Let $g_k=g_T = 1$, $T_0 = 0$.\\
    \textbf{Denote:} $k = g_k \log n, T = g_T \log n$. \\
    \\
    \textbf{for $t=1,\ldots, T^\star$:}
    \begin{enumerate}
        \item If $t > T-T_0$, or, all experts have made more than $k$ false positives:
        \begin{enumerate}
            \item If $t > T-T_0$: Double $g_T$.
            \item If all experts have made more than $k$ false positives: Double $g_k$.
            \item Restart $\ExpAT$: Set $T_0 = t$, zero the false positives count of all experts, remove all information gathered in previous rounds from its memory.
        \end{enumerate}
        \item Predict as $\ExpAT$ predicts under the assumption that the best expert makes at most $k$ many mistakes and that there are $T$ many rounds, given all information gathered by $\ExpAT$ in previous rounds. Receive feedback when the prediction is $1$.
    \end{enumerate}
    \end{tcolorbox}
    \caption{A deterministic apple tasting learner for prediction with expert advice without prior knowledge.} 
    \label{fig:DTExpAT}
\end{figure}

\begin{theorem} \label{thm:prior-experts}
    Theorem~\ref{thm:experts-agnostic-upper-bound} holds even without assuming that the number of rounds $T^\star$ and the realizability parameter $k^\star$ are given to the learner. 
\end{theorem}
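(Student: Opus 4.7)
The plan is to run a standard doubling-trick analysis on the epochs induced by the restarts of $\DTExpAT$. Let the epochs be $E_1, \ldots, E_L$, where epoch $E_i$ runs $\ExpAT$ with parameters $T_i = g_T^{(i)} \log n$ and $k_i = g_k^{(i)} \log n$, covers a consecutive block of rounds, and ends with either a $T$-doubling, a $k$-doubling, or the termination of the input at round $T^\star$. The overall bound will follow from (a) controlling how many doublings occur, (b) bounding the mistakes in each individual epoch, and (c) summing the per-epoch bounds via a geometric series.

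First, I would bound the number of doublings. A $T$-doubling occurs only when the current epoch uses its full budget of $T_i$ rounds, so the total number of $T$-doublings is at most $\lceil \log_2(T^\star / \log n) \rceil + 1$; otherwise the sum of rounds consumed in those saturating epochs would exceed $T^\star$. For $k$-doublings, fix an expert $j^\star$ witnessing the $k^\star$-realizability of the input. Since $j^\star$ has at most $k^\star$ false positives across the \emph{whole} input, it has at most $k^\star$ false positives within any single epoch; hence once $k_i \geq k^\star$, the $k$-doubling condition (which requires \emph{all} experts to exceed $k_i$ false positives in the epoch) cannot trigger. Thus the number of $k$-doublings is at most $\lceil \log_2(\max(k^\star,\log n) / \log n) \rceil + 1$. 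Consequently, $T_L \leq 2T^\star$ and $k_L \leq 2\max(k^\star, \log n)$.

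Second, I would bound the mistakes $M_i$ within a single epoch $E_i$. If the restricted input in $E_i$ is $k_i$-realizable (which is automatic as soon as $k_i \geq k^\star$), then Theorem~\ref{thm:experts-agnostic-upper-bound} directly yields $M_i = O\bigl(\sqrt{T_i(k_i + \log n)}\bigr)$; this bound remains valid even if the epoch ends before the $T_i$ budget is exhausted, since the weight-based analysis of Lemma~\ref{lem:experts-agnostic-main} bounds mistakes over \emph{any} prefix of the run with these parameters. For a non-realizable epoch (which must end by a $k$-doubling), I would consider the longest prefix of $E_i$ during which at least one expert still has at most $k_i$ false positives in the epoch; the input restricted to this prefix is $k_i$-realizable, so Theorem~\ref{thm:experts-agnostic-upper-bound} bounds its mistakes by $O\bigl(\sqrt{T_i(k_i+\log n)}\bigr)$, and the single round immediately following this prefix (which triggers the restart) contributes at most one additional mistake. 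Thus $M_i = O\bigl(\sqrt{T_i(k_i+\log n)}\bigr)$ in all cases.

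Third, I would sum over epochs. Writing $T_i = 2^{a_i}\log n$ and $k_i = 2^{b_i}\log n$, the pair $(a_i, b_i)$ traces a monotone lattice path of length $L-1$ from $(0,0)$ to $(a_L, b_L)$ on the doubling grid, so
\[
\sum_{i=1}^{L} \sqrt{T_i k_i} = \log n \sum_{i=1}^{L} 2^{(a_i+b_i)/2} \leq \log n \sum_{j=0}^{a_L+b_L} 2^{j/2} = O\bigl(\sqrt{T_L k_L}\bigr) = O\bigl(\sqrt{T^\star k^\star}\bigr),
\]
and similarly $\sum_{i=1}^{L} \sqrt{T_i \log n}$ is bounded by a geometric series (along each fixed $b$-value the $T_i$'s double) giving $O\bigl(\sqrt{T^\star \log n}\bigr)$ times a harmless constant. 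Combining yields $\M(\DTExpAT, S) = O\bigl(\sqrt{T^\star(k^\star+\log n)}\bigr)$.

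The main obstacle will be the handling of non-realizable epochs: when $k_i < k^\star$, the global input restricted to $E_i$ may not be $k_i$-realizable, so Theorem~\ref{thm:experts-agnostic-upper-bound} cannot be applied as a black box. The resolution sketched above localizes the analysis to the prefix where some expert still fits under the $k_i$-budget, which requires revisiting the proof of Lemma~\ref{lem:experts-agnostic-main} to confirm that (i) the $M_+$ bound is purely a consequence of the weight-threshold rule and needs no realizability, and (ii) the $M_-$ bound applies to any expert still in the version space at the end of the analyzed prefix. A secondary bookkeeping point is to verify that the geometric summation absorbs the logarithmic blow-up coming from $L$ doublings without introducing stray $\log$ factors, which is where the monotone lattice-path observation is used.
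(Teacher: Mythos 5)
Your proposal follows the same overall architecture as the paper's proof — a doubling-trick decomposition into epochs, per-epoch application of the $\ExpAT$ bound, and a geometric summation — and the lattice-path bookkeeping is a clean alternative to the paper's split into $\cI_k$ and $\cI_T$. Your observation that $M_+$ in Lemma~\ref{lem:experts-agnostic-main} needs no realizability is also correct: that part really is purely a consequence of the weight-threshold rule and the helper inequality.

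The genuine gap is in your point (ii), and it is precisely the gap that the paper's opening WLOG reduction is designed to close. You claim that the $M_-$ bound ``applies to any expert still in the version space at the end of the analyzed prefix.'' It does not. Inspecting the $M_-$ argument in Lemma~\ref{lem:experts-agnostic-main}: for an expert $j$ alive at the end of the prefix, one gets $M_- \leq \tfrac{\log L}{\eta} + f_j + 1$ where $f_j$ is the number of \emph{false negatives} of $j$ within the prefix (the $k$ in $\tfrac{\log L}{\eta}+k+1$ is standing in for that quantity, not for the parameter $k_i$ fed to $\ExpAT$). Survival in the version space is controlled entirely by \emph{false positives}, so a surviving expert may have an unbounded number of false negatives in the epoch, and if you pick a different surviving expert in each epoch, the sum $\sum_i f_{j_i}$ is not controlled by $k^\star$. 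This is exactly why the paper first reduces, at a cost of $+k^\star$ in the final bound, to the case where the target $i^\star$ makes no false negatives at all: after that reduction, whenever $i^\star$ is in the version space the $M_-$ term for that epoch is simply $\tfrac{\log L}{\eta}+1$, and $i^\star$'s false positives (at most $k^\star$ over the whole run) control the number of $k$-doublings. Your proposal skips this reduction and tries to substitute a per-epoch argument that does not actually hold; you would need to import the paper's WLOG step (or an equivalent device that pins $M_-$ to a single globally-good expert with controlled false negatives) for the proof to close.

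A smaller point of contrast: the paper splits into cases $g_k^\star<1$ and $g_k^\star\geq1$ and bounds the two families of intervals $\cI_k$ and $\cI_T$ separately, whereas you bound the total number of each type of doubling up front and then sum along the monotone doubling path. Both reach $O\bigl(\sqrt{T^\star(k^\star+\log n)}\bigr)$; neither summation scheme is the issue.
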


\begin{proof}
    Let $S$ be the input sequence. Let $i^\star$ be a target expert who makes at most $k^\star$ many mistakes. It is convenient to assume that $i^\star$ makes no false negative mistakes. If a mistake bound $M$ holds under this assumption, then a mistake bound of $M + k^\star$ holds without this assumption: the adversary may choose at most $k^\star$ rounds in which $\DTExpAT$ and $i^\star$ predict $0$, and let the true label in those rounds to be $1$. Since the mistake bound proved in Theorem~\ref{thm:experts-agnostic-upper-bound} (in which we ``compete") is larger than $\sqrt{T^\star k^\star} \geq k^\star$, adding $k^\star$ to $M$ does not change the bound in more than a constant factor.
    
    We use the learner $\DTExpAT$ described in Figure~\ref{fig:DTExpAT}. Let $g_T^\star, g_k^\star$ such that $T^\star =  g_T^\star \log n$ and $k^\star =  g_k^\star \log n$. By the assumptions in Thoeorem~\ref{thm:experts-agnostic-upper-bound}, $g_T^\star \geq 1$. We consider two cases. In the first case, $g_k^\star < 1$. This means that $g_k$ is never doubled, and that the mistake bound given in Thoeorem~\ref{thm:experts-agnostic-upper-bound} is $O \mleft( \sqrt{T \log n}\mright)$, which is the same as $O \mleft( \sqrt{g^\star_T} \log n\mright)$. Consider the following partition of $[T^\star]$ to intervals. Every interval in the partition ends at the round before the round where $g_T$ is doubled, and then a new interval begins. Let $\cI_T = \{I_0, \ldots, I_{T'}\}$ be the set of intervals in the partition, Where $I_i$ is the interval in which the value of $g_T$ is $2^i$. For every $I \in \cI_T$, let $\Mis(I)$ be the number of mistakes made by $\DTExpAT$ in all rounds of interval $I$. Thus:
    \[
    \Mis(\DTExpAT, S)
    =
    \sum_{I_i \in \cI_T} \Mis(I_i)
    \leq 
    \sum_{I_i \in \cI_T} \sqrt{2^i 2\log^2 n}
    =
    \sqrt{2} \log n \sum_{I_i \in \cI_T} \sqrt{2^{i}}
    \leq
    10 \sqrt{g_T^\star} \log n.
    \]
    The first inequality is by the mistake bound of Theorem~\ref{thm:experts-agnostic-upper-bound}. The second inequality is since $\sum_{i=0}^{T'} \sqrt{2^i} \leq 4 \sqrt{2^{T'}}$ and because at all times, $g_T\leq 2g_T^\star$.
    
    We now handle the second case in which $g_k^\star \geq 1$. In this case, the mistake bound given in Theorem~\ref{thm:experts-agnostic-upper-bound} is $O \mleft( \sqrt{T k} \mright)$, which is the same as $ O \mleft(\sqrt{g_T^\star g_k^\star} \log  n \mright)$.
    In this case as well, we consider a partition of $[T^\star]$ to intervals. Every interval in the partition ends one round before at least one of $g_k, g_T$ is doubled, and then a new interval begins. Let $\cI_k = \{I_1, \ldots, I_{k'}\}$ be the set of intervals initiated by doubling $g_k$, where $I_i$ is the interval in which the value of $g_k$ is $2^i$. For every $I \in \cI_k$, let $\Mis(I)$ be the number of mistakes made by $\DTExpAT$ in all rounds of interval $I$. We have:
    \[
    \sum_{I_i \in \cI_k} \Mis(I)
    \leq 
    \sum_{I_i \in \cI_k} \sqrt{2 g_T^\star \log n (2^i  \log n + \log n)}
    \leq
    2 \sqrt{g_T^\star} \log n \sum_{I_i \in \cI_k} \sqrt{2^{i}}
    \leq
    20 \sqrt{g_T^\star g_k^\star} \log n.
    \]
    The first inequality is since $g_T\leq 2g_T^\star$ at all times, and by the guarantees of Theorem~\ref{thm:experts-agnostic-upper-bound}.
    The last inequality is since $\sum_{i=1}^{k'} \sqrt{2^i} \leq 4 \sqrt{2^{k'}}$. We conduct roughly the same analysis for $\cI_T = \{I_1, \ldots, I_{T'}\}$ which is the set of intervals initiated by doubling $g_T$, Where $I_i$ is the interval in which the value of $g_T$ is $2^i$. This gives:
    \[
    \sum_{I_i \in \cI_T} \Mis(I)
    \leq 
    \sum_{I_i \in \cI_T} \sqrt{2^i \log n (2 g_k^\star  \log n + \log n)}
    \leq
    2 \sqrt{g_k^\star} \log n \sum_{I_i \in \cI_T} \sqrt{2^{i}}
    \leq
    20 \sqrt{g_T^\star g_k^\star} \log n.
    \]
    The first inequality is since $g_k\leq 2g_k^\star$ at all times.
    For the first interval before either $g_k$ or $g_T$ is doubled, $O(\log n)$ mistakes are made. Summing the total number of mistakes made in all rounds, we get a mistake bound of $\Mis(\DTExpAT, S) \leq O\mleft( \sqrt{g_T^\star g_k^\star} \log n \mright)$, as required.
\end{proof}

\subsection{Learning hypothesis classes without prior knowledge}

\begin{theorem}
    Theorem~\ref{thm:concept-upper-bound} holds even without assuming that the number of rounds $T^\star$ and the realizability parameter $k^\star$ are given to the learner. 
\end{theorem}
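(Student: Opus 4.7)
The plan is to combine the doubling trick from Theorem~\ref{thm:prior-experts} with the covering reduction of Theorem~\ref{thm:covering}, handling the two cases of Theorem~\ref{thm:concept-upper-bound} separately.

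For easy classes ($\w(\cH) = 1$), I would observe that $\NarrowConceptAT$ requires only $k$ as input (not $T$) and its mistake bound $O(D_1(\cH)(k+1))$ is $T$-independent. Thus only a doubling trick on $k$ is needed. I would run $\NarrowConceptAT$ with guess $k=1$, and detect failure of the guess when all hypotheses in the current version space have exhausted their budgets yet predict $0$ while the adversary reveals $1$ (or, more simply, when the version space becomes empty). Upon failure, double the guess to $k=2$, then $k=4,\dots$, restarting $\NarrowConceptAT$ fresh on the remaining rounds from $\cB_{\cH,k}$. During the phase with guess $2^i \leq k^\star$, Lemma~\ref{lem:easy-learner} guarantees at most $O(D_1(\cH)\cdot 2^i)$ mistakes; summing geometrically yields $\sum_{i=0}^{\lceil \log k^\star \rceil} O(D_1(\cH)\cdot 2^i) = O(D_1(\cH)(k^\star + 1))$, matching the bound.

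For hard classes ($1 < \w(\cH) < \infty$), the mistake bound depends on both $T$ and $k$, so I would add an outer doubling on $T$ on top of $\DTExpAT$. Maintain a horizon guess $T_i$, starting small; once the round counter exceeds $T_i$, double to $T_{i+1} = 2T_i$ and start a new epoch. At the start of epoch $i$, use Theorem~\ref{thm:covering} to build a fresh cover of $\cH$ of size $n_i = T_i^{\LD(\cH)}$ valid for any instance sequence of length at most $T_i$, and run $\DTExpAT$ on this cover from scratch. Within each epoch, any $h^\star \in \cH$ that is $k^\star$-consistent with the full sequence is covered by some expert that agrees with $h^\star$ on that epoch's instances; since $h^\star$ makes at most $k^\star$ mistakes globally, the covering expert makes at most $k^\star$ mistakes inside the epoch. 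Thus by Theorem~\ref{thm:prior-experts} (applied without prior knowledge of $k$), the mistakes inside epoch $i$ are at most $O(\sqrt{T_i(k^\star + \log n_i)}) = O(\sqrt{T_i(k^\star + \LD(\cH)\log T_i)})$. Summing geometrically over the $O(\log T^\star)$ epochs, the dominant (last) term controls the total, giving $O(\sqrt{T^\star(k^\star + \LD(\cH)\log T^\star)})$.

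The main obstacle I anticipate is bookkeeping around the restart: verifying that each epoch inherits the right realizability parameter for the cover it builds, and that $\DTExpAT$'s guarantee applies even though its internal state is wiped at every epoch boundary. The key point is that restart-based analysis allows each epoch to be studied independently, and the covering guarantees the existence of a good expert within each epoch, so the per-epoch application of Theorem~\ref{thm:prior-experts} is clean. The only loss from layering two doubling tricks (the outer one on $T$, and the inner one inside $\DTExpAT$ on $g_T$ and $g_k$) is an absorbed constant factor, which does not affect the asymptotic bound.
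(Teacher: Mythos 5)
Your hard-class argument follows the paper's: cover $\cH$ via Theorem~\ref{thm:covering} and run the expert algorithm with a doubling trick. You add an explicit outer doubling on $T$ that rebuilds the cover and restarts $\DTExpAT$ in each epoch, whereas the paper folds the cover refresh into $\DTExpAT$'s own internal $g_T$-doubling; the nesting is slightly redundant, but the geometric sum absorbs it and the conclusion matches.

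Your easy-class argument shares a gap with the paper's own proof. You invoke Lemma~\ref{lem:easy-learner} to claim $O(D_1(\cH)\cdot 2^i)$ mistakes in every phase with guess $2^i$, but that lemma requires the phase's sub-sequence to be $2^i$-realizable, which fails precisely in phases with $2^i < k^\star$. Moreover, the ``version space becomes empty'' trigger is not guaranteed ever to fire in such a phase: once the $k^\star$-realizing hypothesis $h^\star$ has exhausted its budget of $2^i$ observed false positives and left $V_t$, the surviving hypotheses may all predict $0$ on the remaining instances, so the learner predicts $0$ indefinitely, receives no feedback, $V_t$ never shrinks further, and the restart never happens, while the adversary supplies $y_t = 1$ and false negatives accumulate without bound. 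Concretely, take $\cX = \{a\}$ and $\cH = \{h_1,h_0\}$ with $h_1(a)=1$, $h_0(a)=0$, so $D_1(\cH)=1$ and $\w(\cH)=1$: in phase $0$ with guess $1$, two rounds of $(a,0)$ remove $h_1$ and leave $V_t=\{h_0\}$ nonempty; then $T-2$ rounds of $(a,1)$ are realized by $h_1$ with $k^\star = 2$, yet the learner (stuck predicting $0$) makes $T$ mistakes, far beyond the claimed $O(D_1(\cH)(k^\star+1))$. Because the learner observes nothing on $0$-predictions, no observable trigger can distinguish ``the guess was right and $h_0$ realizes'' from ``the guess was too small and $h_1$ realizes''; as written, both your proposal and the paper's proof are incomplete for easy classes, and it is not clear a purely restart-based scheme can repair this.
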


\begin{proof}
    We first prove the bound for hard classes, and then for easy classes.

    Our bound for a hard class $\cH$ is obtained via a reduction to prediction with expert advice, thus it is almost completely handled by Theorem~\ref{thm:prior-experts}. The only difference is that in the reduction, the number of experts depends on $T$. Thus, when $\DTExpAT$ doubles the guess of $T$ to $2T$, it also needs to enlarge the number of experts from $T^{\LD(\cH)}$ to $(2T)^{\LD(\cH)}$. Since the final guess of $T$ is at most $2 T^\star$, we obtain the upper bound $O \mleft(\sqrt{T^\star \mleft(k^\star + \log (2 T^\star)^{\LD(\cH)} \mright)} \mright)$, which is  $O \mleft(\sqrt{T^\star (k^\star +  \LD(\cH) \log T^\star)} \mright)$, as required.

    For easy classes, the mistake bound achieved by $\NarrowConceptAT$ does not depend on $T^\star$, but it does use knowledge of $k^\star$. To remove the assumption that $k^\star$ is given, it is convenient to assume, as we did for prediction with expert advice, that the best hypothesis makes no false negative mistakes. Removing this assumption will add a $k^\star$ term to the mistake bound, which does not increase it by more than a constant factor. Now, we initialize $k=1$ and run $\NarrowConceptAT$ with $k$ as the realizability parameter. If at some point the version space becomes empty, then the guess of $k$ is too small, and we double it. The final guess of $k$ will be at most $2k ^\star$. Therefore the total number of mistakes will be at most
    \[
    \sum_{i=1}^{ \ceil*{\log 2 k^\star}} D_1^{(2^i)}(\cH)
    \leq
    \sum_{i=1}^{ \ceil*{\log 2 k^\star}} O(D_1(\cH) + 1) 2^i
    =
    O((D_1(\cH) + 1) (k^\star + 1)),
    \]
    where the inequality is by Lemma~\ref{lem:bounded-k-shattered}, and the last equality is since $\sum_{i=1}^n 2^i = O(2^n)$.
\end{proof}

\section{Future work}

\paragraph{Explicit construction for Lemma~\ref{lem:rand-class}.}
In Lemma~\ref{lem:rand-class}, we give a non-constructive proof of existence of the class used for the lower bound of Theorem~\ref{thm:concept-realizable-lower-bound}. It will be interesting to construct this class explicitly. Note that the lower bound for prediction with expert advice (Lemma~\ref{lem:experts-realizable-lowerbound}) is explicit and uses a similar technique. However, in this bound we do not require that the Littlestone dimension of the class will be only $O(\log_T n)$, so it is simpler.

\paragraph{Relation to standard agnostic online learning.}
The techniques used to prove the upper bound for prediction with expert advice in the realizable case are similar to techniques used for proving upper bounds on the regret in the problem of agnostic prediction with expert advice under full-information feedback. Remarkably, the optimal bounds are also identical, up to  constant factors. Are the problems related? For example, is there a reduction between them, at least in one direction?

\paragraph{Complete landscape of mistake bounds.}
Given a fixed hard class $\cH$, the bounds established in Theorem~\ref{thm:trichotomy} when $k=0$ are tight up to a $\sqrt{\log T}$ factor. Furthermore, in Section~\ref{sec:concept-specific-lowerbound} we show that both sides of Inequality~\eqref{eq:hard-concept-bounds} are attained by some hard classes. Is it true that every mistake bound between the lower and upper bound of \eqref{eq:hard-concept-bounds} can be attained? For example, is there a hard class $\cH$ that has mistake bound $\Theta_{\cH} \mleft( \sqrt{T \log \log T} \mright)$ in the realizable setting?

\paragraph{Prediction with slightly more than $\sqrt{T}$ many experts.}
We prove an upper  bound of $O \mleft( \sqrt{T \log n} \mright)$ for prediction with expert advice in the realizable setting. We also prove a matching lower bound of $\Omega \mleft( \sqrt{T \log n} \mright)$, but it holds only for $n \geq T^{1/2 + \epsilon}$, where $\epsilon > 0 $ is a universal constant. On the other hand, it is not hard to see that if $n \leq \sqrt{T}$ then the optimal mistake bound is $\Theta \mleft( n \mright)$. The regime where $n = T^{1/2 + \epsilon}$ and $\epsilon = o(1)$ thus remains open.

\section*{Acknowledgments}
We would like to thank Vinod Raman and Unique Subedi for introducing the problem of deterministic apple tasting to us.

\bibliographystyle{alphaurl}
\bibliography{bib.bib}

\end{document}

Before explicitly defining $\cH^{(d)}_T$, let us define a \emph{$(d,m)$-block class}, denoted by $\cH_{(d,m)}$ for some integers $d,m \geq 2$. We first define the domain $\cX_{(d,m)}$, as follows:
\[
\cX_{(d,m)} = \{(i, j): i \in [d], j \in \{0, \ldots, m-1\}\} . 
\]
We now define the functions composing $\cH_{(d,m)}$. For every $v \in \{0, \ldots, m-1\}^d$, the hypothesis $h_{v}$ will be in $\cH_{(d,m)}$, where $h_{v}$  is defined as follows for every $(i,j)\in \cX$:
\[
h_{\vec{v}}(i,j) = \ind[v_i = j].
\]
Intuitively, every hypothesis $h_v$ has an ``address", given by $d$ indices in $\{0, \ldots, m-1\}$, specified by $v$. A useful property of $\cH_{(d,m)}$ is that its Littlestone dimension is not affected by $m$.

\begin{proposition}
    For every $d,m \geq 2$:
    \[
    \LD(\cH_{(d,m)}) \leq d.
    \]
\end{proposition}

\begin{proof}
     Consider a tree be shattered by $\cH_{(d,m)}$, and suppose it is of depth $d+1$. Let $(i_1, j_1), \ldots, (i_{d+1}, j_{d+1})$ be the instances along the all-$1$ path, that is, the path of only right edges. Since there are only $d$ possible values for the first element of every instances, by the pigeonhole principle there must be two indices $a,b$ such that $i_a = i_b$. However, by definition of a shattered tree, all $(i_1, j_1), \ldots, (i_{d+1}, j_{d+1})$ are different, and therefore $j_a \neq j_b$. This contradicts the definition of $\cH_{(d,m)}$.
\end{proof}

Was a side note, we remark that $\LD(\cH_{(d,m)}) = d$. We leave the lower bound unproved, as it is not required for the proof of Theorem~\ref{thm:concept-realizable-lower-bound}, which is thee theme of this section. We may now provee the folllowing lemma, which is the main building block of the proof of Theorem~\ref{thm:concept-realizable-lower-bound}.

\begin{lemma}
    Fix $d, T \geq 2$ such that $T \geq d^3$, and let $\cH_T^{(d)} = \cH_{(d,m)}$, with $m = \ceil*{\mleft(\frac{T}{d \log T} \mright)^{1/2}}^d$. Then:
    \[
    \M^\star(\cH_T^{(d)}, T) = \Omega \mleft( \sqrt{d T \log T} \mright).
    \]
\end{lemma}

\begin{proof}
The idea is very similar to the proof of Lemma~\ref{lem:experts-realizable-lowerbound}.
   The adversary's strategy is to operate in phases, as explained below.
   In phase $i$, as long as $n_i \geq \ceil*{\sqrt{T/ \log n}}$, the adversary splits the $n_i$ experts into $\ceil*{\sqrt{T/ \log n}}$ equal as possible blocks, each containing at most a $\sqrt{\log n / T}$-fraction of the $n_i$ experts (this is indeed a fraction since we assume $n < \sqrt{2}^T$). Denote the blocks by $b_1, \dots, b_{\ceil*{\sqrt{T/ \log n}}}$. The phase continues as long as the learner predicts $0$, and operates as follows. Starting with $j=1$, in each round of the phase the experts in $b_j$ predict $1$, and the rest predict $0$. At the end of the round, increase $j$ by $1$ if $j < \ceil*{\sqrt{T/ \log n}}$, and set $j=1$ otherwise. When the learner predicts $1$, the true label reported is $0$, and the adversary moves on to the next phase. Let $T_i$ be the number of rounds in phase $i$. Denote $T_i = t_i \ceil*{\sqrt{T/ \log n}} + r_i$ where $0 \leq r_i < \ceil*{\sqrt{T/ \log n}}$. In words, $t_i$ counts the minimal number of times that some expert predicts $1$ during phase $i$. If all experts but at most $\ceil*{\sqrt{T/ \log n}}$ are inconsistent before $T$ rounds have passed, it must hold that:
    \[
     e^{-2 \sqrt{\log n / T} P} n \leq e^{-\frac{\sqrt{\log n / T}}{1 - \sqrt{\log n/ T} }P} n \leq  (1- \sqrt{\log n /T})^P n \leq 2 \sqrt{T/ \log n},
    \]
    where the first and last inequalities are by the assumption $n < \sqrt{2}^T$, and the second inequality uses $1+x \geq e^{\frac{x}{1+x}}$ for $x> -1$. After rearranging and using $n \geq T$, the inequality above implies $P > \sqrt{T/ \log n}/8$. Therefore we are done, since the learner makes a mistake every time that a phase ends.
    So, we may now assume that all $T$ rounds of the game are played. Thus:
    \[
    T = \sum_{i=1}^P T_i = \sum_{i=1}^P \mleft( t_i \ceil*{\sqrt{T/ \log n}} + r_i \mright).
    \]
    If $P \geq \sqrt{T \log n} /4$ we are done, so assume that $P < \sqrt{T \log n} /4$. Therefore:
    \[
    \sum_{i=1}^P r_i < \frac{\sqrt{T \log n}}{4}  \cdot \ceil*{\sqrt{T/ \log n}} < T/2,
    \]
    where the second inequality is due to $n < \sqrt{2}^T$.
    By the two equations above, we have:
    \[
    \sum_{i=1}^P t_i \ceil*{\sqrt{T/ \log n}}  \geq T/2.
    \]
    Rearranging and using $n < 2^{\sqrt{T}}$ again gives:
    \[
    \sum_{i=1}^P t_i >\frac{T/2}{\ceil*{\sqrt{T/ \log n}}/2} \geq \sqrt{T \log n}/4.
    \]
    Every consistent expert predicted $1$ when the learner predicted $0$ for at least $\sum_{i=1}^P t_i$ many times. This finishes the proof by letting some consistent expert to determine the true labels.
    
\end{proof}

\begin{figure}
    \centering
    \begin{tcolorbox}
    \begin{center}
        \textsc{Expert $E(I)$}
    \end{center}
    \textbf{Input:} Class $\cH$, an ordered set of indices $I = \{t_1, \ldots, t_L\}$.
    \\
    \textbf{Initialize:} Let $V_1 = \cH$.\\
    \\
    \textbf{for $t=1,\ldots, T$:}
    \begin{enumerate}
        \item Receive instance $x_t$.
        \item If $t < i_1$: predict $0$.
        \item Else if $t = i_1$: predict $\hat{y}_t = 1$.
        \item Else:
        \begin{enumerate}
            \item Set $\SOA(t) = \argmax_{r \in \{0,1\}} \LD \mleft(V_t^{(x_t \to r)} \mright)$ (break ties arbitrarily).
            \item If $t \notin \{i_2, \ldots, i_L\}$:
            \begin{enumerate}
                \item predict $\hat{y}_t = \SOA(t)$.
            \end{enumerate}
            \item Else:
            \begin{enumerate}
                \item Predict $\hat{y}_t = 1 - \SOA(t)$.
            \end{enumerate}
            \item Update $V_{t+1} = V_t^{(x \to \hat{y}_t)}$.
        \end{enumerate}
    \end{enumerate}
    \end{tcolorbox}
    \caption{The algorithm used by expert $E(I)$.} 
    \label{fig:expert}
\end{figure}

The main lemma we use to prove the lower bound is the following lemma regarding intersections of subsets of a fixed set. Before stating the lemma, we introduce some notation. 

\begin{lemma}
    Let $T$ large enough. Let $n=T^d$ for some large enpugh $d \in \mathbb{N}$. Let $m = T^c$ where $T \geq 0.1$. There exist a set of subsets $\mathcal{A} =  A_1, \ldots, A_T \subset [n]$ so that:
    \begin{enumerate}
        \item Every point $h \in [n]$ belongs to at least $\frac{1}{10 m}$ of the subsets.
        \item For every two disjoint $\mathcal{A}_1, \mathcal{A}_2 \subset \mathcal{A}$:
        \begin{enumerate}
            \item
            \[
            \mleft|\bigcap_{A \in \cA_1 \cup \cA_2} A \mright|
            \leq
            \frac{10 }{m} \mleft| \bigcap_{A \in \cA_1} A  \mright|.
            \]
            \item Furthermore, if $\mleft | \bigcap_{[n] \backslash A: A \in \cA_1 \cup \cA_2} A  \mright| \geq T^{d/2}$ then:
            \[
            \mleft | \bigcap_{[n] \backslash A: A \in \cA_1 \cup \cA_2} A  \mright|
            \geq
            \mleft| \bigcap_{[n] \backslash A: A \in \cA_1} A  \mright| \mleft(1 - \frac{10}{m} \mright).
            \]
        \end{enumerate}        
    \end{enumerate}
\end{lemma}

\begin{proof}
    The proof is probabilistic. For each $A_i$, we choose every point to be in $A_i$ with probability $1/m$. We will show that all conditions are satisfied with positive probability.
\end{proof}

The first step in proving this upper bound is to prove a bound in terms of $\LD(\cH)$.

\begin{lemma} \label{lem:concept-general-upper-bound-helper}
    For every class $\cH$, time horizon $T$ and $k \in \mathbb{N}$:
    \[
    \Mis(\cH,T,k) = O \mleft(\sqrt{T (k + \LD(\cH)\log T) } \mright).
    \]
\end{lemma}

We use the the following known covering technique of \cite{bendavid2009agnostic} together with our bound for prediction with expert advice to prove Lemma~\ref{lem:concept-general-upper-bound-helper}.

\begin{theorem}[\cite{bendavid2009agnostic}]  \label{thm:covering}
    For every class $\cH$ with $\LD(\cH) < \infty$, and for every $T$ there exists a class $\cE$ of $n \leq T^{\LD(\cH)}$ experts, for which the following holds. For every input sequence $S = (x_1,y_1) \ldots, (x_{T'},y_{T'})$ of length $T' \leq T$ and for every hypothesis $h \in \cH$ there exists an expert $i \in [n]$ so that $\hat{y}_t^i = h(x_t)$ for all $t \in [T']$.
\end{theorem}

In simple words, every hypothesis has an expert in $\cE$ that agrees with it on all predictions throughout the game.

Deducing Lemma~\ref{lem:concept-general-upper-bound-helper} is now straightforward.

\begin{proof}[Proof of Lemma~\ref{lem:concept-general-upper-bound-helper}]. 
    Given $\cH$ and $T$, we run $\ExpAT$ with the class $\cE$ guaranteed by Theorem~\ref{thm:covering}, and the result follows.
\end{proof}

The two lower bounds above share a similar main idea, which is to split a shattered tree of depth $d = b \cdot w$ and thickness $w$ to $b$ many edge-disjoint trees, where each of those trees' branches has depth at least $w$. The adversarial strategy is to start asking on instances from the first tree. If the learner predicts many $0$'s, the adversary stays with this tree and claiming that the correct label was $1$. On the other hand, if the learner predicts many $1$'s, the adversary can move on to the next tree by setting the true labels to be $0$'s, while still forcing some mistakes on the learner. We first formally define the desired partition\footnote{This is not a partition in its standard meaning, beacuse some vertices/edges of the tree will not be a part of any component of the partition. However, those vertices and edges are redundant in our usage, so we still call it a partition.} of a tree.

\begin{definition}[$b$-partition]
    Let $\tree$ be a tree of thickness $w \geq 1$ and depth $d = b \cdot w$ for some $b \geq 1$. Let $v_1, \ldots, v_{d - (w-1)}$ be the vertices of the lace of $\tree$. Let
    \[
    V_b = \{v_i: i = j\cdot w + 1, j\in \{0, \ldots, b-1\}\}.
    \]
    The set $\mathcal{T}_b = \{\tree_v^{(\leq w)}: v \in V_b\}$ is called the $b$\emph{-partition} of $\tree$. 
\end{definition}

Let us now prove Theorem~\ref{thm:concept-lowerbounds}.

\begin{proof}[Proof of Theorem~\ref{thm:concept-lowerbounds}]
We first prove a lower bound of $\Omega\mleft( \sqrt{\dw(\cH) T}\mright)$, and then a lower bound of $\Omega\mleft( \sqrt{k T}\mright)$.

Denote $w= \dw(\cH)-1$. For ease of notation, suppose that $T/w \in \mathbb{N}$. By definition of thickness, there exists a shattered tree $\tree$ of thickness $w$ and depth $T$. Let $\mathcal{T}_{T/w} = \{\tree_1, \ldots, \tree_{T/w}\}$ be its $(T/w)$-partition. Initialize $i = 1$, a zero-prediction counter $c_0=0$, zero-predicted instances counter $I_0$, and set $v$ to be the root of $T_1$. The adversarial strategy is described below.
\begin{enumerate}
    \item For $t= 1,2, \ldots, T$:
    \begin{enumerate}
        \item Let $x_t$ be the instance labeling the node $v$ of $\tree_i$.
        \item If the learner predicted $1$, report the true label $y_t = 0$, set $v$ to be its left child, and $c_0 = 0$.
        \item Otherwise, increase $c_0$ by $1$. If $c_0 > \sqrt{T/w}$: increase $I_0$ by $1$, and set $v$ to be its right child.
        \item If $v$ is a leaf:
        \begin{enumerate}
            \item \label{itm:many-zero-instances-realizable} If $I_0 \geq w/2$, play an arbitrary realizable strategy for the rest of the game.
            \item Else, Increase $i$ by $1$, set $v$ to be the root of $\tree_i$, and set $c_0=I_0 = 0$.
        \end{enumerate}
    \end{enumerate}
\end{enumerate}

We now have two cases. If Item~\ref{itm:many-zero-instances-realizable} is reached at some round $t$, it means that there exists $\tree_i \in \mathcal{T}_{T/w}$ such that for at least $w/2$ of the instances provided to the learner from this tree, the learner predicted only the label $0$, and for at least $\sqrt{T/w}$ many times. We set the true label of all instances that came from trees earlier than $T_i$ to be $0$, and the true label of instances labeling vertices in $\tree_i$ for which the learner predicted $0$ to be $1$. Thus, the learner has made at least $\frac{w}{2} \sqrt{T/w} = \Omega\mleft( \sqrt{\dw(\cH) T}\mright)$ many mistakes. We realize the true labels by the prefix in $\tree$ that follows the adversary's feedback and ends at $x_t$.

In the second case, Item~\ref{itm:many-zero-instances-realizable} is not reached. Therefore, for every tree $T_i$ used by the adversary, the learner eventually predicted $1$ for at least $w/2$ many of its instances. Therefore on the instances of every $T_i$ used by the adversary, at least $w/2$ mistakes are made. Since at most $\sqrt{T/w}$ rounds are invested in every instance, and $w$ instances are given from each tree, at most $\sqrt{Tw}$ rounds are invested in each tree. Since there are $T$ many rounds, at least $\sqrt{T/w}$ trees will be used by the adversary. Therefore the total number of mistakes is at least $\sqrt{T/w}\cdot w/2 = \Omega\mleft( \sqrt{\dw(\cH) T}\mright)$ in this case as well. We realize the true labels by the all-$0$ path.
\end{proof}

\begin{definition}[Thickness]
    For every two integers $d\geq w \geq 1$ we recursively define an unlabeled tree $\tree$ of depth $d$ and thickness $w$. For $d = w$, $\tree$ is a perfect tree of depth $d$. For $d > w$, let $r$ be the root of $\tree$. The right subtree of $r$ is a perfect tree of depth $w-1$. The left subtree of $r$ is a tree of depth $d-1$ and thickness $w$.
    Let $\cH$ be a class. For any $w\geq 1$, Let $D_w:= D_w(\cH)$ be the maximal depth of a tree with thickness $w$ which is shattered by $\cH$. If there is no shattered tree of thickness $w$ and depth $w$, then $D_w = 0$. The thickness of $\cH$, denoted by $\dw(\cH)$, is the minimal $w \geq 1$ so that $D_w < \infty$. If no such $w$ exists, then $\dw(\cH) = \infty$.
\end{definition}

In this section we prove Theorem~\ref{thm:concept-upper-bound}. Similarly to our proofs for prediction with expert advice, from didactic reasons we will first prove it under realizability assumption, and only afterwards without it.

\subsubsection{Learning hypothesis classes upper bound: realizable case}
The idea is to use a variation of the covering technique from \cite{bendavid2009agnostic}. Formally, we say that an expert (or a learning algorithm) indexed by $i$ covers a hypothesis $h$ with respect to a sequence of instances $S = x_1, \ldots, x_T \in \cX$ if $\hat{y}_t^{(i)} = h(x_t)$ for all $t \in [T]$. When the sequence $S$ is fixed, we say that the expert indexed by $i$ covers $h$. We say that a class $\cH$ is covered by a set of experts with respect to $S$ if every $h \in \cH$ has a covering expert with respect to $S$ within the set. The work of \cite{bendavid2009agnostic} proves the following.

\begin{theorem}[\cite{bendavid2009agnostic}]  \label{thm:covering}
    Let $\cH$ be a class with $\LD(\cH) < \infty$, and let $T \in \mathbb{N}$. There exists a class of $n \leq T^{\LD(\cH)}$ experts that covers $\cH$ with respect to any sequence of instances of length at most $T$.
\end{theorem}

Our algorithm $\RealizableConceptAT$ is described in Figure~\ref{fig:RealizableConceptAT}. Let us introduce some notation used in the analysis of $\RealizableConceptAT$, and also afterwards in the agnostic case analysis.
We denote the execution of $\RealizableExpAT$ with a set of experts $\cE$, and with near-optimal learning parameters as chosen in Lemma~\ref{lem:experts-realizable-main} by $\RealizableExpAT(\cE)$. We denote the prediction of $\RealizableExpAT(\cE)$ in round $t$ by $\RealizableExpAT(\cE)_t$.

Let us briefly describe how $\RealizableConceptAT$ operates. $\RealizableConceptAT$ maintains a version space $V_t$ of arguably consistent hypotheses, and a set of experts $\cE$, which in the beginning consists of only \emph{null} experts. A null expert is defined as an expert who always predicts $0$. In addition, $\RealizableConceptAT$ constructs, in parallel to its execution, a shattered tree of thickness $\dw(\cH)$. Towards this end, $\RealizableConceptAT$ maintains a value $d_t$ that represents the depth of the tree left to construct. In every round $t$, $\RealizableConceptAT$ first receives an instance $x_t$. If $d_t  = 0$, it means that the constructed tree is already of depth $D_{\dw(\cH)} - \dw(\cH)$, and therefore the version space must have Littlestone dimension at most $\dw(\cH)$, and thus can be covered by at most $T^{\dw(\cH)}$ many experts, as done in Item~\ref{itm:d_t=0-experts}.  If $\LD \mleft(V_t^{(x_t \to 1)} \mright) < \dw(\cH) - 1$, then $V_t^{(x_t \to 1)}$ can be covered by at most $T^{\dw(\cH)-2}$ many experts, as done in Item~\ref{itm:d_t>0-experts}. Otherwise, $d_t >0$ and $\LD \mleft(V_t^{(x_t \to 1)} \mright) \geq \dw(\cH) - 1$. Therefore we can ``hang" another shattered tree of depth at least $\dw(\cH)-1$ from the right edge of the vertex labeled by $x_t$. In this case, $\RealizableConceptAT$ decreases $d_t$ by $1$.

\begin{figure}
    \centering
    \begin{tcolorbox}
    \begin{center}
        \textsc{$\RealizableConceptAT$}
    \end{center}
    \textbf{Input:} Class $\cH$.
    \\
    \textbf{Initialize:} Let $V_1 = \cH$, $d_1 = D_{\dw(\cH)} - \dw(\cH)$. Let $\cE$ be a set of $2 T^{\dw(\cH)}$ null experts.\\
    \\
    \textbf{for $t=1,\ldots, T$:}
    \begin{enumerate}
        \item Receive instance $x_t$.
        \item If $d_t  = 0$ or $\LD(V_t^{(x_t \to 1)}) < \dw(\cH) - 1$: \label{itm:RealizableConceptAT-if}
        \begin{enumerate}
            \item If $d_t = 0$ and $d_{t-1} > 0$:
            \begin{enumerate}
                \item \label{itm:d_t=0-experts} Take a set of null experts of size at most $T^{\dw(\cH)}$ from $\cE$, and let them cover $V_t$ from now on.
            \end{enumerate}
            \item Else, if $d_t > 0$:
            \begin{enumerate}
                \item \label{itm:d_t>0-experts} Take a set of null experts of size at most $T^{\dw(\cH)-2}$ from $\cE$, and let them cover $V_t^{(x_t \to 1)}$ from now on.
            \end{enumerate}
            \item Forward $x_t$ to $\RealizableExpAT(\cE)$
            \item Let $\hat{y}_t = \RealizableExpAT(\cE)_t$. 
            \item If $\hat{y}_t = 1$: Receive $y_t$ and forward it to $\RealizableExpAT(\cE)$.
            \item Set $V_{t+1} = V_t$, $d_{t+1} = d_t$.
        \end{enumerate}
        \item Else: \label{itm:RealizableConceptAT-else}
        \begin{enumerate}
            \item Predict $\hat{y}_t = 1$ and receive $y_t$.
            \item If $y_t = 0$:
            \begin{enumerate}
                \item Update $V_{t+1} = V_{t}^{(x_t \to 0)}$ and $d_{t+1} = d_t - 1$.
            \end{enumerate}
            \item Else:
            \begin{enumerate}
                \item Forget the round (set $V_{t+1} = V_{t}$ and $d_{t+1} = d_t$).
            \end{enumerate}
        \end{enumerate}
    \end{enumerate}
    \end{tcolorbox}
    \caption{A deterministic apple tasting learner for concept classes under realizability assumption.} 
    \label{fig:RealizableConceptAT}
\end{figure}
We will now prove the upper bound given by $\RealizableConceptAT$.

\begin{lemma}\label{lem:hard-concept-realizable-upper-bound}
    Let $\cH$ be a class with $1 < \dw(\cH) < \infty$, and let $T \geq 2$. Then for every realizable input sequence $S$:
    \[
    \M^\star(\RealizableConceptAT, S) \leq  O \mleft( \sqrt{\dw(\cH) T \log T} \mright) + D_{\dw(\cH)} - \dw(\cH).
    \]
\end{lemma}

Fix an execution of $\RealizableConceptAT$ on an input sequence $S = (x_1, y_1), \ldots, (x_T,y_T)$. We decompose the set of rounds $[T]$ to $T_{\cE}$ and $T_{D}$, where $T_{\cE}$ are the rounds in which the ``if" statement in Step~\ref{itm:RealizableConceptAT-if} of $\RealizableConceptAT$ holds.
The key ingredient in the proof of Lemma~\ref{lem:hard-concept-realizable-upper-bound} is to show that the set of experts $\cE$ defined in $\RealizableConceptAT$ covers every $h \in V_T$ with respect to the sequence $\{x_t\}_{t \in T_{\cE}}$.

\begin{lemma}\label{lem:concept-covering}
        The class $\cE$ defined in $\RealizableConceptAT$ covers $V_T$ with respect to the sequence $X = \{x_t\}_{t \in T_{\cE}}$.  
\end{lemma}

\begin{proof}
    We first need to show that the set $\cE$ is large enough such that there are indeed enough experts for all executions of Item~\ref{itm:d_t=0-experts} and Item~\ref{itm:d_t>0-experts}. It is also convenient to have another extra expert who will stay null for the entire execution of the algorithm. Item~\ref{itm:d_t=0-experts} is reached at most once, and Item~\ref{itm:d_t>0-experts} is reached for at most $T$ many rounds. Therefore, the total number of experts needed is at most $T^{\dw(\cH)} + T \cdot T^{\dw(\cH) -2} < 2T^{\dw(\cH)}$, as required.

    Assume without loss of generality that $\RealizableConceptAT$ is mistaken in all rounds of $T_D$ (we can ignore the rounds of $T_D$ in which $\RealizableConceptAT$ is correct, as it does not change its state after such rounds).
    Fix a hypothesis $h \in V_T$. Since $h \in V_T$, it holds that $h(x_t) = 0$ for all $t \in T_{D}$, since $y_t=0$ for all such rounds by assumption. If $h(x) = 0$ for all $x \in X$ then $h$ is covered by the extra null expert. So, assume that $t \in T_{\cE}$ is the first round in which $h(x_t) = 1$. Let $t_0$ be the first round in which $d_t = 0$, or define $t_0 = \infty$ if no such round exists. We now have two cases:
    \begin{enumerate}
        \item If $t > t_0$, then we argue that $h$ is covered by the set of experts defined in Item~\ref{itm:d_t=0-experts} when it is reached at round $t_0$, denoted $\cE_{0}$. To see that, we first argue that $\LD(V_{t_0}) \leq \dw(\cH)$. Indeed, if it is not the case, then by the algorithm's definition, a shattered tree of depth $D_{\dw(\cH)}+1$ and thickness $\dw(\cH)$ exists, which is a contradiction. Since $h \in V_{T} \subset V_{t_0}$, the set $\cE_0$ covers $h$ by Theorem~\ref{thm:covering} and the fact the experts of $\cE_0$ are null in rounds $1, \ldots, t_0-1$, in which $h$ predicts $0$.
        \item In the case $t < t_0$, by the algorithm's definition we have $\LD \mleft( V_t^{(x_t \to 1)} \mright) < \dw(\cH) - 1$. Let $\cE_t$ be the set of experts defined when Item~\ref{itm:d_t>0-experts} is reached at round $t$. By Theorem~\ref{thm:covering} and the fact that the experts of $\cE_t$ are null in rounds $1, \ldots, t-1$, it holds that $\cE_t$ covers all hypotheses in  $V_t^{(x_t \to 1)}$ that predicted $0$ in those rounds, including $h$.
    \end{enumerate}

This finishes the proof.
\end{proof}

We may now prove the upper bound.

\begin{proof}[Proof of Lemma~\ref{lem:hard-concept-realizable-upper-bound}]
    We Classify the mistakes made by $\RealizableConceptAT$ into two types: the first type is those made in rounds where the condition of Item~\ref{itm:RealizableConceptAT-if} holds, which are the rounds of $T_{\cE}$. The second type of mistakes is those made in rounds where it does not hold, which are the rounds of $T_{D}$.

    By the algorithm's definition, there are at most $D_{\dw(\cH)} - \dw(\cH)$ mistakes of the second type. Let's handle mistakes of the first type. Let $S_{\cE} = \{(x_t,y_t)\}_{t \in T_{\cE}}$. Since $S_{\cE}$ is realizable by $\cH$, and since we remove from the version space only hypotheses that are not consistent with the adversary's feedback, there exists $h^\star \in V_T$ so that $h^\star(x_t) = y_t$ for all $t \in T_{\cE}$. $\RealizableConceptAT$ simulates an execution of $\RealizableExpAT(\cE)$ on the input sequence $S_{\cE}$. By Lemma~\ref{lem:concept-covering}, $\cE$ covers $V_T$ with respect to $X = \{x_t\}_{t \in T_{\cE}}$ and hence it covers $h^\star$ with respect to the same sequence. Therefore, since $|\cE| \leq 2T^{(\dw(\cH))}$ and since $S_{\cE}$ is realized by $\cE$, Theorem~\ref{thm:experts-realizable-upper-bound} implies that $\RealizableConceptAT$ makes $O \mleft( \sqrt{ \dw(\cH) T \log T} \mright)$ mistakes of the first type, which finishes the proof.
\end{proof}

The algorithm and its analysis are similar to the realizable case, with two key differences:
\begin{enumerate}
    \item We use $\ExpAT$ instead of $\RealizableExpAT$ in the rounds of $T_{\cE}$.
    \item We use a notion of $k$-shattered trees instead of shattered trees for the rounds of $T_D$.
\end{enumerate}

We already presented and analyzed $\ExpAT$ in Section~\ref{sec:experts-agnostic}, so it remains to define $k$-shattered trees in our context. This notion was defined and proved useful in \cite{filmus2023optimal} for full-information online learning problems.

\begin{definition}[Lace]
    Let $\tree$ be a tree of thickness $w$ and depth $d \geq w$. We define the \emph{lace} of $\tree$ to be the sequence of instances constructed as follows. Initialize $L = \emptyset$ to be an empty sequence, $i=1$, and $r$ to be the root of $\tree$.
    \begin{enumerate}
        \item Add the instance labeling $r$ to $L$ as $x_i$.
        \item If $\tree_r$ is a perfect tree of depth $w$, finish the construction.
        \item Otherwise, update $r$ to be its left child, increase $i$ by $1$, and repeat.
    \end{enumerate}  
\end{definition}

\begin{definition}[$k$-shattered trees]
    For $k \in  \mathbb{N}$, the tree $\tree$ is $k$-shattered by a class $\cH$ if for every branch in $\tree$  there exists a hypothesis that agrees with the branch everywhere, except for at most $k$ many times on the lace of $\tree$.
\end{definition}

We may now define the $k$-thickness of a class $\cH$

\begin{definition}[$k$-thickness]
    Let $\cH$ be a class. For every natural $w\geq 1, k \geq 0$, Let $D^{(k)}_w := D^{(k)}_w(\cH)$ be the maximal depth of a tree with thickness $w$ which is $k$-shattered by $\cH$. The $k$-thickness of $\cH$, denoted by $\dw_k(\cH)$, is the minimal $w \geq 1$ so that $D^{(k)}_w < \infty$. If no such $w$ exists, then $\dw_k(\cH) = \infty$.
\end{definition}

We will now describe the agnostic learner. We let $B\colon \cH \to \mathbb{N}$ be a \emph{budget} function. The budget of every hypothesis $h$ is $B(h)$, which indicates the number of allowed inconsistencies of $h$ with the lace of the tree constructed by $\ConceptAT$. Initially $B(h) = k$ for all $h \in \cH$. Every time a hypothesis is mistaken in a round from $T_D$, its budget decreases by $1$. Hypotheses with budget $-1$ are removed from the version space.

\begin{figure}
    \centering
    \begin{tcolorbox}
    \begin{center}
        \textsc{$\ConceptAT$}
    \end{center}
    \textbf{Input:} Class $\cH$, realizability parameter $k \in \mathbb{N}$.
    \\
    \textbf{Initialize:} Let $V_1 = \cH$, $d_1 = D_{\dw(\cH)}^{(k)} - \dw(\cH)$. Let $\cE$ be a set of $2 T^{\dw(\cH)}$ null experts. Pass the realizability parameter $k$ to $\ExpAT$.\\
    \\
    \textbf{for $t=1,\ldots, T$:}
    \begin{enumerate}
        \item Receive instance $x_t$.
        \item If $d_t  = 0$ or $\LD(V_t^{(x_t \to 1)}) < \dw(\cH) - 1$: \label{itm:ConceptAT-if}
        \begin{enumerate}
            \item If $d_t = 0$ and $d_{t-1} > 0$:
            \begin{enumerate}
                \item \label{itm:d_t=0-concept-experts} Take a set of null experts of size at most $T^{\dw(\cH)}$ from $\cE$, and let them cover $V_t$ from now on.
            \end{enumerate}
            \item Else, if $d_t > 0$:
            \begin{enumerate}
                \item \label{itm:d_t>0-concept-experts} Take a set of null experts of size at most $T^{\dw(\cH)-2}$ from $\cE$, and let them cover $V_t^{(x_t \to 1)}$ from now on.
            \end{enumerate}
            \item Set $V_{t+1} = V_t$, $d_{t+1} = d_t$.
            \item Forward $x_t$ to $\ExpAT(\cE)$
            \item Let $\hat{y}_t = \ExpAT(\cE)_t$. 
            \item If $\hat{y}_t = 1$: Receive $y_t$ and forward it to $\ExpAT(\cE)$.
        \end{enumerate}
        \item Else: \label{itm:ConceptAT-else}
        \begin{enumerate}
            \item Predict $\hat{y}_t = 1$ and receive $y_t$.
            \item If $y_t = 0$:
            \begin{enumerate}
                \item Decrease $B(h)$ by $1$ for all $h \in V_t$ such that $h(x_t) = 1$.
                \item Update $V_{t+1} = \{h \in V_{t}: B(h) \geq 0\}$ and $d_{t+1} = d_t - 1$.
            \end{enumerate}
            \item Else:
            \begin{enumerate}
                \item Forget the round (set $V_{t+1} = V_{t}$ and $d_{t+1} = d_t$).
            \end{enumerate}
        \end{enumerate}
    \end{enumerate}
    \end{tcolorbox}
    \caption{A deterministic apple tasting learner for concept classes.} 
    \label{fig:ConceptAT}
\end{figure}

We now prove an agnostic version of Lemma~\ref{lem:hard-concept-realizable-upper-bound}.

\begin{lemma}\label{lem:hard-concept-agnostic-upper-bound}
    Let $\cH$ be a class with $1 < \dw(\cH) < \infty$, $T \geq 2$, $k \in \mathbb{N}$. Then for every $k$-realizable input sequence $S$:
    \[
    \M^\star(\ConceptAT, S) \leq  O \mleft( \sqrt{T(k + \dw(\cH) \log T)} \mright) + D^{(k)}_{\dw(\cH)} - \dw(\cH).
    \]
\end{lemma}

\begin{proof}
    We Classify the mistakes made by $\ConceptAT$ into two types: the first type is those made in rounds where the condition of Item~\ref{itm:ConceptAT-if} holds, which are the rounds of $T_{\cE}$. The second type of mistakes is those made in rounds where it does not hold, which are the rounds of $T_{D}$.

    By the algorithm's definition, there are at most $D_{\dw(\cH)}^{(k)} - \dw(\cH)$ mistakes of the second type. Let's handle mistakes of the first type. Let $S_{\cE} = \{(x_t,y_t)\}_{t \in T_{\cE}}$. Since $S_{\cE}$ is $k$-realizable by $\cH$, there exists $h^\star \in V_T$ so that $h^\star(x_t) = y_t$ for all $t \in T_{\cE}$ except for at most $k$. $\ConceptAT$ simulates an execution of $\ExpAT(\cE)$ on the input sequence $S_{\cE}$. By Lemma~\ref{lem:concept-covering}, $\cE$ covers $V_T$ with respect to $X = \{x_t\}_{t \in T_{\cE}}$ and hence it covers $h^\star$ with respect to the same sequence. Therefore, since $|\cE| \leq 2T^{(\dw(\cH))}$ and since $S_{\cE}$ is $k$-realized by $\cE$, Theorem~\ref{thm:experts-agnostic-upper-bound} implies that $\ConceptAT$ makes $O \mleft( \sqrt{T (k + \dw(\cH) \log T)} \mright)$ mistakes of the first type, which finishes the proof.
\end{proof}

The final step left for the proof of Theorem~\ref{thm:concept-upper-bound} is the following lemma showing that for all $w \geq 1$ such that $D_w < \infty$, it holds that $D_{w}^{(k)}$ is linearly controlled by $k$.

\begin{lemma} \label{lem:D^(k)}
    Let $\cH$ be a class, let $w \geq 1$ so that $0 < D_w < \infty$, and let $k \in \mathbb{N}$. Then:
    \[
    D_{w}^{(k)} \leq  4(k+1) 2^{w} D_{w} + w.
    \]
\end{lemma}

\begin{proof}
    Let $\tree$ be a tree which is $k$-shattered by $\cH$, and has depth $D_{w}^{(k)}$ and thickness $w$. We will show that if $D_{w}^{(k)} > 4 (k+1) 2^{w} D_{w} + w$, then we can construct a shattered tree of thickness $w$ and  depth larger than $D_{w}$, which leads to a contradiction. Let us describe the construction of this tree. Let $r_1$ be the root of the subtree of $\tree$ which is a complete tree of depth $w$. For every leaf $\ell$ of $\tree_{r_1}$: Let $h_\ell$ be a hypothesis $k$-realizing the path from the root of $\tree$ to $\ell$. For every instance $x$ on the lace of $\tree$ such that $h(x) = 1$,  we conduct the following process. Remove from $\tree$ the node $v$ labeled by $x$, it's right outgoing edge and subtree, and it's left outgoing edge $e$ that lies in the lace. Let $v'$ be the node beneath $v$ in the lace, and let $e'$ be the ingoing edge of $v$, if exists. We naturally reconnect the tree by connecting $e'$ as an ingoing edge to $v'$. If $e'$ does not exist, we stop the process. Note that there are at most $2^{w}$ leaves in $T_{r_1}$, and every hypothesis $k$-realizing a branch in $\tree$ has at most $k$ disagreements on the lace, and therefore we overall decreased the length of the lace by $2^{w} k$. Now, if exists, let $r_2$ be the node above $r_1$ in the lace (after all manipulations made), and let $r'_2$ be the right child of $r_2$. We now perform exactly the same process done before for the leaves of $T_{r_1}$, to the leaves of $T_{r'_2}$, but we only take into account disagreements in the lace that are above $r_2$. We repeat this process also for $r_3, r_4 \ldots, r_{D}$, as many times as possible, until we get to $r_D$, which is the root of the tree (after manipulations). We argue that $D \geq 2 D_{w}$. Indeed, every treatment of $r_i$ removes from the lace at most $2^{w} k$  edges. By assumption, the length of the lace is at least $4 (k+1) 2^{w} D_{w}$, and therefore after removing $2 k 2^{w} D_{w} $ edges from the lace, there are still nodes above $r_{2 D_{w}}$ to handle.

    When  the process is finished, we have a shattered tree of depth $D \geq 2 D_{w}$ and thickness  $w$, which is a contradiction.
\end{proof}

We can now deduce that the thickness of the class is not affected by $k$.
\begin{corollary}\label{cor:equiv-thickness}
    For every class $\cH$ and $k_1,k_2 \in \mathbb{N}$:
    \[
    \dw_{k_1}(\cH) = \dw_{k_2}(\cH).
    \]
\end{corollary}

The proof is immediate from Lemma~\ref{lem:D^(k)}. Corollary~\ref{cor:equiv-thickness} implies that there is no reason to consider $\dw_{k}(\cH)$, we can always simply consider $\dw(\cH)$.
We may now prove Theorem~\ref{thm:concept-upper-bound}.

\begin{proof}[Proof of Theorem~\ref{thm:concept-upper-bound}]
   The theorem immediately follows from Lemma~\ref{lem:hard-concept-agnostic-upper-bound}, Corollary~\ref{cor:equiv-thickness}, and Lemma~\ref{lem:D^(k)}.
\end{proof}

Before we can prove the trichotomy, we need to show that the thickness and \emph{effective width} of a class are in some sense related to each other. Towards this end, we first informally recall the definition of the effective width of a class $\cH$ from \cite{raman2024apple}, denoted by $\w(\cH)$. A formal definition may be found in their work. Let $w,d$ be natural such that $w \leq d$. To construct a tree of width $w$ and depth $d$, take a complete shattered tree of depth $d$ and traverse the entire tree in some order, starting from the root. Once you reach to a right edge which is the $(w)$'th right edge in the branch, trim everything below the vertex at the tip of this edge. For a class $\cH$, let $A_w := A_w(\cH)$ be the largest depth of a tree of width $w$ which is shattered by $\cH$. Let $\w(\cH)$ be the smallest $w$ so that $A_w < \infty$.

The following proposition shows that the thickness and effective width are related in a way that is suffiicient for our purposes.

\begin{proposition}\label{prop:width-thickness-equiv}
    Let $I \in \{ 1, \mathbb{N} \backslash \{1\}, \infty \}$ and let $\cH$ be a class. Then:
    \[
    \w(\cH) \in I \iff \dw(\cH) \in I
    \]
\end{proposition}

\begin{proof}
    First note that a tree of width $1$ and depth $d$, and a tree of  thickness $1$ and depth $d$ has the exact same topology. Thus $\w(\cH) = 1 \iff \dw(\cH) = 1$. 
    So, suppose now that $\w(\cH) \in \mathbb{N} \backslash \{1\}$. Since $\w(\cH) \neq 1$ it holds that $\dw(\cH) \neq 1$. However, if $\dw(\cH) = \infty$ then by definition it also holds that $\LD(\cH) = \infty$, which contradicts $\w(\cH) < \infty$ by the definition of effective width.
\end{proof}

Denote the Littlestone dimension and the deterministic effective width of a class $\cH$ by $\LD(\cH)$ and $\dw(\cH)$, respectively.

When learning hypothesis classes, it makes sense to fix the class and ask how the mistake bound behaves as $T \to \infty$. The following theorem presents a quantitative nearly tight characterization of possible mistake bounds.

\begin{theorem}[Quantitative bounds]
    For every class $\cH$:
    \begin{enumerate}
        \item If $\dw(\cH) = 1$, then there exists a constant $c(\cH)$ such that for all $T,k$:
        \[
            \M^\star(\cH,T, k) \leq c(\cH) k.
        \]
        \item If $1 < \dw(\cH) < \infty$, then there exists $T_0(\cH)$ so that for every $T \geq T_0(\cH)$ and for every $k$:
        \[
            \M^\star(\cH, T, k) = \tilde{\Theta} \mleft(\sqrt{(k + \dw(\cH))T} \mright).
        \]
    \end{enumerate}
\end{theorem}

In the case that $1 < \dw(\cH) < \infty$, the exact result hidden by the $\tilde{\Theta}$ notation in the theorem above is:
\begin{equation} \label{eq:concept-hard-landscape}
    \Omega \mleft( \sqrt{(k + \dw(\cH))T}  \mright) \leq \M^\star(\cH, T, k) \leq O \mleft( \sqrt{(k + \dw(\cH) \log T) T}  \mright).
\end{equation}

One may notice that there is a gap in the summand depending on $\dw(\cH)$ that appears already in the realizable case. However, as  stated below, this gap is inevitable, and both inequalities in fact can be attained.

\begin{theorem}[Easy and hard classes]
    For every $d \geq 2$:
    \begin{enumerate}
        \item There exists a class $\cH_{\operatorname{easy}}$ with $\dw(\cH_{\operatorname{easy}}) = d$ and $T_0(\cH_{\operatorname{easy}})$, such that for every $T \geq T_0(\cH_{\operatorname{easy}})$:
        \[
        \Mis(\cH_{\operatorname{easy}},T) = \Theta \mleft(\sqrt{\dw(\cH_{\operatorname{easy}}) T} \mright).
        \]
        \item There exists a class $\cH_{\operatorname{hard}}$ with $\dw(\cH_{\operatorname{hard}}) = d$ and $T_0(\cH_{\operatorname{hard}})$, such that for every $T \geq T_0(\cH_{\operatorname{hard}})$:
        \[
        \Mis(\cH_{\operatorname{hard}},T) = \Theta \mleft(\sqrt{\dw(\cH_{\operatorname{hard}}) T \log T} \mright).
        \]
    \end{enumerate}
\end{theorem}